\theoremstyle{plain}
\newtheorem{theorem}{Theorem}[section]
\newtheorem{lemma}[theorem]{Lemma}
\theoremstyle{remark}
\newtheorem{remark}[theorem]{Remark}
\theoremstyle{definition}
\newtheorem{definition}[theorem]{Definition}
\newcommand\shortsection[1]{\vspace{6pt}{\noindent\bf #1.}}
\definecolor{ForestGreen}{cmyk}{0.864, 0.0, 0.429, 0.396}
\newcommand{\ifcomments}{\iftrue}
\DeclareMathOperator*{\argmax}{argmax}
\DeclareMathOperator*{\argmin}{argmin}
\newcommand{\cB}{\mathcal{B}}
\newcommand{\cS}{\mathcal{S}}
\newcommand{\cQ}{\mathcal{Q}}
\newcommand{\cA}{\mathcal{A}}
\newcommand{\cX}{\mathcal{X}}
\newcommand{\cY}{\mathcal{Y}}
\newcommand{\cH}{\mathcal{H}}
\newcommand{\cC}{\mathcal{C}}
\newcommand{\cN}{\mathcal{N}}
\newcommand{\NN}{\mathbb N}
\newcommand{\RR}{\mathbb R}
\newcommand{\PP}{\mathbb P}
\newcommand{\EE}{\mathbb E}
\newcommand{\bx}{\bm{x}}
\newcommand{\bw}{\bm{w}}
\newcommand{\Risk}{\mathrm{Risk}}
\newcommand{\sgn}{\mathrm{sgn}}
\newcommand{\supp}{\mathrm{supp}}
\newcommand{\opt}{\mathrm{opt}}
\newcommand{\mL}{\mathrm{L}}
\newcommand{\eps}{\epsilon}
\newcommand{\Sep}{\mathrm{Sep}}
\newcommand{\SD}{\mathrm{SD}}
\newcommand{\Var}{\mathrm{Var}}
\title{What Distributions are Robust to Indiscriminate Poisoning Attacks for Linear Learners?}
\author{%
  Fnu Suya$^1$ \quad Xiao Zhang$^{2}$ \quad Yuan Tian$^{3}$ \quad David Evans$^1$\\ 
  $^1$University of Virginia \quad $^2$CISPA Helmholtz Center for Information Security\\ 
  $^3$University California Los Angeles\\
  \texttt{suya@virginia.edu}, \texttt{xiao.zhang@cispa.de}, \texttt{yuant@ucla.edu}, \texttt{evans@virginia.edu} \\
}
\begin{document}

\maketitle

\begin{abstract}
We study indiscriminate poisoning for linear learners where an adversary injects a few  crafted examples into the training data with the goal of forcing the induced model to incur higher test error. Inspired by the observation that linear learners on some datasets are able to resist the best known attacks even without any defenses, we further investigate whether datasets can be inherently robust to indiscriminate poisoning attacks for linear learners. For theoretical Gaussian distributions, we rigorously characterize the behavior of an optimal poisoning attack, defined as the poisoning strategy that attains the maximum risk of the induced model at a given poisoning budget. Our results prove that linear learners can indeed be robust to indiscriminate poisoning if the class-wise data distributions are well-separated with low variance and the size of the constraint set containing all permissible poisoning points is also small. These findings largely explain the drastic variation in empirical attack performance of the state-of-the-art poisoning attacks on linear learners across benchmark datasets, making an important initial step towards understanding the underlying reasons some learning tasks are vulnerable to data poisoning attacks.

\end{abstract}

\section{Introduction}
\label{sec:intro}
Machine learning models, especially current large-scale models, require large amounts of labeled training data, which are often collected from untrusted third parties~\citep{carlini2023poisoning}. Training models on these potentially malicious data poses security risks. A typical application is in spam filtering, where the spam detector is trained using data (i.e., emails) that are generated by users with labels provided often implicitly by user actions. In this setting, spammers can generate spam messages that inject
benign words likely to occur in spam emails such that models
trained on these spam messages will incur significant drops in
filtering accuracy as benign and malicious messages become indistinguishable~\citep{nelson2008exploiting,huang2011adversarial}. These kinds of attacks are known as \emph{poisoning attacks}. In a poisoning attack, the attacker injects a relatively small number of crafted examples into the original training set such that the resulting trained model (known as the \emph{poisoned model}) performs in a way that satisfies certain attacker goals. 

One commonly studied poisoning attacks in the literature are \emph{indiscriminate poisoning attacks}~\cite{biggio2012poisoning,xiao2012adversarial,mei2015using,steinhardt2017certified,billah2021bi,suya2021model,koh2022stronger, lu2022indiscriminate, demontis2019adversarial}, in which the attackers aim to let induced models incur larger test errors compared to the model trained on a clean dataset. Other poisoning goals, including targeted~\citep{shafahi2018poison,zhu2019transferable,koh2017understanding,huang2020metapoison,geiping2020witches} and subpopulation~\cite{jagielski2021subpopulation,suya2021model} attacks, are also worth studying and may correspond to more realistic attack goals. We focus on indiscriminate poisoning attacks as these attacks interfere with the fundamental statistical properties of the learning algorithm~\citep{steinhardt2017certified,koh2022stronger}, but include a summary of prior work on understanding limits of poisoning attacks in other settings in the related work section. 

{Indiscriminate poisoning attack methods have been developed that achieve empirically strong poisoning attacks in many settings~\cite{steinhardt2017certified,suya2021model,koh2022stronger,lu2022indiscriminate}, but these works do not explain why the proposed attacks are sometimes ineffective. In addition, the evaluations of these attacks can be deficient in some aspects~\citep{biggio2011support,biggio2012poisoning,steinhardt2017certified,billah2021bi,koh2022stronger,lu2022indiscriminate} (see \Cref{sec:sota_attack_eval}) and hence, may not be able to provide an accurate picture on the current progress of indiscriminate poisoning attacks on linear models. The goal of our work is to understand the properties of the learning tasks that prevent effective attacks under linear models. An attack is considered effective if the increased error (or risk in the distributional sense) from poisoning exceeds the injected poisoning ratio~\citep{lu2022indiscriminate,koh2022stronger}.}

In this paper, we consider indiscriminate data poisoning attacks for linear models, the commonly studied victim models in the literature~\cite{biggio2011support,biggio2012poisoning,koh2017understanding,steinhardt2017certified,demontis2019adversarial}. Attacks on linear models are also studied very recently~\citep{suya2021model,koh2022stronger,billah2021bi,cina2021hammer} and we limit our scope to linear models because attacks on the simplest linear models are still not well understood, despite extensive prior empirical work in this setting. 
Linear models continue to garner significant interest due to their simplicity and high interpretability in explaining predictions~\citep{liu2022explainable,ribeiro2016should}. Linear models also achieve competitive 
performance in many security-critical applications for which poisoning is relevant, including training with differential privacy~\citep{tramer2020differentially}, recommendation systems~\citep{ferrari2019we} and malware detection~\citep{chen2023continuous,chen2021learning,salah2020lightweight,demontis2016security,vsrndic2013detection,arp2014drebin}. From a practical perspective, linear models continue to be relevant---for example, Amazon SageMaker~\citep{sagemaker}, a scalable framework to train ML models intended for developers and business analysts, provides linear models for tabular data, and trains linear models (on top of pretrained feature extractors) for images. 

\shortsection{Contributions}
{We show that state-of-the-art poisoning strategies for linear models all have similar attack effectiveness of each dataset, whereas their performance varies significantly across different datasets (\Cref{sec:sota_attack_eval}). All tested attacks are very effective on benchmark datasets such as Dogfish and Enron, while none of them are effective on other datasets, such as selected MNIST digit pairs (e.g., 6--9) and Adult, even when the victim does not employ any defenses (\autoref{fig:benchmark_attack_results}). 
To understand whether this observation means such datasets are inherently robust to poisoning attacks or just that state-of-the-art attacks are suboptimal, we first introduce general definitions of optimal poisoning attacks for both finite-sample and distributional settings (Definitions \ref{def:f-s optimal poisoning} and \ref{def:optimal poisoning attack}).
Then, we prove that under certain regularity conditions, the performance achieved by an optimal poisoning adversary with finite-samples converges asymptotically to the actual optimum with respect to the underlying distribution  (Theorem \ref{thm:connect optimal poisoning}), and the best poisoning performance is always achieved at the maximum allowable poisoning ratio under mild conditions (Theorem \ref{thm:optimal ratio}). 
}

Building upon these definitions, we rigorously characterize the behavior of optimal poisoning attacks under a theoretical Gaussian mixture model (Theorem \ref{thm:1-D gaussian optimality}), and derive upper bounds on their effectiveness for general data distributions (Theorem \ref{thm: general upper bound on optimal risk}).
In particular, we discover that a larger projected constraint size (Definition~\ref{def:projected constraint size}) is associated with a higher inherent vulnerability, whereas projected data distributions with a larger separability and smaller standard deviation (Definition~\ref{def:proj sep and var}) are fundamentally less vulnerable to poisoning attacks (\Cref{sec:general distributions}). Empirically, we find the discovered learning task properties and the gained theoretical insights largely explain the drastic difference in attack performance observed for state-of-the-art indiscriminate poisoning attacks on linear models across benchmark datasets (\Cref{sec:benchmark_exps}). Finally, we discuss the potential implications of our work by showing how one might improve robustness to poisoning via better feature transformations and defenses (e.g., data sanitization defenses) to limit the impact of poisoning points (\Cref{sec:implications}). 

\shortsection{Related Work} 
Several prior works developed indiscriminate poisoning attacks by \emph{injecting} a small fraction of poisoning points. One line of research adopts iterative gradient-based methods to directly maximize the surrogate loss chosen by the victim~\citep{biggio2011support,mei2015using,mei2015security,koh2017understanding}, leveraging the idea of influence functions~\citep{rousseeuw2011robust}. 
Another approach bases attacks on convex optimization methods~\citep{steinhardt2017certified,suya2021model,koh2022stronger}, which provide a more efficient way to generate poisoned data, often with an additional input of a target model. Most of these works focus on studying linear models, but recently there has been some progress on designing more effective attacks against neural networks with insights learned from attacks evaluated on linear models~\citep{lu2022indiscriminate,lu2023exploring}. All the aforementioned works focus on developing different indiscriminate poisoning algorithms and some also characterize the hardness of poisoning in the model-targeted setting \citep{suya2021model,lu2023exploring}, but did not explain why certain datasets are seemingly harder to poison than others. Our work leverages these attacks to empirically estimate the inherent vulnerabilities of benchmark datasets to poisoning, but focuses on providing explanations for the disparate poisoning vulnerability across the datasets. Besides injection, some other works consider different poisoning settings from ours by modifying up to the whole training data, also known as unlearnable examples~\citep{huang2021unlearnable,yu2021indiscriminate,fowl2021adversarial} and some of them, similar to us, first rigorously analyze poisoning on theoretical distributions and then generalize the insights to general distributions \citep{tao2021better,tao2022can}. 

Although much research focuses on indiscriminate poisoning, many realistic attack goals are better captured as 
\emph{targeted} attacks~\citep{shafahi2018poison,zhu2019transferable,koh2017understanding,huang2020metapoison,geiping2020witches}, where the adversary's goal is to induce a model that misclassifies a particular known instance, or \emph{subpopulation attacks}~\cite{jagielski2021subpopulation,suya2021model}, where the adversary's goal is to produce misclassifications for a defined subset of the distribution. A recent work that studies the inherent vulnerabilities of datasets to targeted data poisoning attacks proposed the Lethal Dose Conjecture (LDC)~\cite{wang2022lethal}: given a dataset of size $N$, the tolerable amount of poisoning points from any targeted poisoning attack generated through insertion, deletion or modifications is $\Theta(N/n)$, where $n$ is the sample complexity of the most data-efficient learner trained on the clean data to correctly predict a known test sample. Compared to our work, LDC is more general and applies to any dataset, any learning algorithm, and even different poisoning settings (e.g., deletion, insertion). In contrast, our work focuses on insertion-only indiscriminate attacks for linear models. However, the general setting for LDC can result in overly pessimistic estimates of the power of insertion-only indiscriminate poisoning attacks. In addition, the key factor of the sample complexity $n$ in LDC is usually unknown and difficult to determine. Our work complements LDC by making an initial step towards finding factors (which could be related to $n$) under a particular attack scenario to better understand the power of indiscriminate data poisoning attacks. Appendix \ref{sec:ldc} provides more details.

\section{Preliminaries}
\label{sec:preliminary}
We consider binary classification tasks. 
Let $\cX\subseteq\RR^n$ be the input space and $\cY=\{-1, +1\}$ be the label space.
Let $\mu_c$ be the joint distribution of clean inputs and labels.
For standard classification tasks, the goal is to learn a hypothesis $h:\cX\rightarrow\cY$ that minimizes $\Risk(h; \mu_c) = \PP_{(\bx,y)\sim \mu_c} \big[h(\bx) \neq y \big]$. Instead of directly minimizing risk, typical machine learning methods find an approximately good hypothesis $h$ by restricting the search space to a specific hypothesis class $\cH$, then optimizing $h$ by minimizing some convex surrogate loss: $\min_{h\in\cH}\: L(h;\mu_c)$.
In practical applications with only a finite number of samples, model training replaces the population measure $\mu_c$ with its empirical counterpart. The surrogate loss for $h$ is defined as $L(h;\mu) = \EE_{(\bx,y)\sim \mu} \big[\ell(h; \bx, y)\big]$, where $\ell(h;\bx,y)$ denotes the non-negative individual loss of $h$ incurred at $(\bx,y)$. 

{We focus on the linear hypothesis class and study the commonly considered hinge loss in prior poisoning literature~\citep{biggio2011support,biggio2012poisoning,steinhardt2017certified,koh2022stronger,suya2021model} when deriving the optimal attacks for 1-D Gaussian distributions in \Cref{sec:1-D Gaussian}.} Our results can be extended to other linear methods such as logistic regression (LR). A \emph{linear hypothesis} parameterized by a weight parameter $\bw\in\RR^n$ and a bias parameter $b\in\RR$ is defined as: $h_{\bw,b}(\bx) = \sgn(\bw^\top\bx + b) \text{ for any } \bx\in\RR^n$, where $\sgn(\cdot)$ denotes the sign function. 
For any $\bx\in\cX$ and $y\in\cY$, the \emph{hinge loss} of a linear classifier $h_{\bw, b}$ is defined as: 
\begin{align}
\label{eq:hinge loss}
\ell(h_{\bw,b}; \bx, y) = \max\{0, 1 - y(\bw^\top\bx+b)\} + \frac{\lambda}{2}\|\bw\|_2^2,
\end{align}
where $\lambda \geq 0$ is the tuning parameter which penalizes the $\ell_2$-norm of the weight parameter $\bw$.

\shortsection{Threat Model}
We formulate indiscriminate data poisoning attack as a game between an attacker and a victim in practice following the prior work \citep{steinhardt2017certified}:
\begin{enumerate}
    \item[1.] A clean training 
    dataset $\cS_c$ is produced, where each data point is i.i.d. sampled from $\mu_c$.
    \item[2.] The attacker generates a poisoned dataset $\cS_p$ using some poisoning strategy $\cA$, which aims to reduce the performance of the victim model by injecting $\cS_p$ into the training dataset. %
    \item[3.] The victim minimizes empirical surrogate loss $L(\cdot)$ on $\cS_c\cup \cS_p$ and produces a model $\hat{h}_p$.
\end{enumerate}

The attacker's goal is to find a poisoning strategy $\cA$ such that the risk of the final induced classifier $\Risk(\hat{h}_p; \mu_c)$ is as high as possible, which is empirically estimated on a set of fresh testing data that are i.i.d. sampled from $\mu_c$. We assume the attacker has full knowledge of the learning process, including the clean distribution $\mu_c$ or the clean training dataset $\cS_c$, the hypothesis class $\cH$, the surrogate loss function $\ell$ and the learning algorithm adopted by the victim.

We impose two restrictions to the poisoning attack: $|\cS_p| \leq \epsilon\cdot |\cS_c|$ and $\cS_p\subseteq\cC$, where $\epsilon\in[0,1]$ is the poisoning budget and $\cC\subseteq\cX\times\cY$ is a bounded subset that captures the feasibility constraints for poisoned data. We assume that $\cC$ is specified in advance with respect to different applications (e.g., normalized pixel values of images can only be in range $[0,1]$) and possible defenses the victim may choose (e.g., points that have larger Euclidean distance from center will be removed)~\citep{steinhardt2017certified,koh2022stronger}. 
Here, we focus on undefended victim models, i.e., $\cC$ is specified based on application constraints, so as to better assess the inherent dataset vulnerabilities without active protections. However, defense strategies such as data sanititation~\cite{diakonikolas2019sever,steinhardt2017certified,koh2022stronger} may shrink the size of $\mathcal{C}$ so that the poisoned data are less extreme and harmful. We provide preliminary experimental results on this in Section~\ref{sec:implications}.

\section{Disparate Poisoning Vulnerability of Benchmark Datasets}
\label{sec:sota_attack_eval}

\begin{figure*}
    \centering
    \includegraphics[width=\textwidth]{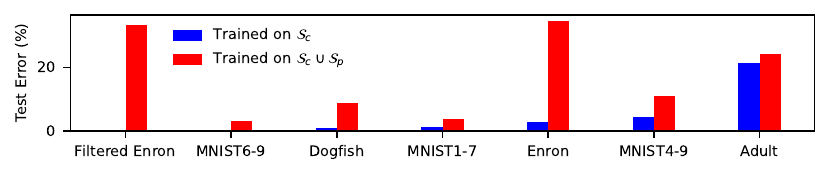}  
    \caption{Performance of best current indiscriminate poisoning attacks with $\epsilon=3\%$ across different benchmark datasets on linear SVM. Datasets are sorted from lowest to highest base error rate. 
    }
    \label{fig:benchmark_attack_results}
\end{figure*}
    
Prior evaluations of poisoning attacks on \emph{convex models} are inadequate in some aspects, either being tested on very small datasets (e.g., significantly subsampled MNIST 1--7 dataset) without competing baselines~\citep{biggio2011support,demontis2019adversarial,mei2015security,mei2015using}, generating invalid poisoning points~\citep{steinhardt2017certified,koh2022stronger} or lacking diversity in the evaluated convex models/datasets~\citep{lu2022indiscriminate,lu2023exploring}. This motivates us to carefully evaluate representative attacks for linear models on various benchmark datasets without considering additional defenses.

\shortsection{Experimental Setup}
We evaluate the state-of-the-art data poisoning attacks for linear models: \emph{Influence Attack}~\citep{koh2017understanding,koh2022stronger}, \emph{KKT Attack}~\citep{koh2022stronger}, \emph{Min-Max Attack}~\citep{steinhardt2017certified,koh2022stronger}, and \emph{Model-Targeted Attack (MTA)}~\citep{suya2021model}. We consider both linear SVM and LR models and evaluate the models on benchmark datasets including different MNIST~\citep{lecun1998gradient} digit pairs (MNIST 1--7, as used in prior evaluations~\citep{steinhardt2017certified,koh2022stronger,biggio2011support,suya2021model}, in addition to MNIST 4--9 and MNIST 6--9 which were picked to represent datasets that are relatively easier/harder to poison), and other benchmark datasets used in prior evaluations including Dogfish~\citep{koh2017understanding}, Enron~\citep{metsis2006spam} and Adult~\citep{jagielski2021subpopulation,suya2021model}. \emph{Filtered Enron} is obtained by filtering out 3\% of near boundary points (with respect to the clean decision boundary of linear SVM) from Enron. The motivation of this dataset (along with Adult) is to show that, the initial error cannot be used to trivially infer the vulnerability to poisoning attacks (\Cref{sec:benchmark_exps}). We choose $3\%$ as the poisoning rate following previous works \citep{steinhardt2017certified,koh2022stronger,lu2022indiscriminate,lu2023exploring}. \Cref{sec:sota_exp_setup} provides details on the experimental setup.

\shortsection{Results}
\autoref{fig:benchmark_attack_results} shows the highest error from the tested poisoning attacks (they perform similarly in most cases) on linear SVM. Similar observations are found on LR in \Cref{sec:benchmark results all}. At $3\%$ poisoning ratio, the (absolute) increased test errors of datasets such as MNIST 6--9 and MNIST 1--7 are less than 4\% for SVM while for other datasets such as Dogfish, Enron and Filtered Enron, the increased error is much higher than the poisoning ratio, indicating that these datasets are more vulnerable to poisoning. Dogfish is moderately vulnerable ($\approx$ 9\% increased error) while Enron and Filtered Enron are highly vulnerable with over 30\% of increased error. Consistent with prior work~\citep{steinhardt2017certified,koh2022stronger,lu2023exploring}, throughout this paper, we measure the increased error to determine whether a dataset is vulnerable to poisoning attacks. However, in security-critical applications, the ratio between the increased error and the initial error might matter more but leave its exploration as future work. These results reveal a drastic difference in robustness of benchmark datasets to state-of-the-art indiscriminate poisoning attacks which was not explained in prior works. A natural question to ask from this observation is \emph{are datasets like MNIST digits inherently robust to poisoning attacks or just resilient to current attacks?} Since estimating the performance of optimal poisoning attacks for benchmark datasets is very challenging, we first characterize optimal poisoning attacks for theoretical distributions and then study their partial characteristics for general distributions in Section \ref{sec:theoretical result}.

\section{Defining Optimal Poisoning Attacks}
\label{sec:define opt poison}

In this section, we lay out formal definitions of optimal poisoning attacks and study their general implications. First, we introduce a notion of \emph{finite-sample optimal poisoning} to formally define the optimal poisoning attack in the practical finite-sample setting with respect to our threat model:

\begin{definition}[Finite-Sample Optimal Poisoning]
\label{def:f-s optimal poisoning}
    Consider input space $\cX$ and label space $\cY$. Let $\mu_c$ be the underlying data distribution of clean inputs and labels. Let $\cS_c$ be a set of examples sampled i.i.d. from $\mu_c$. Suppose $\cH$ is the hypothesis class and $\ell$ is the surrogate loss function that are used for learning. For any $\epsilon\geq 0$ and $\cC\subseteq\cX\times\cY$, a \emph{finite-sample optimal poisoning adversary} $\hat\cA_\opt$ is defined to be able to generate some poisoned dataset $\cS^*_p$ such that:
    \begin{align*}
    \cS_p^* = \argmax_{\cS_p}\:\Risk(\hat{h}_p; \mu_c) \quad \text{ s.t. } \cS_p\subseteq\cC \text{ and } |\cS_p| \leq \epsilon\cdot |\cS_c|,
    \end{align*}
    where $\hat{h}_p = \argmin_{h\in\cH}\: \sum_{(\bx,y)\in\cS_c\cup \cS_p} \ell(h;\bx, y)$ denotes the empirical loss minimizer.
\end{definition}

Definition \ref{def:f-s optimal poisoning} states that no poisoning strategy can achieve a better attack performance than that achieved by $\small{\hat\cA_\opt}$. If we denote by $\small{\hat{h}_p^*}$ the hypothesis produced by minimizing the empirical loss on $\cS_c\cup\cS_p^*$, then $\small{\Risk(\hat{h}_p^*; \mu_c)}$ can be regarded as the maximum achievable attack performance. 

Next, we introduce a more theoretical notion of \emph{distributional optimal poisoning}, which generalizes Definition \ref{def:f-s optimal poisoning} from finte-sample datasets to data distributions. 

\begin{definition}[Distributional Optimal Poisoning] \label{def:optimal poisoning attack}
Consider the same setting as in Definition \ref{def:f-s optimal poisoning}. A \emph{distributional optimal poisoning adversary} $\cA_\opt$ is defined to be able to generate some poisoned data distribution $\mu_p^*$ such that: 
\begin{align*}
(\mu_p^*, \delta^*) = \argmax_{(\mu_p, \delta)}\:\Risk(h_p; \mu_c) \quad \text{s.t. } \supp(\mu_p)\subseteq{\cC} \text{ and } 0\leq\delta\leq\epsilon,
\end{align*}
where $h_p = \argmin_{h\in\cH}\:\{L(h; \mu_c) + \delta \cdot L(h;\mu_p)\}$ denotes the population loss minimizer.
\end{definition}

Similar to the finite-sample case, Definition \ref{def:optimal poisoning attack} implies that there is no feasible poisoned distribution $\mu_p$ such that the risk of its induced hypothesis is higher than that attained by $\mu_p^*$. Theorem \ref{thm:connect optimal poisoning}, proven in Appendix \ref{sec:proof connect optimal poisoning},
 connects Definition \ref{def:f-s optimal poisoning} and Definition \ref{def:optimal poisoning attack}. 
 The formal definitions of uniform convergence, strong convexity and Lipschitz continuity are given in Appendix \ref{sec:proof connect optimal poisoning}.

\begin{theorem}
\label{thm:connect optimal poisoning}
Consider the same settings as in Definitions \ref{def:f-s optimal poisoning} and \ref{def:optimal poisoning attack}. Suppose $\cH$ satisfies the uniform convergence property with function $m_\cH(\cdot, \cdot)$. Assume $\ell$ is $b$-strongly convex and $\Risk(h;\mu_c)$ is $\rho$-Lipschitz continuous with respect to model parameters for some $b,\rho>0$. 
Let $\hat{h}^*_p = \argmin_{h\in\cH}\: \sum_{(\bx,y)\in\cS_c\cup \cS_p^*} \ell(h;\bx, y)$ and $h_p^* = \argmin_{h\in\cH}\{L(h;\mu_c) + \delta^*\cdot L(h;\mu_p^*)\}$. For
any $\eps',\delta'\in(0,1)$, if $|\cS_c|\geq m_{\cH}(\eps',\delta')$, then with probability at least $1-\delta'$, %
\begin{align*}
    \big|\Risk(\hat{h}^*_p; \mu_c) - \Risk(h^*_p; \mu_c)\big| \leq  2\rho\sqrt{\frac{\eps'}{b}}.
\end{align*}
\end{theorem}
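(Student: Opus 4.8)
The plan is to convert the gap in risks into a gap in objective values that uniform convergence can bound, using the assumed $\rho$-Lipschitzness and $b$-strong convexity, and then to combine this with the optimality of \emph{both} adversaries in a two-sided sandwich. After normalizing the empirical objective by $|\cS_c|$, the finite-sample minimizer $\hat h^*_p$ is precisely $\argmin_{h\in\cH}\{L(h;\hat\mu_c)+\hat\delta\,L(h;\hat\mu^*_p)\}$, where $\hat\mu_c,\hat\mu^*_p$ are the empirical measures of $\cS_c,\cS^*_p$ and $\hat\delta=|\cS^*_p|/|\cS_c|\le\eps$, while $h^*_p=\argmin_{h\in\cH}\{L(h;\mu_c)+\delta^*L(h;\mu^*_p)\}$. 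The reusable estimate I would set up is this: if $h_1,h_2$ (with parameters $\theta_1,\theta_2$) minimize two $b$-strongly convex objectives $F_1,F_2$ on $\cH$ (each is such, since $\ell$ is $b$-strongly convex and the poison terms are nonnegatively weighted and convex), then strong convexity of $F_1$ at its minimizer gives $\|\theta_1-\theta_2\|^2\le\tfrac{2}{b}(F_1(\theta_2)-F_1(\theta_1))$, and writing $F_1(\theta_2)-F_1(\theta_1)=[F_1(\theta_2)-F_2(\theta_2)]+[F_2(\theta_2)-F_2(\theta_1)]+[F_2(\theta_1)-F_1(\theta_1)]$ with the middle bracket $\le 0$ bounds this by $2\sup_{h\in\cH}|F_1(h)-F_2(h)|$; Lipschitzness then yields $|\Risk(h_1;\mu_c)-\Risk(h_2;\mu_c)|\le 2\rho\sqrt{\sup_h|F_1-F_2|/b}$. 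It therefore suffices to produce, in each direction, a comparison objective whose sup-distance to the relevant one is at most $\eps'$.

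For the direction $\Risk(\hat h^*_p;\mu_c)\le\Risk(h^*_p;\mu_c)+2\rho\sqrt{\eps'/b}$, note that $(\hat\mu^*_p,\hat\delta)$ is a feasible \emph{distributional} strategy because $\supp(\hat\mu^*_p)\subseteq\cC$ and $\hat\delta\le\eps$; let $\tilde h=\argmin_{h\in\cH}\{L(h;\mu_c)+\hat\delta\,L(h;\hat\mu^*_p)\}$ be its population minimizer, so distributional optimality of $(\mu^*_p,\delta^*)$ gives $\Risk(\tilde h;\mu_c)\le\Risk(h^*_p;\mu_c)$. The crucial observation is that $\hat h^*_p$ and $\tilde h$ minimize objectives that \emph{share the identical poison term} $\hat\delta\,L(\cdot;\hat\mu^*_p)$ and differ only through the clean term, $L(\cdot;\hat\mu_c)$ versus $L(\cdot;\mu_c)$. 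Their sup-distance is therefore exactly $\sup_{h\in\cH}|L(h;\hat\mu_c)-L(h;\mu_c)|$, which uniform convergence bounds by $\eps'$ with probability at least $1-\delta'$ once $|\cS_c|\ge m_\cH(\eps',\delta')$; the estimate above then gives $\Risk(\hat h^*_p;\mu_c)\le\Risk(\tilde h;\mu_c)+2\rho\sqrt{\eps'/b}\le\Risk(h^*_p;\mu_c)+2\rho\sqrt{\eps'/b}$.

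For the reverse inequality I would realize $(\mu^*_p,\delta^*)$ in finite samples by drawing a fresh poison set $\cS'_p$ of size $\lfloor\delta^*|\cS_c|\rfloor$ i.i.d.\ from $\mu^*_p$ (feasible since $\supp(\mu^*_p)\subseteq\cC$ and the ratio does not exceed $\eps$), letting $\hat h'$ be the empirical minimizer it induces; finite-sample optimality of $\cS^*_p$ gives $\Risk(\hat h';\mu_c)\le\Risk(\hat h^*_p;\mu_c)$, and the reduction lemma, comparing $\hat h'$ with $h^*_p$, closes the loop. This last comparison is where the argument is most delicate and is the step I expect to be the main obstacle: the clean data can be held fixed on both sides, but the poison cannot, so matching the empirical poison measure $\hat\mu'_p$ to $\mu^*_p$ requires a \emph{second} application of uniform convergence, now to the poison samples (legitimate since they are i.i.d.\ from $\mu^*_p$, provided $\lfloor\delta^*|\cS_c|\rfloor\ge m_\cH(\eps',\delta')$), together with absorbing the rounding of the ratio $\delta'\approx\delta^*$. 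Verifying that these extra contributions fold back into the same $\eps'$ budget, so that the symmetric bound retains the clean constant $2\rho\sqrt{\eps'/b}$, is the only nonroutine bookkeeping; elsewhere the proof is the standard stability estimate for strongly convex empirical risk minimization.
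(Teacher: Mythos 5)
Your proposal is correct and follows essentially the same route as the paper's proof: the same two-sided sandwich in which the empirical poison measure $(\hat\mu_p^*,\hat\delta)$ is fed back as a feasible distributional strategy for one inequality, an i.i.d.\ sample from $\mu_p^*$ is fed forward as a feasible finite-sample strategy for the other, and uniform convergence combined with $b$-strong convexity and $\rho$-Lipschitzness converts a $2\eps'$ objective gap into the $2\rho\sqrt{\eps'/b}$ risk gap. The only differences are cosmetic but in your favor: in the first direction you cancel the identical poison term and invoke uniform convergence only on $L(\cdot;\hat\mu_c)$ versus $L(\cdot;\mu_c)$, whereas the paper treats $\cS_c\cup\cS_p^*$ as an i.i.d.\ sample from a mixture (which is questionable since $\cS_p^*$ is adversarial), and you explicitly flag the second-direction bookkeeping (rounding of $\delta^*|\cS_c|$, a second uniform-convergence application to the poison samples, and the implicit union bound over the two events) that the paper's proof silently elides.
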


\begin{remark}
\label{rem:consistency}
Theorem \ref{thm:connect optimal poisoning} assumes three regularity conditions to ensure the finite-sample optimal poisoning attack is a consistent estimator of the distributional optimal one (i.e., insights on poisoning from distributional settings can transfer to finite-sample settings): the uniform convergence property of $\cH$ that guarantees empirical minimization of surrogate loss returns a good hypothesis, the strong convexity condition that ensures a unique loss minimizer, and the Lipschitz condition that translates the closeness of model parameters to the closeness of risk. 
These conditions hold for most (properly regularized) convex problems and input distributions with bounded densities. 
The asymptotic convergence rate is determined by the function $m_{\cH}$, which depends on the complexity of the hypothesis class $\cH$ and the surrogate loss $\ell$. For instance, if we choose $\lambda$ carefully, sample complexity of the linear hypothesis class for a bounded hinge loss is $\Omega(1/(\eps')^{2})$, where $\eps'$ is the error bound parameter for specifying the uniform convergence property (see Definition \ref{def:uniform convergence}) and other problem-dependent parameters are hidden in the big-$\Omega$ notation
(see Section 15 of \cite{shalev2014understanding} for details). We note the generalization of optimal poisoning attack for linear case is related to agnostic learning of halfspaces \citep{kalai2008agnostically}, which also imposes assumptions on the underlying distribution such as anti-concentration assumption \citep{diakonikolas2020near,frei2021agnostic} similar to the Lipschitz continuity condition assumed in Theorem \ref{thm:connect optimal poisoning}. {The theorem applies to any loss function that is strongly convex (e.g., general convex losses with the $\ell_2$- or elastic net regularizers), not just the hinge loss. }

\end{remark}

Moreover, we note that $\delta^*$ represents the ratio of injected poisoned data that achieves the  optimal attack performance. In general, $\delta^*$ can be any value in $[0, \epsilon]$, but we show in Theorem \ref{thm:optimal ratio}, proven in Appendix \ref{sec:proof optimal ratio}, that optimal poisoning can always be achieved with $\epsilon$-poisoning under mild conditions. 

\begin{theorem} \label{thm:optimal ratio}
The optimal poisoning attack performance defined in Definition \ref{def:optimal poisoning attack} can always be achieved by choosing $\epsilon$ as the poisoning ratio, if either of the following conditions is satisfied:
\begin{enumerate}
    \item The support of the clean distribution $\supp(\mu_c)\subseteq\cC$.

    \item  $\cH$ is a convex hypothesis class, and for any $h_\theta\in\cH$, there always exists a distribution $\mu$ such that $\supp(\mu)\subseteq\cC \text{ and } \frac{\partial}{\partial_{\bm\theta}} L(h_{\bm\theta}; \mu) = \bm{0}$.
\end{enumerate}
\end{theorem}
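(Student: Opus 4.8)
The plan is to prove the statement by showing that the supremum of the distributional objective over $\delta\in[0,\epsilon]$ is already attained at $\delta=\epsilon$. Since $\{\delta=\epsilon\}\subseteq[0,\epsilon]$, the full-budget maximum can only be smaller, so it suffices to establish the reverse inequality: take any optimizer $(\mu_p^*,\delta^*)$ from Definition \ref{def:optimal poisoning attack} with induced minimizer $h_p^*$, and construct a \emph{feasible} full-budget pair $(\mu_p',\epsilon)$ whose induced population loss minimizer is exactly $h_p^*$. Then $\Risk(h_p';\mu_c)=\Risk(h_p^*;\mu_c)$, so the full-budget attack matches the optimal value. The unifying device is to ``pad'' the poisoning distribution by mixing in an auxiliary feasible distribution, inflating the effective ratio to $\epsilon$ while leaving the minimizer fixed. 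We may assume $\epsilon>0$, since for $\epsilon=0$ the only feasible choice is $\delta=\epsilon=0$ and the claim is immediate.

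For the first condition, because $\supp(\mu_c)\subseteq\cC$ the mixture $\mu_p'=\beta\mu_c+(1-\beta)\mu_p^*$ satisfies $\supp(\mu_p')\subseteq\cC$ for every $\beta\in[0,1]$. Using linearity of $L(h;\cdot)$ in the distribution, the full-budget objective expands as $L(h;\mu_c)+\epsilon L(h;\mu_p')=(1+\epsilon\beta)L(h;\mu_c)+\epsilon(1-\beta)L(h;\mu_p^*)$. I would then choose $\beta=\frac{\epsilon-\delta^*}{\epsilon(1+\delta^*)}$, which lies in $[0,1]$ precisely because $0\le\delta^*\le\epsilon$, so that this expression equals the positive scalar multiple $(1+\epsilon\beta)\,[L(h;\mu_c)+\delta^* L(h;\mu_p^*)]$ of the original objective. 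Since rescaling by a positive constant does not change the argmin, the induced hypothesis is unchanged, i.e.\ $h_p'=h_p^*$. The only real computation is solving the scalar equation $\epsilon(1-\beta)/(1+\epsilon\beta)=\delta^*$ for $\beta$.

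For the second condition, let $\bm{\theta}^*$ parameterize $h_p^*$. By the hypothesized property there is a feasible $\mu$ with $\frac{\partial}{\partial\bm{\theta}}L(h_{\bm{\theta}^*};\mu)=\bm{0}$, and first-order optimality of $\bm{\theta}^*$ for the original objective gives $\frac{\partial}{\partial\bm{\theta}}L(h_{\bm{\theta}^*};\mu_c)=-\delta^*\frac{\partial}{\partial\bm{\theta}}L(h_{\bm{\theta}^*};\mu_p^*)$. Setting $\mu_p'=\gamma\mu_p^*+(1-\gamma)\mu$ (feasible for $\gamma\in[0,1]$) and differentiating the full-budget objective at $\bm{\theta}^*$, the two identities collapse the gradient to $(\epsilon\gamma-\delta^*)\frac{\partial}{\partial\bm{\theta}}L(h_{\bm{\theta}^*};\mu_p^*)$; choosing $\gamma=\delta^*/\epsilon\in[0,1]$ annihilates it. Convexity of the summed loss in $\bm{\theta}$ then upgrades this vanishing gradient to global optimality, so $\bm{\theta}^*$ minimizes $L(h;\mu_c)+\epsilon L(h;\mu_p')$ and again $h_p'=h_p^*$.

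The main obstacle is less the algebra than the bookkeeping around the minimizer: both constructions only certify that $h_p^*$ \emph{is a} minimizer of the full-budget objective, so to conclude that the \emph{induced} hypothesis equals $h_p^*$ (and the risk is preserved exactly) I need the argmin to be well-defined, e.g.\ unique under a strong-convexity assumption as invoked in Remark \ref{rem:consistency}. A secondary technicality is that the second condition is phrased with gradients, so for nonsmooth losses such as the hinge loss one should state first-order optimality via subgradients ($\bm{0}\in\partial_{\bm{\theta}}L$); convexity is exactly what lets the stationarity condition pass to a global minimizer in that setting.
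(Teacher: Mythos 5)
Your proposal is correct and follows essentially the same route as the paper's proof in Appendix \ref{sec:proof optimal ratio}: for condition 1 your mixture $\beta\mu_c+(1-\beta)\mu_p^*$ with $\beta=\tfrac{\epsilon-\delta^*}{\epsilon(1+\delta^*)}$ is algebraically identical to the paper's $\mu_p^{(\epsilon)}=\tfrac{\epsilon-\delta^*}{\epsilon(1+\delta^*)}\mu_c+\tfrac{\delta^*(1+\epsilon)}{\epsilon(1+\delta^*)}\mu_p^*$, and for condition 2 your $\gamma=\delta^*/\epsilon$ mixture with the gradient-annihilating $\mu$ is exactly the paper's construction. Your closing caveats about uniqueness of the argmin and subgradients for nonsmooth losses are sound points that the paper's proof passes over silently.
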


\begin{remark}
\label{rem:optimal poisoning ratio}
Theorem \ref{thm:optimal ratio} characterizes the conditions under which the optimal performance is guaranteed to be achieved with the maximum poisoning ratio $\epsilon$. Note that the first condition $\supp(\mu_c)\subseteq\cC$ is mild because it typically holds for poisoning attacks against undefended classifiers. When attacking classifiers that employ some defenses such as data sanitization, the condition $\supp(\mu_c)\subseteq\cC$ might not hold, due to the fact that the proposed defense may falsely reject some clean data points as outliers (i.e., related to false positive rates). The second condition complements the first one in that it does not require the victim model to be undefended, however, it requires $\cH$ to be convex. We prove in Appendix \ref{sec:proof rem optimal poisoning ratio} that for linear hypothesis with hinge loss, such a $\mu$ can be easily constructed. The theorem enables us to conveniently characterize the optimal poisoning attacks in \Cref{sec:1-D Gaussian} by directly using $\epsilon$. When the required conditions are satisfied, this theorem also provides a simple sanity check on whether a poisoning attack is optimal. In particular, if a candidate attack is optimal, the risk of the induced model is monotonically non-decreasing with respect to the poisoning ratio.  {Note that the theorem above applies to any strongly convex loss function. }

\end{remark}

\section{Characterizing Optimal Poisoning Attacks}
\label{sec:theoretical result}
This section characterizes the distributional optimal poisoning attacks with respect to linear hypothesis class. We first consider a theoretical 1-dimensional Gaussian mixture model and exactly characterize optimal poisoning attack and then discuss the implications on the underlying factors that potentially cause the inherent vulnerabilities to poisoning attacks for general high-dimensional distributions.

\subsection{One-Dimensional Gaussian Mixtures}
\label{sec:1-D Gaussian}

Consider binary classification tasks using \emph{hinge} loss with one-dimensional inputs, where $\cX = \RR$ and $\cY=\{-1, +1\}$. Let $\mu_c$ be the underlying clean data distribution, where each example $(x,y)$ is assumed to be i.i.d. sampled according to the following Gaussian mixture model:
\begin{equation}
\label{eq:GMM generating process}
\left\{
\begin{array} {ll}
y=-1, x\sim\cN(\gamma_1, \sigma_1^2) & \text{with probability $p$,}\\
y=+1, x\sim\cN(\gamma_2, \sigma_2^2) & \text{with probability $1-p$},
\end{array} 
\right. 
\end{equation}
where $\sigma_1, \sigma_2 > 0$ and $p\in(0,1)$. Without loss of generality, we assume $\gamma_1\leq\gamma_2$. Following our threat model, we let $\epsilon\geq 0$ be the maximum poisoning ratio and {$\cC:=[-u, u]\times \cY$ for some $u>0$ be the constraint set.} Let $\cH_{\mathrm{L}} = \{h_{w,b}: w\in\{-1, 1\}, b\in\RR\}$ be the linear hypothesis class with normalized weights. Note that we consider a simplified setting where the weight parameter $w\in\{-1, 1\}$.\footnote{Characterizing the optimal poisoning attack under the general setting of $w\in\RR$ is more challenging, since we need to consider the effect of any possible choice of $w$ and its interplay with the dataset and constraint set factors. We leave the theoretical analyses of $w\in\RR$ to future work.} 
Since $\|w\|_2$ is fixed, we also set $\lambda=0$ in the hinge loss function \eqref{eq:hinge loss}. 

{For clarify in presentation, we outline the connection between the definitions and the main theorem below, which will also show the high-level proof sketch of \Cref{thm:1-D gaussian optimality}. We first prove that in order to understand the optimal poisoning attacks, it is sufficient to study the family of two-point distributions (Definition \ref{def:two point distribution}), instead of any possible distributions, as the poisoned data distribution. Based on this reduction and a specification of weight flipping condition (Definition \ref{def:weight-flipping}), we then rigorously characterize the optimal attack performance with respect to different configurations of task-related parameters (\Cref{thm:1-D gaussian optimality}). Similar conclusions may also be made with logistic loss. 
}

\begin{definition}[Two-point Distribution]
\label{def:two point distribution}
For any $\alpha\in[0,1]$, $\nu_\alpha$ is defined as a \emph{two-point distribution}, if for any $(x, y)$ sampled from $\nu_\alpha$,
\begin{equation}
\label{eq:constuction optimal poisoning}
(x,y) = 
\left\{
\begin{array} {ll}
(-u, +1) & \text{with probability $\alpha$,}\\
(u, -1) & \text{with probability $1-\alpha$}.
\end{array} 
\right. 
\end{equation}
\end{definition}

\begin{definition}[Weight-Flipping Condition]
\label{def:weight-flipping}
Consider the assumed Gaussian mixture model \eqref{eq:GMM generating process} and the linear hypothesis class $\cH_L$. Let $g$ be an auxiliary function such that for any $b\in\RR$,
$$
    g(b) = \frac{1}{2} \Phi\bigg(\frac{b+\gamma_1+1}{\sigma}\bigg) - \frac{1}{2}\Phi\bigg(\frac{-b-\gamma_2+1}{\sigma}\bigg),
$$
where $\Phi$ is the cumulative distribution function (CDF) of standard Gaussian $\cN(0,1)$. Let $\epsilon>0$ be the poisoning budget and $g^{-1}$ be the inverse of $g$, then the \emph{weight-flipping condition} is defined as:
\begin{align}
\label{eq:suff nece cond}
    \max\{\Delta(-\eps), \Delta(g(0)), \Delta(\eps)\} \geq 0,
\end{align}
where $\Delta(s) = L(h_{1, g^{-1}(s)}; \mu_c) - \min_{b\in\RR} L(h_{-1, b};\mu_c) + \eps\cdot(1+u)-s\cdot g^{-1}(s)$.
\end{definition}

Now we are ready to present our main theoretical results. The following theorem rigorously characterizes the behavior of the distributional optimal poisoning adversary $\cA_{\opt}$ under the Gaussian mixture model \eqref{eq:GMM generating process} and the corresponding optimal attack performance:

\begin{theorem}
\label{thm:1-D gaussian optimality}
Suppose the clean distribution $\mu_c$ follows the Gaussian mixture model \eqref{eq:GMM generating process} with $p=1/2$, $\gamma_1\leq\gamma_2$, and $\sigma_1 = \sigma_2 = \sigma$. Assume $u\geq 1$ and $|\gamma_1+\gamma_2|\leq 2(u-1)$. There always exists some $\alpha\in[0,1]$ such that the optimal attack performance defined in Definition \ref{def:optimal poisoning attack} is achieved with $\delta = \epsilon$ and $\mu_p=\nu_\alpha$, where $\nu_\alpha$ is defined by \eqref{eq:constuction optimal poisoning}. More specifically, if $h_p^*=\argmin_{h\in\cH_L}\{L(h;\mu_c) + \eps\cdot L(h;\nu_\alpha)\}$ denotes the induced hypothesis with optimal poisoning, then
\begin{equation*}
\Risk(h_p^*; \mu_c) = 
\left\{
\begin{array} {ll}
\Phi\big(\frac{\gamma_2 - \gamma_1}{2\sigma} \big) & \text{if condition \eqref{eq:suff nece cond} is satisfied,}\\
\frac{1}{2}\Phi\big(\frac{\gamma_1-\gamma_2 + 2s}{2\sigma}\big) + \frac{1}{2}\Phi\big(\frac{\gamma_1-\gamma_2 - 2s}{2\sigma}\big) & \text{otherwise},
\end{array} 
\right. 
\end{equation*}
where $s = \max\{g^{-1}(\eps)-g^{-1}(0), g^{-1}(0) - g^{-1}(-\eps)\}$ and $g(\cdot)$ is defined in Definition \ref{def:weight-flipping}.
\end{theorem}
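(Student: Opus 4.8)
The plan is to establish the result in three stages: (i) reduce the space of candidate poisoned distributions to the two-point family $\{\nu_\alpha\}$ with $\delta=\epsilon$; (ii) analyze separately the two possible induced weights $w\in\{-1,+1\}$; and (iii) determine exactly when the victim's loss minimizer flips to $w=-1$, which turns out to be the weight-flipping condition \eqref{eq:suff nece cond}.

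First I would record the identity that drives everything: for $w=+1$, differentiating the clean population hinge loss in $b$ gives $\tfrac{d}{db}L(h_{1,b};\mu_c)=g(b)$, so $g$ is the derivative of a convex function (hence nondecreasing) and the clean $w=+1$ minimizer sits at $b_0=g^{-1}(0)=-(\gamma_1+\gamma_2)/2$. For a two-point distribution $\nu_\alpha$ and any $b$ in the band $[-(u-1),u-1]$, both hinges at the corners $(-u,+1),(u,-1)$ are active against $w=+1$, so $L(h_{1,b};\nu_\alpha)=(1+u)+(1-2\alpha)b$, a contribution with constant $b$-derivative $1-2\alpha\in[-1,1]$; against $w=-1$ the same corners are classified with margin $\geq 1$, hence $L(h_{-1,b};\nu_\alpha)=0$ on that band. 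The reduction to $\nu_\alpha$ then follows from a vertex argument: the influence of any poisoning point $(x,y)$ on the combined-loss gradient is linear in $x$, so pushing $x$ to $\pm u$ maximizes leverage, and among the four corners only the flipped pair $(-u,+1),(u,-1)$ exerts nonzero force on the $w=+1$ branch while adding zero loss to the $w=-1$ branch; the consistent corners only waste budget on $w=+1$ and penalize the desirable flip, so an optimal adversary never uses them. This reduction, which must also justify $\delta=\epsilon$ (via the monotonicity in the budget from Theorem \ref{thm:optimal ratio} and Remark \ref{rem:optimal poisoning ratio}), is the main obstacle, since it has to rule out every non-corner, non-flipped poisoned distribution uniformly over the joint optimization in $(w,b)$.

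Granting the reduction, the two branches are routine. On the $w=+1$ branch the combined loss $L(h_{1,b};\mu_c)+\epsilon L(h_{1,b};\nu_\alpha)$ has derivative $g(b)+\epsilon(1-2\alpha)$, so its minimizer is $b=g^{-1}(s)$ with $s:=\epsilon(2\alpha-1)$ ranging over $[-\epsilon,\epsilon]$ as $\alpha$ ranges over $[0,1]$. Writing the induced risk in terms of the displacement $t=b-b_0$ gives $\tfrac12\Phi(\tfrac{\gamma_1-\gamma_2+2t}{2\sigma})+\tfrac12\Phi(\tfrac{\gamma_1-\gamma_2-2t}{2\sigma})$, which I would show is symmetric and increasing in $|t|$ by comparing the two Gaussian densities using $\gamma_1-\gamma_2\le 0$; hence within this branch the risk is maximized at the largest feasible $|t|$, namely $s=\max\{g^{-1}(\epsilon)-g^{-1}(0),\,g^{-1}(0)-g^{-1}(-\epsilon)\}$, yielding the second case. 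On the $w=-1$ branch the assumption $|\gamma_1+\gamma_2|\le 2(u-1)$ places the clean $w=-1$ minimizer $b=(\gamma_1+\gamma_2)/2$ inside the band where $\nu_\alpha$ contributes zero loss, and this same $b$ is where the $w=-1$ risk $\tfrac12\Phi(\tfrac{b-\gamma_1}{\sigma})+\tfrac12\Phi(\tfrac{\gamma_2-b}{\sigma})$ is maximized, so whenever the flip occurs the induced risk is exactly $\Phi(\tfrac{\gamma_2-\gamma_1}{2\sigma})$, the first case, which exceeds $1/2$ and hence dominates any $w=+1$ risk.

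Finally I would settle the flip decision by comparing the two branch minima. Since $\nu_\alpha$ adds zero loss to $w=-1$, its minimum is $\min_b L(h_{-1,b};\mu_c)$, while the $w=+1$ minimum at $b=g^{-1}(s)$ equals $L(h_{1,g^{-1}(s)};\mu_c)+\epsilon(1+u)-s\,g^{-1}(s)$; the victim flips iff this difference is nonnegative, i.e. iff $\Delta(s)\ge 0$ for some achievable $s$. The key computation is $\Delta'(s)=-g^{-1}(s)$, so $\Delta$ is concave with unconstrained maximum at $s=g(0)$ (where $g^{-1}(s)=0$); therefore $\max_{s\in[-\epsilon,\epsilon]}\Delta(s)$ is attained among the endpoints $\pm\epsilon$ and the interior critical point $g(0)$, which is precisely condition \eqref{eq:suff nece cond}. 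Combining the branches: if the condition holds the adversary picks an $\alpha$ realizing the flip and attains $\Phi(\tfrac{\gamma_2-\gamma_1}{2\sigma})$; otherwise no $\nu_\alpha$ induces $w=-1$ and the optimum is the best $w=+1$ risk at displacement $s$. Beyond the reduction, the remaining care is bookkeeping: verifying all relevant minimizers stay in the band $[-(u-1),u-1]$ using $u\ge 1$ and $|\gamma_1+\gamma_2|\le 2(u-1)$, confirming the interior candidate $g(0)$ corresponds to a feasible $s\in[-\epsilon,\epsilon]$, and invoking $\delta=\epsilon$ optimality.
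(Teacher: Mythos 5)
Your proposal follows essentially the same route as the paper's proof: your stage (i) is the paper's reduction lemma showing two-point distributions $\nu_\alpha$ suffice, your stage (ii) reproduces its monotonicity analysis of $\Risk(h_{w,b};\mu_c)$ on the two weight branches, and your stage (iii) is its weight-flipping lemma, where your concavity argument via $\Delta'(s)=-g^{-1}(s)$ is the same computation the paper performs in the variable $t=g^{-1}(s)$. The argument is correct and you rightly flag the reduction to the corner distributions as the step needing the most care.
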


\begin{remark}
\label{rem:dataset properties}
The proof of Theorem \ref{thm:1-D gaussian optimality} is given in Appendix \ref{sec:proof thm 1-D gaussian optimality}. We considered same variance of $\sigma_1=\sigma_2=\sigma$ simply for the convenience in proof without involving minor but tedious calculations, and the same conclusion can be made using different variances. Theorem \ref{thm:1-D gaussian optimality} characterizes the exact behavior of $\cA_\opt$ for typical combinations of hyperparameters under the considered model, including distribution related parameters such as $\gamma_1$, $\gamma_2$, $\sigma$ and poisoning related parameters such as $\eps$, $u$. {We use $\epsilon$ directly because the poisoning setup in the 1-D Gaussian distribution satisfies the second condition in \Cref{thm:optimal ratio}. Hence, the best performance can be achieved with the maximum poisoning ratio $\epsilon$.} For $u$, a larger of it suggests the weight-flipping condition \eqref{eq:suff nece cond} is more likely to be satisfied, as an attacker can generate poisoned data with larger hinge loss to flip the weight parameter $\bw$. Class separability $|\gamma_1 - \gamma_2|$ and within-class variance $\sigma$ also play an important role in affecting the optimal attack performance. If the ratio $|\gamma_1 - \gamma_2| / \sigma$ is large, then we know the initial risk $\Risk(h_c;\mu_c) = \Phi(\frac{\gamma_1-\gamma_2}{2\sigma})$ will be small. Consider the case where condition \eqref{eq:suff nece cond} is satisfied. Note that $\Phi(\frac{\gamma_2-\gamma_1}{2\sigma}) = 1 - \Phi(\frac{\gamma_1-\gamma_2}{2\sigma})$ implies an improved performance of optimal poisoning attack thus an higher inherent vulnerabilities to data poisoning attacks. However, it is worth noting that there is an implicit assumption in condition \eqref{eq:suff nece cond} that the weight parameter can be flipped from $w=1$ to $w=-1$. A large value of $|\gamma_1 - \gamma_2| / \sigma$ also implies that flipping the weight parameter becomes more difficult, since the gap between the hinge loss with respect to $\mu_c$ for a hypothesis with $w=-1$ and that with $w=1$ becomes larger. Moreover, if condition \eqref{eq:suff nece cond} cannot be satisfied, then a larger ratio of $|\gamma_1 - \gamma_2| / \sigma$ suggests that it is more difficult to move the decision boundary to incur an increase in test error, because of the number of correctly classified boundary points will increase at a faster rate. In summary, Theorem \ref{thm:1-D gaussian optimality} suggests that a smaller value of $u$ and a larger ratio of $|\gamma_1 - \gamma_2| / \sigma$ increases the inherent robustness to indiscriminate poisoning for typical configurations under our model \eqref{eq:GMM generating process}. Empirical verification of the above theoretical results is given in Appendix \ref{sec:synthetic_exps}.

\end{remark}

\subsection{General Distributions}
\label{sec:general distributions}
Recall that we have identified several key factors (i.e., $u$, $|\gamma_1 - \gamma_2|$ and $\sigma$) for 1-D Gaussian distributions
in Section \ref{sec:1-D Gaussian} which are 
highly related to the performance of an optimal distributional poisoning adversary $\cA_\opt$. 
In this section, we demonstrate how to generalize the definition of these factors to high-dimensional datasets and illustrate how they affect an inherent robustness upper bound to indiscriminate poisoning attacks for linear learners.
In particular, we project the clean distribution $\mu_c$ and the constraint set $\cC$ onto some vector $\bw$, then compute those factors based on the projections.

\begin{definition}[Projected Constraint Size]
\label{def:projected constraint size}
Let $\cC\subseteq\cX\times\cY$ be the constraint set for poisoning. For any $\bm{w}\in \RR^n$, the \emph{projected constraint size} of $\cC$ with respect to $\bm{w}$ is defined as:
    \begin{align*}
    \mathrm{Size}_{\bm{w}}(\cC)  = \max_{(\bm{x},y)\in\cC}\bm{w}^\top \bm{x} - \min_{(\bm{x},y)\in\cC}\bm{w}^\top \bm{x}
    \end{align*}
\end{definition}
According to Definition \ref{def:projected constraint size}, $\mathrm{Size}_{\bm{w}}(\cC)$ characterizes the size of the constraint set $\cC$ when projected onto the (normalized) projection vector $\bm{w}/\|\bm{w}\|_2$ then scaled by $\|\bm{w}\|_2$, the $\ell_2$-norm of $\bw$. In theory, the constraint sets conditioned on $y=-1$ and $y=+1$ can be different, but for simplicity and practical considerations, we simply assume they are the same in the following discussions.

\begin{definition}[Projected Separability and Standard Deviation]
\label{def:proj sep and var}
Let $\cX\subseteq\RR^n$, $\cY = \{-1, +1\}$, and $\mu_c$ be the underlying distribution. Let $\mu_{-}$ and $\mu_{+}$ be the input distributions with labels of $-1$ and $+1$ respectively. For any $\bw\in\RR^n$, the \emph{projected separability} of $\mu_c$ with respect to $\bw$ is defined as:
    \begin{align*}
        \Sep_{\bm{w}}(\mu_c) = \big| \EE_{\bx\sim\mu_{-}}[\bw^\top \bx] - \EE_{\bx\sim\mu_{+}}[\bw^\top \bx] \big|.
    \end{align*}
In addition, the \emph{projected standard deviation} of $\mu_c$ with respect to $\bw$ is defined as:
    \begin{align*}
       \SD_{\bm{w}}(\mu_c)=\sqrt{\Var_{\bm{w}}(\mu_c)},\:  \Var_{\bm{w}}(\mu_c) =  p_{-} \cdot \Var_{\bx\sim\mu_{-}}[\bw^\top \bx] +  p_{+} \cdot \Var_{\bx\sim\mu_{+}}[\bw^\top \bx],
    \end{align*}
where $p_{-} = \Pr_{(\bx,y)\sim\mu_c} [y=-1]$, $p_{+} = \Pr_{(\bx,y)\sim\mu_c} [y=+1]$ denote the sampling probabilities.
\end{definition}

For finite-sample settings, we simply replace the input distributions with their empirical counterparts to compute the sample statistics of $\Sep_{\bw}(\mu_c)$ and $\SD_{\bw}(\mu_c)$. Note that the above definitions are specifically for linear models, but out of curiosity, we also provide initial thoughts on how to extend these metrics to neural networks (see Appendix~\ref{sec:non-linear} for preliminary results). Below, we provide justifications on how the three factors are related to the optimal poisoning attacks. Theorem \ref{thm: general upper bound on optimal risk} and the techniques used in
its proof in Appendix~\ref{sec:general distribution proof} are inspired by the design of Min-Max Attack \cite{steinhardt2017certified}.

\begin{theorem}
\label{thm: general upper bound on optimal risk}
Consider input space $\cX\subseteq\RR^n$, label space $\cY$, clean distribution $\mu_c$ and linear hypothesis class $\cH$. For any $h_{\bw,b}\in\cH$, $\bx\in\cX$ and $y\in\cY$, let $\ell(h_{\bw,b}; \bx, y) = \ell_{\mathrm{M}}(-y(\bw^\top\bx+b))$ be a margin-based loss adopted by the victim, where
$\ell_M$ is convex and non-decreasing. Let $\cC\subseteq\cX\times\cY$ be the constraint set and $\epsilon>0$ be the poisoning budget. Suppose $h_c = \argmin_{h\in\cH} L(h;\mu_c)$ has weight $\bw_c$ and $h_p^*$ is the poisoned model induced by optimal adversary $\cA_\opt$,
then we have
\begin{equation}
\label{eq:general risk upper bound}
\mathrm{Risk}(h_p^*; \mu_c) \leq \min_{h\in\cH} \big[L(h; \mu_c) + \epsilon\cdot L(h; \mu_p^*) \big] \leq L(h_c; \mu_c) + \epsilon \cdot \ell_{\mathrm{M}}(\mathrm{Size}_{\bm{w_c}}(\cC)).
\end{equation}
\end{theorem}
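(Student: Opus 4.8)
The plan is to establish the two inequalities in \eqref{eq:general risk upper bound} separately, treating the second as the substantive one. For the first inequality I would combine two standard facts with the optimality of $h_p^*$. Since $\ell_M$ is a margin-based surrogate, it dominates the $0$--$1$ loss evaluated at the margin $-y(\bw^\top\bx+b)$, so $\Risk(h_p^*;\mu_c)\leq L(h_p^*;\mu_c)$. Because $L(\cdot;\mu_p^*)\geq 0$ and $\epsilon\geq 0$, adding $\epsilon L(h_p^*;\mu_p^*)$ only inflates the right-hand side, and by Theorem \ref{thm:optimal ratio} the optimal attack is attained at $\delta^*=\epsilon$, so $h_p^*=\argmin_{h\in\cH}[L(h;\mu_c)+\epsilon L(h;\mu_p^*)]$. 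Chaining these yields $\Risk(h_p^*;\mu_c)\leq L(h_p^*;\mu_c)+\epsilon L(h_p^*;\mu_p^*)=\min_{h\in\cH}[L(h;\mu_c)+\epsilon L(h;\mu_p^*)]$, which is exactly the first inequality.

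For the second inequality I would upper bound the minimum over $\cH$ by evaluating the objective at the clean minimizer $h_c=h_{\bw_c,b_c}$, giving $\min_{h\in\cH}[L(h;\mu_c)+\epsilon L(h;\mu_p^*)]\leq L(h_c;\mu_c)+\epsilon L(h_c;\mu_p^*)$, so it suffices to show $L(h_c;\mu_p^*)\leq \ell_M(\mathrm{Size}_{\bw_c}(\cC))$. Since $\supp(\mu_p^*)\subseteq\cC$, I would pass to the worst feasible point: $L(h_c;\mu_p^*)=\EE_{(\bx,y)\sim\mu_p^*}[\ell_M(-y(\bw_c^\top\bx+b_c))]\leq \sup_{(\bx,y)\in\cC}\ell_M(-y(\bw_c^\top\bx+b_c))$. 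Writing $M_c=\max_{(\bx,y)\in\cC}\bw_c^\top\bx$ and $m_c=\min_{(\bx,y)\in\cC}\bw_c^\top\bx$ so that $\mathrm{Size}_{\bw_c}(\cC)=M_c-m_c$, the feasible margins are bounded by $M_c+b_c$ (when $y=-1$) and $-m_c-b_c$ (when $y=+1$); as $\ell_M$ is non-decreasing, the supremum equals $\ell_M(\max\{M_c+b_c,\,-m_c-b_c\})$.

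The main obstacle is controlling the bias and concluding $\max\{M_c+b_c,\,-m_c-b_c\}\leq M_c-m_c$, which is equivalent to the decision threshold $-b_c$ lying in the projected constraint range $[m_c,M_c]$. I would derive this from the optimality of $h_c$: the map $f(b):=L(h_{\bw_c,b};\mu_c)$ is convex, since $\ell_M$ is convex and the margin is affine in $b$, and under the (mild, standing) condition $\supp(\mu_c)\subseteq\cC$ every clean projection lies in $[m_c,M_c]$. A threshold outside this interval would classify all clean points identically, and a first-order/subgradient analysis of $f$ at the endpoints shows $f$ can be strictly decreased by moving the threshold back toward the data, so the minimizing $b_c$ satisfies $-b_c\in[m_c,M_c]$. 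Substituting back bounds both feasible margins by $M_c-m_c=\mathrm{Size}_{\bw_c}(\cC)$, and monotonicity of $\ell_M$ gives $L(h_c;\mu_p^*)\leq\ell_M(\mathrm{Size}_{\bw_c}(\cC))$, completing the chain. I expect the delicate step to be making this threshold-containment argument rigorous for a \emph{general} convex non-decreasing $\ell_M$, where the margin offsets (e.g.\ the unit margin of the hinge loss) require care at the endpoints of $[m_c,M_c]$ and where the assumption $\supp(\mu_c)\subseteq\cC$ appears essential.
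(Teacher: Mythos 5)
Your proposal follows essentially the same route as the paper's proof: the same three-step chain $\Risk(h_p^*;\mu_c)\leq L(h_p^*;\mu_c)\leq L(h_p^*;\mu_c)+\epsilon L(h_p^*;\mu_p^*)\leq L(h;\mu_c)+\epsilon L(h;\mu_p^*)$ for the first inequality, and the same evaluation at $h=h_c$ followed by a worst-case bound over $\supp(\mu_p^*)\subseteq\cC$ for the second, reducing everything to the containment $-b_c\in[m_c,M_c]$. The one place you diverge is exactly the step you flag as delicate. The paper does \emph{not} prove that containment: it states the condition $\min_{(\bx,y)\in\cC}\bw_c^\top\bx\leq -b_c\leq\max_{(\bx,y)\in\cC}\bw_c^\top\bx$ explicitly in the proof, remarks that it ``typically holds'' when the clean support lies in $\cC$, and deliberately omits it from the theorem statement for simplicity. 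Your attempt to derive it from first-order optimality of $b_c$ is more ambitious, but your own suspicion is correct that it fails in full generality: for the hinge loss the unit margin offset means the minimizing threshold can sit strictly outside $[m_c,M_c]$ (e.g.\ with a single-class or heavily imbalanced $\mu_c$ the optimizer pushes $-b_c$ a full margin-width past the data), and for a flat region of $\ell_{\mathrm{M}}$ the subgradient argument yields no strict decrease, so the minimizer need not be unique or contained. The honest resolutions are either to add the containment as an explicit hypothesis (as the paper implicitly does) or to weaken the bound to $\ell_{\mathrm{M}}\bigl(\max\{M_c+b_c,\,-m_c-b_c\}\bigr)$. With that condition granted, your argument is complete and matches the paper's; one minor simplification available to you is that the first inequality does not actually need Theorem \ref{thm:optimal ratio}, since by definition $h_p^*$ already minimizes $L(\cdot;\mu_c)+\delta^* L(\cdot;\mu_p^*)$ and the paper's phrasing of the theorem has absorbed $\delta^*=\epsilon$ into the statement.
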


\begin{remark}\label{rem:general upper bound on optimal risk}
Theorem \ref{thm: general upper bound on optimal risk} proves an upper bound on the inherent vulnerability to indiscriminate poisoning for linear learners, which can be regarded as a necessary condition for the optimal poisoning attack.
A smaller upper bound likely suggests a higher inherent robustness to poisoning attacks. 
In particular, the right hand side of \eqref{eq:general risk upper bound} consists of two terms: the clean population loss of $h_c$ and a term related to the projected constraint size. Intuitively, the projected separability and standard deviation metrics highly affect the first term, since data distribution with a higher $\Sep_{\bw_c}(\mu_c)$ and a lower $\SD_{\bw_c}(\mu_c)$ implies a larger averaged margin with respect to $h_c$, which further suggests a smaller $L(h_c;\mu_c)$. The second term is determined by the poisoning budget $\eps$ and the projected constraint size, or more precisely, a larger $\epsilon$ and a larger $\mathrm{Size}_{\bm{w_c}}(\cC)$ indicate a higher upper bound on $\mathrm{Risk}(h_p^*; \mu_c)$. In addition, we set $h=h_c$ and the projection  vector as $\bw_c$ for the last inequality of \eqref{eq:general risk upper bound}, because $h_c$ achieves the smallest population surrogate loss with respect to the clean data distribution $\mu_c$. 
However, choosing $h=h_c$ may not always produce a tighter upper bound on $\mathrm{Risk}(h_p^*; \mu_c)$ since there is no guarantee that the projected constraint size $\mathrm{Size}_{\bm{w_c}}(\cC)$ is small. An interesting future direction is to select a more appropriate projection vector that returns a tight, if not the tightest, upper bound on $\mathrm{Risk}(h_p^*; \mu_c)$ for any clean distribution $\mu_c$ (see \Cref{sec:additional main results} for preliminary experiments). {We also note that the results above apply to any (non-decreasing and strongly-convex) margin-based losses.}

\end{remark}

\section{Experiments}
\label{sec:benchmark_exps}

\begin{table*}[t]
\centering
\scalebox{0.84}{
\begin{tabular}{ccccc@{\hskip 0.6cm}cccc}
\toprule
\multirow{2}{*}{} &  & & \multicolumn{1}{c}{\textbf{Robust}} & & \multicolumn{2}{c}{\textbf{Moderately Vulnerable}} & \multicolumn{2}{c}{\textbf{Highly Vulnerable}} \\
 &  Metric & MNIST 6--9 & MNIST 1--7 & Adult & Dogfish & \multicolumn{1}{c}{MNIST 4--9} & F. Enron & Enron \\ \midrule
\multirow{4}{*}{SVM} & Error Increase & 2.7 & 2.4 & 3.2 & 7.9 & 6.6 & 33.1 & 31.9 \\
 & Base Error & 0.3 & 1.2 & 21.5 & 0.8 & 4.3 & 0.2 & 2.9 \\

& Sep/SD & 6.92 & 6.25 & 9.65  & 5.14  & 4.44  & 1.18 & 1.18  \\
& Sep/Size & 0.24 & 0.23 & 0.33 & 0.05 &  0.14 &  0.01 & 0.01  \\ \midrule

\multirow{4}{*}{LR} & Error Increase & 2.3 & 1.8 & 2.5 & 6.8 & 5.8  & 33.0 & 33.1 \\ 
& Base Error & 0.6 & 2.2 & 20.1 & 1.7 & 5.1 & 0.3 & 2.5 \\

  & Sep/SD & 6.28  & 6.13 & 4.62  & 5.03  & 4.31  & 1.11  & 1.10  \\
 & Sep/Size & 0.27 & 0.27 & 0.27 & 0.09 & 0.16 & 0.01 & 0.01 \\

 \bottomrule
\end{tabular}
}
\caption{Explaining disparate poisoning vulnerability under linear models. The top row for each model gives the increase in error rate due to the poisoning, over the base error rate in the second row. The explanatory metrics are the scaled (projected) separability, standard deviation and constraint size.}
\label{tab:explain_difference}
\vspace{-1.5em}
\end{table*}

Recall from Theorem \ref{thm:connect optimal poisoning} and Remark \ref{rem:consistency} that the finite-sample optimal poisoning attack is a consistent estimator of the distributional one for linear learners. In this section, we demonstrate the theoretical insights gained from Section \ref{sec:theoretical result}, despite proven only for the distributional optimal attacks, still appear to largely explain the empirical performance of best attacks across benchmark datasets. 

Given a clean training data $\cS_c$, we empirically estimate the three distributional metrics defined in Section \ref{sec:general distributions} on the clean test data with respect to the weight $\bm{w}_c$.
Since $\|\bm{w}_c\|_2$ may vary across different datasets while the predictions of linear models (i.e., the classification error) are invariant to the scaling of $\|\bm{w}_c\|_2$, we use ratios to  make their metrics comparable: $\Sep_{\bm{w}_c}(\mu_c)/\SD_{\bm{w}_c}(\mu_c)$ (denoted as Sep/SD in \autoref{tab:explain_difference}) and $\Sep_{\bm{w}_c}(\mu_c)/\mathrm{Size}_{\bm{w_c}}(\cC)$ (Sep/Size).  According to our theoretical results, we expect datasets that are less vulnerable to poisoning have higher values for both metrics. 

{Table \ref{tab:explain_difference} summarizes the results where \emph{Error Increase} is produced by the best attacks from the state-of-the-art attacks mentioned in \Cref{sec:sota_attack_eval}. The Sep/SD and Sep/Size metrics are highly correlated to the poisoning effectiveness (as measured by error increase). Datasets such as MNIST~1--7 and MNIST~6--9 are harder to poison than others, which correlates with their increased separability and reduced impact from the poisoning points. In contrast, datasets such as Enron and Filtered Enron are highly vulnerable and this is strongly correlated to their small separability and the large impact from the admissible poisoning points. 
The results of Filtered Enron (low base error, high increased error) and Adult (high base error, low increased error) demonstrate that poisoning vulnerability, measured by the error increase, cannot be trivially inferred from the initial base error. 

When the base error is small, which is the case for all tested benchmark datasets except Adult, the empirical metrics are highly correlated to the error increase and also the final poisoned error. However, when the base error becomes high as it is for Adult, the empirical metrics are highly correlated to the final poisoned error, but not the error increase, if the metrics are computed on the entire (clean) test data. 
For the error increase, computing the metrics on the clean and correctly classified (by $\bm{w}_c$) test points is more informative as it (roughly) accounts for the (newly) induced misclassification from poisoning. Therefore, we report metrics based on correctly-classified test points in \Cref{tab:explain_difference} and defer results of the whole test data to Appendix \ref{sec:additional main results}. For datasets except Adult, both ways of computing the metrics produce similar results. The Adult dataset is very interesting in that it is robust to poisoning (i.e., small error increase) despite having a very high base error. As a preliminary experiment, we also extended our analysis to neural networks (\Cref{sec:non-linear}) and find the identified factors are correlated to the varied poisoning effectiveness when different datasets are compared under similar learners. 

Our current results only show the correlation of the identified factors to the varied (empirical) poisoning effectiveness, not causality. However, we also speculate that these factors may \emph{cause} different poisoning effectiveness because: 1) results in \Cref{sec:theoretical result} characterize the relationship of these factors to the performance of optimal attacks exactly for 1-D Gaussian distributions and partially for the general distributions, and 2) preliminary results on MNIST 1--7 shows that gradual changes in one factor (with the other fixed) also causes gradual changes in the vulnerability to poisoning (\Cref{sec:causal relation}). We leave the detailed exploration on their causal relationship as future work.

}

\section{Discussion on Future Implications}
\label{sec:implications}
Our results imply future defenses by explaining why candidate defenses work and motivating defenses to improve separability and reduce projected constraint size. We present two ideas---using data filtering might reduce projected constraint size and using better features might improve separability.

\shortsection{Reduced projected constraint size} 
{The popular data sanitization defense works by filtering out bad points. We speculate it works because the defense may be effectively limiting the projected constraint size of $\cC$. To test this, we picked the combination of Sphere and Slab defenses considered in prior works \citep{koh2022stronger,steinhardt2017certified} to protect the vulnerable Enron dataset. We find that with the defense, the test error is increased from 3.2\% to 28.8\% while without the defense the error can be increased from 2.9\% to 34.8\%. Although limited in effectiveness, the reduced \emph{Error Increase} with the defense is highly correlated to the significantly reduced projected constraint size $\mathrm{Size}_{\bm{w}_c}(\cC)$ (and hence a significantly higher Sep/Size)---the Sep/Size metric jumps from 0.01 without defense to 0.11 with defense while Sep/SD remains almost the same at 1.18 with and without defense. Similar conclusions can also be drawn for MNIST 1-7 and Dogfish (detailed experimental results are in Appendix \ref{sec:impact of defense}). }

\shortsection{Better feature representation} 
We consider a transfer learning scenario where the victim trains a linear model on a clean pretrained model. As a preliminary experiment, we train LeNet and ResNet18 
\begin{wrapfigure}{r}{0.55\textwidth}
  \begin{center}
    \includegraphics[width=0.5\textwidth]{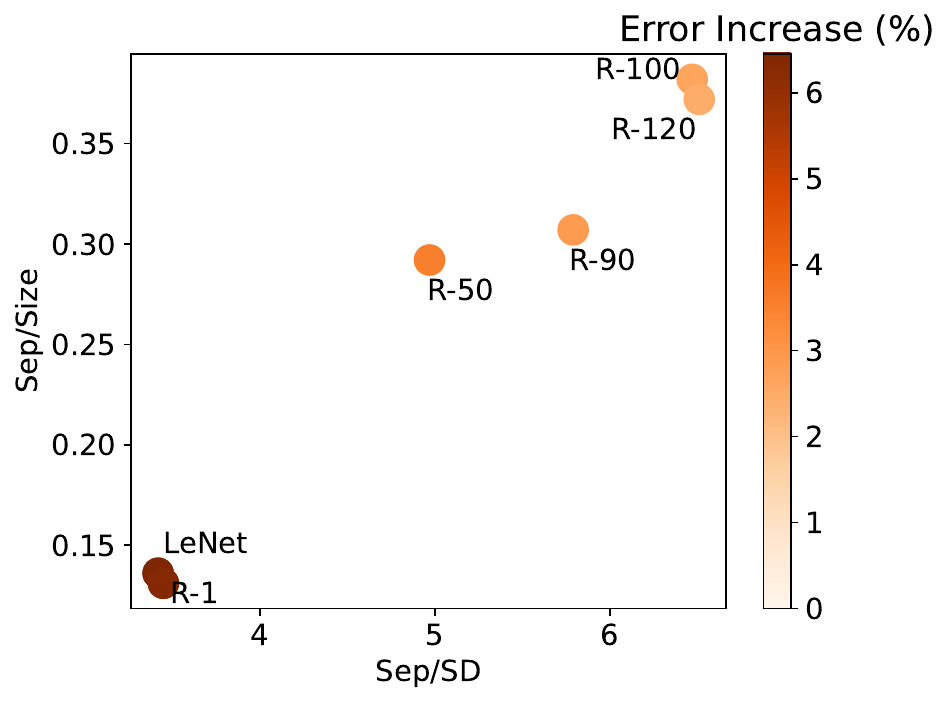}
  \end{center}
  \caption{Improving downstream robustness to poisoning through better feature extractors.}
  \label{fig:feature_implication}
\end{wrapfigure}
models on the CIFAR10 dataset till convergence, but record the intermediate models of ResNet18 to produce models with different feature extractors (R-$X$ denotes ResNet18 trained for $X$ epochs). We then use the feature extraction layers of these models (including LeNet) as the pretrained models and obtain features of CIFAR10 images with labels ``Truck'' and ``Ship'', and train linear models on them.

We evaluate the robustness of this dataset against poisoning attacks and set $\mathcal{C}$ as dimension-wise box-constraints, whose values are the minimum and maximum values of the clean data points for each dimension when fed to the feature extractors. This way of configuring $\mathcal{C}$ corresponds to the practical scenario where the victim has access to some small number of clean samples so that they can deploy a simple defense of filtering out inputs that do not fall into a dimension-wise box constraint that is computed from the available clean samples of the victim. \autoref{fig:feature_implication} shows that as the feature extractor becomes better (using deeper models and training for more epochs), both the Sep/SD and Sep/Size metrics increase, leading to reduced error increase. {Quantitatively, the Pearson correlation coefficient between the error increase and each of the two factors is $-0.98$ (a strong negative correlation). This result demonstrates the possibility that better feature representations free from poisoning might be leveraged to improve downstream resilience against indiscriminate poisoning attacks. We also simulated a more practical scenario where the training of the pretrained models never sees any data (i.e., data from classes of ``Truck" and ``Ship") used in the downstream analysis and the conclusions are similar (\Cref{sec:feature implication exclude}). 

}

\section{Limitations} %
\label{sec:conclusion}

Our work also has several limitations. We only characterize the optimal poisoning attacks for theoretical distributions under linear models.
Even for the linear models, the identified metrics cannot quantify the actual error increase from optimal poisoning attacks, which is an interesting future work, and one possible approach might be to tighten the upper bound in \Cref{thm: general upper bound on optimal risk} using better optimization methods. The metrics identified in this paper are learner dependent, depending on the properties of the learning algorithm, dataset and domain constraints (mainly reflected through $\cC$). In certain applications, one might be interested in understanding the impact of learner agnostic dataset properties on poisoning effectiveness---a desired dataset has such properties that any reasonable learners trained on the dataset can be robust to poisoning attacks. One likely application scenario is when the released data will be used by many different learners in various applications, each of which may be prone to poisoning. We also did not systematically investigate how to compare the vulnerabilities of different datasets under different learning algorithms. Identifying properties specific to the underlying learner that affect the performance of (optimal) data poisoning attacks is challenging but interesting future work.

Although we focus on indiscriminate data poisoning attacks in this paper, we are optimistic that our results will generalize to subpopulation or targeted poisoning settings. In particular, the specific learning task properties identified in this paper may still be highly correlated, but now additional factors related to the relative positions of the subpopulations/individual test samples to the rest of the population under the clean decision boundary will also play important roles.

\section*{Acknowledgements}
We appreciate the comments from the anonymous reviewers. This work was partially funded by awards from the National Science Foundation (NSF) SaTC program (Center for Trustworthy Machine Learning, \#1804603), the 
AI Institute for Agent-based Cyber Threat Intelligence and Operation (ACTION) (\#2229876), NSF \#2323105, and NSF \#2325369. Any opinions, findings and conclusions or recommendations expressed in this material are those of the authors and do not necessarily reflect the views of the National Science Foundation. Fnu Suya also acknowledges the financial support from the Maryland Cybersecurity Center (MC2) at the University of Maryland to attend the conference. 

\bibliographystyle{plain}
\bibliography{references}

\clearpage
\newpage
\appendix

\section{Proofs of Main Results in Section \ref{sec:define opt poison}}
\label{sec:main proofs connect}

\subsection{Proof of Theorem \ref{thm:connect optimal poisoning}}
\label{sec:proof connect optimal poisoning}

We first introduce the formal definitions of strong convexity and Lipschitz continuity conditions with respect to a function, and the uniform convergence property respect to a hypothesis class. These definitions are necessary for the proof of Theorem \ref{thm:connect optimal poisoning}.

\begin{definition}[Strong Convexity] A function $f:\RR^n\rightarrow\RR$ is $b$-strongly convex for some $b>0$, if $f(\bx_1) \geq f(\bx_2) + \nabla f(\bx_2)^\top (\bx_1 - \bx_2) + \frac{b}{2} \|\bx_1 - \bx_2\|_2^2$ for any $\bx_1, \bx_2\in\RR^n$.
\end{definition}

\begin{definition}[Lipschitz Continuity]
A function $f:\RR^n \rightarrow \RR$ is $\rho$-Lipschitz for some $\rho>0$, if $|f(\bx_1) - f(\bx_2)| \leq \rho\|\bx_1 - \bx_2\|_2$ for any $\bx_1, \bx_2\in\RR^n$.
\end{definition}

\begin{definition}[Uniform Convergence] 
\label{def:uniform convergence}
 Let $\cH$ be a hypothesis class. We say that $\cH$ satisfies the \emph{uniform convergence property} with a loss function $\ell$, if there exists a function $m_{\cH}:(0,1)^2\rightarrow \NN$ such that for every $\eps', \delta'\in(0,1)$ and for every probability distribution $\mu$, if $\cS$ is a set of examples with $m\geq m_{\cH}(\eps', \delta')$ samples drawn i.i.d. from $\mu$, then 
 $$
    \PP_{\cS\leftarrow\mu^m} \bigg[\sup_{h\in\cH} \big| L(h; \hat{\mu}_\cS) - L(h; \mu) \big| \leq \eps' \bigg] \geq 1 - \delta'.
 $$
\end{definition}

Such a uniform convergence property, which can be achieved using the VC dimension or the Rademacher complexity of $\cH$, guarantees that the learning rule specified by empirical risk minimization always returns a good hypothesis with high probability \citep{shalev2014understanding}. Similar to PAC learning, the function $m_\cH$ measures the minimal sample complexity requirement that ensures uniform convergence.

Now, we are ready to prove Theorem \ref{thm:connect optimal poisoning}

\begin{proof}[Proof of Theorem \ref{thm:connect optimal poisoning}]
First, we introduce the following notations to simplify the proof. 
For any $\cS_p$, $\mu_p$ and $\delta\geq 0$, let 
\begin{align*}
    \hat{g}(\cS_p, \cS_c) &= \argmin_{h\in\cH}\: \sum_{(\bx,y)\in\cS_c\cup \cS_p} \ell(h;\bx, y), \\
    g(\delta, \mu_p, \mu_c) &= \argmin_{h\in\cH}\:\{L(h; \mu_c) + \delta \cdot L(h;\mu_p)\}.
\end{align*}
According to the definitions of $\hat{h}_p^*$ and $h^*_p$, we know $\hat{h}_p^* = \hat{g}(\cS_p^*, \cS_c)$ and $h_p^* = g(\delta^*, \mu_p^*, \mu_c)$.

Now we are ready to prove Theorem \ref{thm:connect optimal poisoning}.
For any $\cS_c$ sampled from $\mu_c$, consider the empirical loss minimizer $\hat{h}^*_p=\hat{g}(\cS_p^*, \cS_c)$ and the population loss minimizer $g(\delta_{\cS_p^*}, \hat{\mu}_{\cS_p^*}, \mu_c)$, where $\delta_{\cS_p^*} = |\cS_p^*|/|\cS_c|$. Then $\cS^*_p\cup\cS_c$ can be regarded as the i.i.d. sample set from $(\mu_c + \delta_{\cS_p^*} \cdot \hat{\mu}_{\cS_p^*})/(1+\delta_{\cS_p^*})$.
According to Definition \ref{def:uniform convergence}, since $\cH$ satisfies the uniform convergence property with respect to $\ell$, we immediately know that the empirical loss minimization is close to the population loss minimization if the sample size is large enough (see Lemma 4.2 in \cite{shalev2014understanding}). To be more specific, for any $\eps', \delta'\in(0,1)$, if $|\cS_c| \geq m_\cH(\eps',\delta')$, then with probability at least $1-\delta'$, we have
\begin{align*}
     L\big(\hat{g}(\cS_p^*, \cS_c); \mu_c\big) &+ \delta_{\cS_p^*} \cdot L\big(\hat{g}(\cS_p^*, \cS_c);\hat{\mu}_{\cS_p^*}\big)  \leq \argmin_{h\in\cH}\:\{L(h; \mu_c) + \delta_{\cS_p^*} \cdot L(h;\hat{\mu}_{\cS_p^*})\} + 2\eps' \\
    &\quad\quad = L\big(g(\delta_{\cS_p^*}, \hat{\mu}_{\cS_p^*}, \mu_c);\mu_c\big) + \delta_{\cS_p^*} \cdot L\big(g(\delta_{\cS_p^*}, \hat{\mu}_{\cS_p^*}, \mu_c);\hat{\mu}_{\cS_p^*}\big) + 2\eps'.
\end{align*}
In addition, since the surrogate loss $\ell$ is $b$-strongly convex and the population risk is $\rho$-Lipschitz, we further know the clean risk of $\hat{g}(\cS_p^*, \cS_c)$ and $g(\delta_{\cS_p^*}, \hat{\mu}_{\cS_p^*}, \mu_c)$ is guaranteed to be close. Namely, with probability at least $1-\delta'$, we have
\begin{align*}
    \big|\Risk\big(\hat{g}(\cS_p^*, \cS_c); \mu_c\big) - \Risk\big(g(\delta_{\cS_p^*}, \hat{\mu}_{\cS_p^*}, \mu_c); \mu_c\big)\big| &\leq \rho \cdot \big\| \hat{g}(\cS_p^*, \cS_c) - g(\delta_{\cS_p^*}, \hat{\mu}_{\cS_p^*}, \mu_c) \big\|_2 \\
    &\leq 2\rho\sqrt{\frac{\eps'}{b}}.
\end{align*}
Note that $\delta_{\cS_p^*}\in[0,\eps]$ and $\supp(\hat{\mu}_{\cS_p^*})\subseteq\cC$. Thus, according to the definition of  $h_p^* = g(\delta^*, \mu_p^*, \mu_c)$, we further have 
\begin{align}
\label{eq:one side risk}
\nonumber \Risk(h_p^*; \mu_c) \geq \Risk\big(g(\delta_{\cS_p^*}, \hat{\mu}_{\cS_p^*}, \mu_c); \mu_c\big) &\geq \Risk\big(\hat{g}(\cS_p^*, \cS_c); \mu_c\big) - 2\rho\sqrt{\frac{\eps'}{b}} \\
&= \Risk(\hat{h}^*_p; \mu_c) - 2\rho\sqrt{\frac{\eps'}{b}}.
\end{align}
So far, we have proven one direction of the asymptotic for results Theorem \ref{thm:connect optimal poisoning}.

On the other hand, we can always construct a subset $\tilde\cS_p$ with size $|\tilde\cS_p| = \delta^*\cdot|\cS_c|$ by i.i.d. sampling from $\mu_p^*$. Consider the empirical risk minimizer $\hat{g}(\tilde\cS_p, \cS_c)$ and the population risk minimizer $h_p^* = g(\delta^*, \mu_p^*, \mu_c)$. Similarly, since $\cH$ satisfies the uniform convergence property, if $|\cS_c| \geq m_\cH(\eps',\delta')$, then with probability at least $1-\delta'$,we have
\begin{align*}
     L\big(\hat{g}(\tilde\cS_p, \cS_c); \mu_c\big) + \delta^* \cdot L\big(\hat{g}(\tilde\cS_p, \cS_c);\mu_p^*\big)  &\leq \argmin_{h\in\cH}\:\{L(h; \mu_c) + \delta^* \cdot L(h;\mu_p^*)\} + 2\eps' \\
    &= L\big(g(\delta^*, \mu_p^*, \mu_c);\mu_c\big) + \delta^* \cdot L\big(g(\delta^*, \mu_p^*, \mu_c);\mu_p^*\big) + 2\eps'.
\end{align*}
According to the strong convexity of $\ell$ and the Lipschitz continuity of the population risk, we further have
\begin{align*}
    \big|\Risk\big(\hat{g}(\tilde\cS_p, \cS_c); \mu_c\big) - \Risk\big(g(\delta^*, \mu_p^*, \mu_c); \mu_c\big)\big| &\leq \rho \cdot \big\| \hat{g}(\tilde\cS_p, \cS_c) - g(\delta^*, \mu_p^*, \mu_c) \big\|_2 \\
    &\leq 2\rho\sqrt{\frac{\eps'}{b}}.
\end{align*}
Note that $\tilde\cS_p\subseteq\cC$ and $|\tilde\cS_p| = \delta^*\cdot|\cS_c| \leq \eps\cdot|\cS_c|$. Thus according to the definition of $\hat{h}_p^* = \hat{g}(\cS_p^*, \cS_c)$, we have
\begin{align}
\label{eq:second side risk}
\nonumber \Risk(\hat{h}_p^*; \mu_c) \geq \Risk\big(\hat{g}(\tilde\cS_p, \cS_c); \mu_c\big) &\geq \Risk\big(g(\delta^*, \mu_p^*, \mu_c); \mu_c\big) - 2\rho\sqrt{\frac{\eps'}{b}} \\
&= \Risk(h_p^*; \mu_c) - 2\rho\sqrt{\frac{\eps'}{b}}.
\end{align}

Combining \eqref{eq:one side risk} and \eqref{eq:second side risk}, we complete the proof of Theorem \ref{thm:connect optimal poisoning}.
\end{proof}

\subsection{Proof of Theorem \ref{thm:optimal ratio}}
\label{sec:proof optimal ratio}
\begin{proof}[Proof of Theorem \ref{thm:optimal ratio}] 
We prove Theorem \ref{thm:optimal ratio} by construction. 

We start with the first condition $\supp(\mu_c)\subseteq\cC$. Suppose $\delta^* < \epsilon$, since the theorem trivially holds if $\delta^*=\epsilon$. To simplify notations, define $h_p(\delta, \mu_p) = \argmin_{h\in\cH}\:\{L(h; \mu_c) + \delta \cdot L(h;\mu_p)\}$ for any $\delta$ and $\mu_p$. To prove the statement in Theorem \ref{thm:optimal ratio}, it is sufficient to show that there exists some $\mu_p^{(\epsilon)}$ based on the first condition such that 
\begin{align}\label{eq:claim optimal ratio}
    \Risk\big(h_p(\epsilon, \mu_p^{(\epsilon)}); \mu_c\big) = \Risk\big(h_p(\delta^*, \mu_p^*); \mu_c\big), \text{  and  } \supp(\mu_p^{(\epsilon)}) \subseteq \cC.
\end{align}
The above equality means we can always achieve the same maximum risk after poisoning with the full poisoning budget $\epsilon$. To proceed with the proof, we construct $\mu_p^{(\epsilon)}$ based on $\mu_c$ and $\mu_p^*$ as follows:  
\begin{align*}
    \mu_p^{(\epsilon)} &= \frac{\delta^*}{\epsilon}\cdot \mu_p^* + \frac{\epsilon - \delta^*}{\epsilon(1+\delta^*)}\cdot(\mu_c + \delta^*\cdot \mu_p^*) \\
    &= \frac{\epsilon-\delta^*}{\epsilon(1+\delta^*)}\cdot \mu_c + \frac{\delta^*(1+\epsilon)}{\epsilon(1+\delta^*)}\cdot \mu_p^*.
\end{align*}
We can easily check that $\mu_p^{(\epsilon)}$ is a valid probability distribution and $\supp(\mu_p^{(\epsilon)})\subseteq\cC$. In addition, we can show that 
\begin{align*}
    h_p(\epsilon, \mu_p^{(\epsilon)}) &= \argmin_{h\in\cH}\:\{L(h; \mu_c) + \epsilon \cdot L(h;\mu_p^{(\epsilon)})\} \\
    &= \argmin_{h\in\cH}\:\big\{\EE_{(\bx,y)\sim \mu_c} \ell(h; \bx, y) + \epsilon \cdot \EE_{(\bx,y)\sim \mu_p^{(\epsilon)}}\ell(h; \bx, y)\big\} \\
    &= \argmin_{h\in\cH}\:\bigg\{\frac{1+\epsilon}{1+\delta^*} \cdot \big(\EE_{(\bx,y)\sim \mu_c} \ell(h; \bx, y) + \delta^* \cdot \EE_{(\bx,y)\sim \mu_p^*}\ell(h; \bx, y) \big)\bigg\} \\
    & = h_p(\delta^*, \mu_p^*)
\end{align*}
where the third equality holds because of the construction of $\mu_p^{(\epsilon)}$. Therefore, we have proven \eqref{eq:claim optimal ratio}, which further implies the optimal attack performance can always be achieved with $\epsilon$-poisoning as long as the first condition is satisfied.

Next, we turn to the second condition of Theorem \ref{thm:optimal ratio}. Similarly, it is sufficient to construct some $\mu_p^{(\eps)}$ for the setting where $\delta^* < \epsilon$ such that  
\begin{align*}
    \Risk\big(h_p(\epsilon, \mu_p^{(\epsilon)}); \mu_c\big) = \Risk\big(h_p(\delta^*, \mu_p^*); \mu_c\big), \text{  and  } \supp(\mu_p^{(\epsilon)}) \subseteq \cC.
\end{align*}
We construct $\mu_p^{(\eps)}$ based on $\mu_p^*$ and the assumed data distribution $\mu$. More specifically, we construct
\begin{align}
\label{eq:claim remark}
    \mu_p^{(\epsilon)} = \frac{\delta^*}{\epsilon}\cdot \mu_p^* + \frac{\epsilon - \delta^*}{\epsilon}\cdot\mu.
\end{align}
By construction, we know $\mu_p^{(\eps)}$ is a valid probability distribution. In addition, according to the assumption of $\supp(\mu) \subseteq \cC$, we have $\supp(\mu_p^{(\eps)}) \subseteq \cC$. According to the assumption that for any $\bm{\theta}$, there exists a $\mu$ such that $\frac{\partial}{\partial_{\bm\theta}} L(h_{\bm\theta}; \mu) = \bm{0}$, we know for any possible weight parameter $\bm\theta_p^*$ of $h_p(\delta^*, \mu_p^*)$, there also exists a corresponding $\mu$ such that the gradient is $\bm{0}$ and therefore, we have
\begin{align*}
    &\frac{\partial}{\partial_{\bm\theta_p^*}} \big(L(h_p(\delta^*,\mu_p^*); \mu_c) + \epsilon\cdot  L(h_p(\delta^*,\mu_p^*); \mu_p^{(\epsilon)})\big) \\
    & \quad= \frac{\partial}{\partial_{\bm\theta_p^*}} \big(L(h_p(\delta^*,\mu_p^*); \mu_c) + \delta^*\cdot L(h_p(\delta^*,\mu_p^*); \mu_p^*)\big)\\ & \quad = \bm{0}
\end{align*}

where the last equality is based on the first-order optimality condition of $h_p(\delta^*, \mu_p^*)$ for convex losses. For simplicity, we also assumed $h_p(\delta^*,\mu_p^*)$ is obtained by minimizing the loss on $\mu_c+\delta^*\cdot \mu_p^*$ while in the case of $\supp(\mu_c) \nsubseteq \cC$, the victim usually minimizes the loss on $\bar{\mu}_c+\delta^*\cdot \mu_p^*$, where $\bar{\mu}_c$ is the ``truncated'' version of $\mu_c$ such that $\supp(\bar{\mu}_c)\subseteq \cC$. To conclude, we know $h_p(\epsilon, \mu_p^{(\epsilon)}) = h_p(\delta^*, \mu_p^*)$ holds for any possible $h_p(\delta^*, \mu_p^*)$ and we complete the proof of Theorem \ref{thm:optimal ratio}.

\end{proof}

\subsection{Proofs of the Statement about Linear Models in Remark \ref{rem:optimal poisoning ratio}}
\label{sec:proof rem optimal poisoning ratio}

\begin{proof}
We provide the construction of $\mu$ with respect to the second condition of Theorem \ref{thm:optimal ratio} for linear models and hinge loss. Since for any $h_{\bw,b}\in\cH_L$ and any $(\bx,y)\in\cX\times\cY$, we have
$$
\ell(h_{\bw,b}; \bx, y) = \max\{0, 1 - y(\bw^\top\bx+b)\} + \frac{\lambda}{2}\|\bw\|_2^2.
$$
Let $\bm\theta = (\bw, b)$, then the gradient with respect to $\bw$ can be written as:
\begin{equation*}
\frac{\partial}{\partial_{\bw}} \ell(h_{\bw,b};\bx,y) = 
\left\{
\begin{array} {ll}
-y\cdot\bx + \lambda\bw & \text{if $y(\bw^\top\bx+b)\leq 1$, }\\
\lambda \bw & \text{otherwise}.
\end{array} 
\right. 
\end{equation*}
Similarly, the gradient with respect to $b$ can be written as:
\begin{equation*}
\frac{\partial}{\partial_{b}} \ell(h_{\bw,b};\bx,y) = 
\left\{
\begin{array} {ll}
-y & \text{if $y(\bw^\top\bx+b)\leq 1$,}\\
0 & \text{otherwise}.
\end{array} 
\right. 
\end{equation*}
Therefore, for large input space $\cX\times \cY$, we can simply construct $\mu$ by constructing a two-point distribution (with equal probabilities of label +1 and -1) that cancels out each other's gradient (for both $\bw$ and $b$), so that $y(\bw^\top\bx+b) \leq 1$ and $-y\cdot \bx +\lambda \bw = \bm{0}$.

We may also generalize the construction of $\mu$ from linear models with hinge loss to general convex models if the victim minimizes the loss on $\bar{\mu}_c+\delta^*\cdot \mu_p^*$ to obtain $h_p(\delta^*, \mu_p^*)$, which is common in practice. In this case, we can simply set $\mu=\bar{\mu}_c+\delta^*\cdot \mu_p^*$, which guarantees 

$$
 \frac{\partial}{\partial_{\bm\theta_p^*}} L(h_p(\delta^*,\mu_p^*); \mu)=\bm{0}.
$$

\end{proof}

\section{Proofs of Main Results in Section \ref{sec:theoretical result}}
\label{sec:proofs theoretical result}

\subsection{Proof of Theorem \ref{thm:1-D gaussian optimality}}
\label{sec:proof thm 1-D gaussian optimality}

To prove Theorem \ref{thm:1-D gaussian optimality}, we need to make use of the following three auxiliary lemmas, which are related to the maximum population hinge loss with $\bw=1$ (Lemma \ref{lem:maximal loss w=+1}), the weight-flipping condition (Lemma \ref{lem:condition exist w=-1}) and the risk behaviour of any linear hypothesis under \eqref{eq:GMM generating process} (Lemma \ref{lem:risk GMM}).
For the sake of completeness, we present the full statements of Lemma \ref{lem:maximal loss w=+1} and Lemma \ref{lem:condition exist w=-1} as follows. 
In particular, Lemma \ref{lem:maximal loss w=+1}, proven in Appendix \ref{sec:proof lem maximal loss w=+1}, characterizes the maximum achievable hinge loss with respect to the underlying clean distribution $\mu_c$ and some poisoned distribution $\mu_p$ conditioned on $w=1$. 

\begin{lemma}
\label{lem:maximal loss w=+1}
Suppose the underlying clean distribution $\mu_c$ follows the Gaussian mixture model \eqref{eq:GMM generating process} with $p=1/2$ and $\sigma_1 = \sigma_2 = \sigma$. Assume  $|\gamma_1+\gamma_2|\leq 2u$. For any $\epsilon\geq 0$, consider the following maximization problem:
\begin{align}
\label{eq:maximal loss w=+1}
    \max_{\mu_p\in\cQ(u)}\: \big[L(h_{1,b_p}; \mu_c) + \eps\cdot L(h_{1,b_p}; \mu_p)\big],
\end{align}
where $b_p = \argmin_{b\in\RR} [ L(h_{1, b}; \mu_c) + \eps \cdot L(h_{1, b}; \mu_p)]$. There exists some $\alpha\in[0,1]$ such that the optimal value of \eqref{eq:maximal loss w=+1} is achieved with $\mu_p=\nu_\alpha$, where $\nu_\alpha$ is a two-point distribution with some parameter $\alpha\in[0,1]$ defined according to \eqref{eq:constuction optimal poisoning}.
\end{lemma}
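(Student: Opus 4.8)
\textbf{Proof Plan for Lemma \ref{lem:maximal loss w=+1}.}

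The plan is to show that among all feasible poisoned distributions supported on $[-u,u]\times\cY$, the maximizer of the combined hinge loss (with $w=1$ fixed and $b$ optimally chosen by the victim) can be taken to be a two-point distribution placing mass only at the extreme corners $(-u,+1)$ and $(u,-1)$. The key structural observation is that, conditioned on $w=1$, the per-example hinge loss $\ell(h_{1,b};x,y)=\max\{0,1-y(x+b)\}$ is, as a function of the location $x$, maximized at the endpoint of $[-u,u]$ that is \emph{most misaligned} with the label: for a point labeled $y=+1$ the loss $\max\{0,1-(x+b)\}$ is non-increasing in $x$, so it is largest at $x=-u$; for a point labeled $y=-1$ the loss $\max\{0,1+(x+b)\}$ is non-decreasing in $x$, so it is largest at $x=u$. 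This is precisely the placement in the two-point distribution $\nu_\alpha$ of \eqref{eq:constuction optimal poisoning}.

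First I would fix the victim's optimal bias. The difficulty is that $b_p$ depends on $\mu_p$, so one cannot naively push each poisoning point to its loss-maximizing endpoint while holding $b$ fixed. To handle this cleanly I would reformulate the problem as a joint optimization over $(\mu_p,b)$ and exploit a minimax/envelope argument: define $F(\mu_p,b)=L(h_{1,b};\mu_c)+\eps\cdot L(h_{1,b};\mu_p)$, and note that the objective in \eqref{eq:maximal loss w=+1} equals $\max_{\mu_p}\min_b F(\mu_p,b)$. For any fixed $b$, the inner structure shows $\mu_p\mapsto F(\mu_p,b)$ is linear in $\mu_p$ (it is an expectation of a fixed loss function under $\mu_p$), so its maximum over the convex feasible set $\cQ(u)$ is attained at an extreme point, i.e.\ a Dirac mass at an endpoint. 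Since the loss is piecewise linear in $x$ with the monotonicity described above, the supporting points collapse to $x\in\{-u,u\}$ with the labels chosen to maximize loss, giving exactly the family $\nu_\alpha$ parameterized by the mixing weight $\alpha$ on $(-u,+1)$.

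The main obstacle is justifying the exchange of the order of optimization and confirming that the endpoint-supported maximizer survives after the victim re-optimizes $b$. Concretely, I would argue that it suffices to restrict $\mu_p$ to the two-point family \emph{before} taking the minimum over $b$: for any feasible $\mu_p$ and the associated $b_p$, I can construct a two-point $\nu_\alpha$ (by moving the $y=+1$ mass to $-u$ and the $y=-1$ mass to $u$, preserving total mass of each label) whose loss at the \emph{same} bias $b_p$ is at least as large, hence $\min_b F(\nu_\alpha,b)\le F(\nu_\alpha,b_p)$ does not immediately help---so instead I would compare $\max_{\mu_p}\min_b$ directly by showing the value is unchanged when the outer feasible set is shrunk to the two-point family, using that the two-point family already contains the extreme points relevant to every fixed-$b$ inner problem. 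The condition $|\gamma_1+\gamma_2|\le 2u$ enters here to guarantee that the optimal bias $b_p$ stays in a regime where the relevant portions of the clean Gaussians are engaged by the hinge (i.e.\ the endpoints $\pm u$ genuinely sit at the ends of the active loss region), so that the endpoint placement is truly optimal rather than dominated by an interior point. Once the reduction to $\nu_\alpha$ is established, existence of the optimal $\alpha\in[0,1]$ follows from compactness of $[0,1]$ and continuity of the resulting one-dimensional objective in $\alpha$.
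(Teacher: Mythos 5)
You correctly identify the target reduction (push poisoned mass to the corners $(-u,+1)$ and $(u,-1)$) and, importantly, you correctly flag the central difficulty: the victim's bias $b_p$ depends on $\mu_p$, so a pointwise "move each point to its loss-maximizing endpoint" argument at a fixed $b$ proves nothing about the max--min objective. But your proposed resolution does not close this gap. The minimax/extreme-point route shows that for each \emph{fixed} $b$ the inner maximization over $\mu_p$ is attained on the two-point family; even if you justify $\max_{\mu_p}\min_b F=\min_b\max_{\mu_p}F$ (Sion, after compactifying $b$ via coercivity of the clean loss), a maximizer $\mu^*$ of $F(\cdot,b^*)$ at the min--max optimal $b^*$ need not satisfy $b^*\in\argmin_b F(\mu^*,b)$, so you only get $\min_b F(\mu^*,b)\leq$ the optimal value, not equality. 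Your fallback sentence ("showing the value is unchanged when the outer feasible set is shrunk to the two-point family, using that the two-point family already contains the extreme points relevant to every fixed-$b$ inner problem") restates the desired conclusion rather than proving it. Moreover, the concrete transport you sketch --- move all $y=+1$ mass to $-u$ and all $y=-1$ mass to $u$, "preserving total mass of each label" --- is label-based and would in general change $\partial_b L(h_{1,\cdot};\mu_p)$ at $b_p$ (e.g.\ a correctly classified point with zero loss derivative becomes a point with derivative $-1$ after being moved to $(-u,+1)$), so $b_p$ would no longer be the victim's best response and the comparison breaks.

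The paper's proof supplies exactly the missing ingredient: a \emph{gradient-preserving} transport. It first shows every achievable bias $b_p$ satisfies $\partial_b L(h_{1,b_p};\mu_c)\in[-\eps,\eps]$ and is realized by $\nu_\alpha$ with $\alpha=\tfrac12+\tfrac{1}{2\eps}\partial_b L(h_{1,b_p};\mu_c)$ (using $\partial_b L(h_{1,b};\nu_\alpha)=1-2\alpha$). Then, for any competitor $\mu_p$ inducing the same $b_p$, it partitions the poisoned mass by the \emph{sign of the per-example loss derivative at $b_p$} (probabilities $p_1$, $p_2$, $1-p_1-p_2$ for derivative $-1$, $+1$, $0$), sends the $-1$ mass to $(-u,+1)$, the $+1$ mass to $(u,-1)$, and splits the zero-derivative mass equally between the two corners. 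This keeps the first-order optimality condition at $b_p$ intact (so $b_p$ remains the victim's minimizer for $\nu_\alpha$) while weakly increasing every individual hinge loss (here is where $|\gamma_1+\gamma_2|\leq 2u$ is used, to place $b_c^*(1)\in[-u,u]$), hence $\min_b F(\nu_\alpha,b)=F(\nu_\alpha,b_p)\geq F(\mu_p,b_p)=\min_b F(\mu_p,b)$. You would need to add this (or an equivalent) argument for your proposal to constitute a proof.
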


Lemma \ref{lem:maximal loss w=+1} suggests that it is sufficient to study the extreme two-point distributions $\nu_\alpha$ with $\alpha\in[0,1]$ to understand the maximum achievable population hinge loss conditioned on $w=1$. Lemma \ref{lem:condition exist w=-1}, proven in Appendix \ref{sec:proof thm condition exist w=-1}, characterizes the sufficient and necessary conditions in terms of $\epsilon$, $u$ and $\mu_c$, under which there exists a linear hypothesis with $w=-1$ that achieves the minimal value of population hinge loss with respect to $\mu_c$ and some $\mu_p$.

\begin{lemma}
\label{lem:condition exist w=-1}
Suppose the underlying clean distribution $\mu_c$ follows the Gaussian mixture model \eqref{eq:GMM generating process} with $p=1/2$ and $\sigma_1 = \sigma_2 = \sigma$. Assume $|\gamma_1+\gamma_2|\leq 2(u-1)$ for some $u\geq 1$. Let $g$ be an auxiliary function such that for any $b\in\RR$,
$$
    g(b) = \frac{1}{2} \Phi\bigg(\frac{b+\gamma_1+1}{\sigma}\bigg) - \frac{1}{2}\Phi\bigg(\frac{-b-\gamma_2+1}{\sigma}\bigg),
$$
where $\Phi$ is the CDF of standard Gaussian.
For any $\epsilon>0$, there exists some $\mu_p\in\cQ(u)$ such that $\argmin_{h_{w,b}\in\cH_\mL}[ L(h_{w, b}; \mu_c) + \eps \cdot L(h_{w, b}; \mu_p)]$ outputs a hypothesis with $w=-1$, if and only if 
\begin{align*}
    \max\{\Delta(-\eps), \Delta(g(0)), \Delta(\eps)\} \geq 0,
\end{align*}
where $\Delta(s) = L(h_{1, g^{-1}(s)}; \mu_c) - \min_{b\in\RR} L(h_{-1, b};\mu_c) + \eps(1+u)-s\cdot g^{-1}(s)$, and $g^{-1}$ denotes the inverse of $g$.
\end{lemma}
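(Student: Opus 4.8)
The plan is to establish both directions of the equivalence by reducing the search over all feasible poisoned distributions to the one-parameter family $\nu_\alpha$, computing the two competing objective values in closed form, and then analyzing a scalar concave function. Since the weight is restricted to $w\in\{-1,+1\}$, the loss minimizer has $w=-1$ precisely when the best attainable $w=-1$ objective does not exceed the best attainable $w=+1$ objective. Writing $V_{+1}(\mu_p)=\min_{b}\big[L(h_{1,b};\mu_c)+\eps L(h_{1,b};\mu_p)\big]$ and $V_{-1}(\mu_p)=\min_{b}\big[L(h_{-1,b};\mu_c)+\eps L(h_{-1,b};\mu_p)\big]$, weight flipping is achievable for some $\mu_p\in\cQ(u)$ iff $\sup_{\mu_p\in\cQ(u)}\big[V_{+1}(\mu_p)-V_{-1}(\mu_p)\big]\ge 0$. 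The first step is to argue this supremum is attained within the two-point family: Lemma~\ref{lem:maximal loss w=+1} already shows $V_{+1}$ is maximized by some $\nu_\alpha$, and I will separately observe that every $\nu_\alpha$ simultaneously \emph{minimizes} $V_{-1}$, because its atoms $(-u,+1)$ and $(u,-1)$ can be made to incur zero $w=-1$ hinge loss. Hence it suffices to optimize over the scalar $\alpha\in[0,1]$.

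The second step obtains closed forms for $V_{\pm1}(\nu_\alpha)$. A direct computation shows $g=\tfrac{\partial}{\partial b}L(h_{1,b};\mu_c)$, so $g$ is strictly increasing with a well-defined inverse on its range. For $w=+1$, in the regime where both atoms stay on the active branch of the hinge, the poison term equals $\eps\big[(1+u)+b(1-2\alpha)\big]$, so the poisoned $w=+1$ objective has derivative $g(b)-s$ with $s:=\eps(2\alpha-1)\in[-\eps,\eps]$; setting this to zero gives the optimal bias $b_p=g^{-1}(s)$ and, after substitution, $V_{+1}(\nu_\alpha)=L(h_{1,g^{-1}(s)};\mu_c)+\eps(1+u)-s\,g^{-1}(s)$. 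For $w=-1$, the clean objective is minimized at $b^\star_{-1}=\tfrac{\gamma_1+\gamma_2}{2}$, and the hypotheses $u\ge1$ and $|\gamma_1+\gamma_2|\le2(u-1)$ place $b^\star_{-1}$ inside the band $[1-u,\,u-1]$ on which both atoms have zero $w=-1$ loss; since the poison loss is nonnegative and vanishes there, $V_{-1}(\nu_\alpha)=\min_b L(h_{-1,b};\mu_c)$, independent of $\alpha$. Combining the two gives $V_{+1}(\nu_\alpha)-V_{-1}(\nu_\alpha)=\Delta(s)$, so flipping is achievable iff $\max_{s\in[-\eps,\eps]}\Delta(s)\ge0$.

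The third step reduces this scalar maximization to the three candidate evaluations. Writing $\beta(s)=g^{-1}(s)$ and using both $\tfrac{\partial}{\partial b}L(h_{1,b};\mu_c)=g(b)$ and $g(\beta(s))=s$, the terms involving $\beta'(s)$ cancel and the chain rule gives $\Delta'(s)=-g^{-1}(s)$, hence $\Delta''(s)=-1/g'\big(g^{-1}(s)\big)<0$. Thus $\Delta$ is strictly concave with its unconstrained maximizer where $g^{-1}(s)=0$, i.e. at $s=g(0)$; concavity then places $\max_{s\in[-\eps,\eps]}\Delta(s)$ either at this interior critical point (when $g(0)\in[-\eps,\eps]$) or at the nearer endpoint, collapsing the check to $\max\{\Delta(-\eps),\Delta(g(0)),\Delta(\eps)\}\ge0$. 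The backward direction is then immediate: any flipping $\mu_p$ satisfies $\min_b L(h_{-1,b};\mu_c)\le V_{-1}(\mu_p)\le V_{+1}(\mu_p)\le\max_{\mu_p'}V_{+1}(\mu_p')=\max_\alpha V_{+1}(\nu_\alpha)$, which rearranges to the condition; the forward direction picks the maximizing $\alpha^\star$ and verifies $V_{+1}(\nu_{\alpha^\star})\ge V_{-1}(\nu_{\alpha^\star})$.

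I expect the obstacles to be the structural reductions rather than the calculus. The first is justifying that a single $\nu_\alpha$ can simultaneously maximize $V_{+1}$ and minimize $V_{-1}$, where Lemma~\ref{lem:maximal loss w=+1} must be combined with the geometry of $[-u,u]$. The second, more delicate, is confirming that the closed forms genuinely hold: one must verify the optimal $w=+1$ bias $g^{-1}(s)$ stays inside $\big(-(1+u),\,1+u\big)$ for all $s\in[-\eps,\eps]$ so both atoms remain on the active hinge branch, and that $b^\star_{-1}$ stays in the zero-loss band, both relying essentially on $u\ge1$ and $|\gamma_1+\gamma_2|\le2(u-1)$. Finally, the case $g(0)\notin[-\eps,\eps]$ needs care, since there $\Delta(g(0))$ is dominated by the feasible endpoint value; I would phrase the reduction so that the operative maximizer is the critical point clipped to $[-\eps,\eps]$.
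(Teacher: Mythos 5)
Your proposal is correct and follows essentially the same route as the paper's proof: reduce to the two-point family $\nu_\alpha$ (maximizing the $w=+1$ objective via Lemma~\ref{lem:maximal loss w=+1} while observing that every $\nu_\alpha$ leaves the $w=-1$ minimum at its clean value $\min_b L(h_{-1,b};\mu_c)$ because both atoms incur zero loss on the band $[1-u,u-1]$), derive $V_{+1}(\nu_\alpha)-V_{-1}(\nu_\alpha)=\Delta(s)$ with $s=\eps(2\alpha-1)$, and use concavity of $\Delta$ (via $\Delta'(s)=-g^{-1}(s)$) to collapse the check to the critical point $s=g(0)$ and the endpoints $\pm\eps$. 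One minor slip in your closing remark: when $g(0)\notin[-\eps,\eps]$, the infeasible value $\Delta(g(0))$ \emph{dominates} (rather than is dominated by) the feasible endpoint values, which is precisely why the operative condition must use the maximizer clipped to $[-\eps,\eps]$ as you propose --- the paper's own proof handles this with the same two-case split.
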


Lemma \ref{lem:condition exist w=-1} identifies sufficient and necessary conditions when a linear hypothesis with flipped weight parameter is possible. Note that we assume $\gamma_1\leq \gamma_2$, thus flipping the weight parameter of the induced model from $w=1$ to $w=-1$ is always favorable from an attacker's perspective. In particular, if the population hinge loss with respect to $\mu_c$ and some $\mu_p$ achieved by the loss minimizer conditioned on $w=1$ is higher than that achieved by the loss minimizer with $w=-1$, then we immediately know that flipping the weight parameter is possible, which further suggests the optimal poisoning attack performance must be achieved by some poisoned victim model with $w=-1$.

Finally, we introduce Lemma \ref{lem:risk GMM}, proven in Appendix \ref{sec:proof lem risk GMM}, which characterizes the risk behavior of any linear hypothesis with respect to the assumed Gaussian mixture model \eqref{eq:GMM generating process}.

\begin{lemma}
\label{lem:risk GMM}
Let $\mu_c$ be the clean data distribution, where each example is sampled i.i.d. according to the data generating process specified in \eqref{eq:GMM generating process}. For any linear hypothesis $h_{w,b}\in\cH_{\mathrm{L}}$, we have
\begin{align*}
    \Risk(h_{w,b}; \mu_c) = p\cdot \Phi\bigg(\frac{b+w\cdot\gamma_1}{\sigma_1}\bigg) + (1-p)\cdot\Phi\bigg(\frac{-b-w\cdot\gamma_2}{\sigma_2}\bigg),
\end{align*}
where $\Phi$ denotes the CDF of standard Gaussian distribution $\cN(0,1)$.
\end{lemma}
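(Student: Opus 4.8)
The plan is to compute the risk directly from its definition as a misclassification probability, exploiting the two-component structure of the mixture. Since $\Risk(h_{w,b};\mu_c) = \PP_{(x,y)\sim\mu_c}[\sgn(wx+b)\neq y]$, I would first condition on the label. By the generating process \eqref{eq:GMM generating process}, with probability $p$ we have $y=-1$ and $x\sim\cN(\gamma_1,\sigma_1^2)$, and with probability $1-p$ we have $y=+1$ and $x\sim\cN(\gamma_2,\sigma_2^2)$. The law of total probability then gives
\begin{align*}
\Risk(h_{w,b};\mu_c) = p\cdot\PP_{x\sim\cN(\gamma_1,\sigma_1^2)}\big[\sgn(wx+b)=+1\big] + (1-p)\cdot\PP_{x\sim\cN(\gamma_2,\sigma_2^2)}\big[\sgn(wx+b)=-1\big],
\end{align*}
reducing the problem to evaluating two one-dimensional Gaussian tail probabilities.

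Next, I would evaluate each conditional error probability in closed form. The key observation is that the decision statistic $wx+b$ is itself Gaussian: since $w\in\{-1,+1\}$ we have $w^2=1$, so if $x\sim\cN(\gamma_i,\sigma_i^2)$ then $wx+b\sim\cN(w\gamma_i+b,\sigma_i^2)$, because multiplying by $w$ only possibly reflects the mean while preserving the variance. For the $y=-1$ component the error event is $\{wx+b>0\}$, whose probability equals $\Phi\big((b+w\gamma_1)/\sigma_1\big)$ after standardizing; for the $y=+1$ component the error event is $\{wx+b<0\}$, whose probability equals $\Phi\big((-b-w\gamma_2)/\sigma_2\big)$. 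Substituting these two expressions into the decomposition above yields exactly the claimed formula.

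The only delicate point, rather than a genuine obstacle, is the behavior of $\sgn$ on the boundary set $\{wx+b=0\}$. I would note that this event has probability zero under any non-degenerate Gaussian component (since $\sigma_i>0$), so the precise tie-breaking convention for $\sgn$ at $0$ is irrelevant and the computed tail probabilities are unaffected. No concentration, convexity, or distributional-approximation machinery is needed here; I expect the lemma to follow purely from conditioning on the label and standardizing the two Gaussian components, making it the most elementary of the auxiliary results and a clean building block for the subsequent risk computations.
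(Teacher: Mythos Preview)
Your proposal is correct and follows essentially the same approach as the paper: condition on the label, identify the misclassification region for each Gaussian component, and evaluate the resulting tail probabilities via standardization. The only cosmetic difference is that the paper splits explicitly into the cases $w=1$ and $w=-1$ and integrates the density over each misclassification half-line, whereas you handle both cases at once by noting $w^2=1$ so that $wx+b\sim\cN(w\gamma_i+b,\sigma_i^2)$; this is a mild streamlining of the same argument.
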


Now we are ready to prove Theorem \ref{thm:1-D gaussian optimality} using Lemmas \ref{lem:maximal loss w=+1}, \ref{lem:condition exist w=-1} and \ref{lem:risk GMM}.

\begin{proof}[Proof of Theorem \ref{thm:1-D gaussian optimality}]
According to Theorem \ref{thm:optimal ratio} and Remark \ref{rem:optimal poisoning ratio}, we note that the optimal poisoning performance in Definition \ref{def:optimal poisoning attack} is always achieved with $\delta = \eps$. Therefore, we will only consider $\delta=\eps$ in the following discussions.

Since the optimal poisoning performance is defined with respect to clean risk, it will be useful to understand the properties of $\Risk(h_{w,b}; \mu_c)$ such as monotonicity and range. According to Lemma \ref{lem:risk GMM}, for any $h_{w,b}\in\cH_\mL$, we have 
\begin{align*}
    \Risk(h_{w,b}; \mu_c) =  \frac{1}{2}\Phi\bigg(\frac{b+w\cdot\gamma_1}{\sigma}\bigg) +\frac{1}{2}\Phi\bigg(\frac{-b-w\cdot\gamma_2}{\sigma}\bigg).
\end{align*}
To understand the monotonicity of risk, we compute its derivative with respect to $b$:
\begin{align*}
    \frac{\partial}{\partial b}\Risk(h_{w,b}; \mu_c) = \frac{1}{2\sigma\sqrt{2\pi}}\bigg[\exp\bigg(-\frac{(b+w\cdot\gamma_1)^2}{2\sigma^2}\bigg) - \exp\bigg(-\frac{(b+w\cdot\gamma_2)^2}{2\sigma^2}\bigg)\bigg].
\end{align*}
If $w=1$, then $\Risk(h_{w,b}; \mu_c)$ is monotonically decreasing when $b\in(-\infty, -\frac{\gamma_1+\gamma_2}{2})$ and monotonically increasing when $b\in(-\frac{\gamma_1+\gamma_2}{2}, \infty)$, suggesting that minimum is achieved at $b=-\frac{\gamma_1+\gamma_2}{2}$ and maximum is achieved when $b$ goes to infinity. To be more specific, $\Risk(h_{1,b}; \mu_c) \in [\Phi(\frac{\gamma_1-\gamma_2}{2\sigma}), \frac{1}{2}]$. On the other hand, if $w=-1$, then $\Risk(h_{w,b}; \mu_c)$ is monotonically increasing when $b\in(-\infty, \frac{\gamma_1+\gamma_2}{2})$ and monotonically decreasing when $b\in(\frac{\gamma_1+\gamma_2}{2}, \infty)$, suggesting that maximum is achieved at $b=\frac{\gamma_1+\gamma_2}{2}$ and minimum is achieved when $b$ goes to infinity. Thus, $\Risk(h_{-1,b}; \mu_c) \in [\frac{1}{2}, \Phi(\frac{\gamma_2-\gamma_1}{2\sigma})]$. 

Based on the monotonicity analysis of $\Risk(h_{w,b}; \mu_c)$, we have the following two observations:
\begin{enumerate}
\item If there exists some feasible $\mu_p$ such that $h_{-1, b_p} = \argmin_{h\in\cH_\mL} \{L(h; \mu_c) + \eps L(h; \mu_p)\}$
can be achieved, then the optimal poisoning performance is achieved with $w=-1$ and $b$ close to $\frac{\gamma_1+\gamma_2}{2}$ as much as possible.
\item If there does not exist any feasible $\mu_p$ that induces $h_{-1, b_p}$ by minimizing the population hinge loss, then the optimal poisoning performance is achieved with  $w=1$ and $b$ far from $-\frac{\gamma_1+\gamma_2}{2}$ as much as possible (conditioned that the variance $\sigma$ is the same for the two classes).
\end{enumerate}

Recall that we prove in Lemma \ref{lem:condition exist w=-1} specifies a sufficient and necessary condition for the existence of such $h_{-1,b_p}$, which is equivalent to the condition \eqref{eq:suff nece cond} presented in Lemma \ref{lem:condition exist w=-1}. Note that according to Lemma \ref{lem:loss GMM}, $b = \frac{\gamma_1+\gamma_2}{2}$ also yields the population loss minimizer with respect to $\mu_c$ conditioned on $w=-1$. Thus, if condition \eqref{eq:suff nece cond} is satisfied, then we know there exists some $\alpha\in[0,1]$ such that the optimal poisoning performance can be achieved with $\mu_p = \nu_\alpha$. This follows from the assumption $|\gamma_1+\gamma_2|\leq 2(u-1)$, which suggests that for any $(x,y)\sim\nu_\alpha$, the individual hinge loss at $(x,y)$ will be zero. In addition, we know that the poisoned hypothesis induced by $\cA_\opt$ is $h_{-1, \frac{\gamma_1+\gamma_2}{2}}$, which maximizes risk with respect to $\mu_c$.

On the other hand, if condition \eqref{eq:suff nece cond} is not satisfied, we know that the poisoned hypothesis induced by any feasible $\mu_p$ has weight parameter $w=1$. Based on our second observation, this further suggests that the optimal poisoning performance will always be achieved with either $\mu_p = \nu_0$ or $\mu_p = \nu_1$. According to the first-order optimality condition and Lemma \ref{lem:loss GMM}, we can compute the closed-form solution regarding the optimal poisoning performance. Thus, we complete the proof.
\end{proof}

\subsection{Proof of Theorem \ref{thm: general upper bound on optimal risk}}
\label{sec:general distribution proof}

\begin{proof}[Proof of Theorem \ref{thm: general upper bound on optimal risk}]
Consider linear hypothesis class $\cH$ and the poisoned distribution $\mu_p^*$ generated by the optimal poisoning adversary $\cA_{\opt}$
in Definition \ref{def:f-s optimal poisoning}. Given clean distribution $\mu_c$, poisoning ratio $\eps$ and constraint set $\mathcal{C}$, the inherent vulnerability to indiscriminate poisoning is captured by the optimal attack performance
$\textrm{Risk}(h_p^{*};\mu_c)$, where $h_p^{*}$ 
denotes the poisoned linear model induced by $\mu_p^*$. For any  $h\in\mathcal{H}$, we have 

\begin{equation}
\label{eq:min max ineq}
\mathrm{Risk}(h_p^*; \mu_c) \leq L(h_p^*; \mu_c)\leq L(h_p^*; \mu_c) + \epsilon\cdot L(h_p^*; \mu_p^*) \leq L(h; \mu_c) + \epsilon\cdot L(h; \mu_p^*)
\end{equation}

where the first inequality holds because the surrogate loss is defined to be not smaller than the 0-1 loss, the second inequality holds because the surrogate loss is always non-negative, and the third inequality holds because $h_p^{*}$
minimizes the population loss with respect to both clean distribution $\mu$ and optimally generated poisoned distribution
$\mu_{p}^{*}$. Consider $h_c=\argmin_{h\in\mathcal{H}}L(h;\mu_c)$ (with weight parameter $\bw_c$ and bias parameter $b_c$), which is the linear model learned from the clean data. Therefore, plugging $h = h_c$ into the right hand side of \eqref{eq:min max ineq}, we further obtain
\begin{equation}
\label{eq:proj size}
\mathrm{Risk}(h_p^*; \mu_c) \leq L(h_c; \mu_c) + \epsilon\cdot L(h_c; \mu_p^*) \leq L(h_c; \mu_c) + \epsilon \cdot \ell_{\mathrm{M}}(\mathrm{Size}_{\bm{w_c}}(\cC)),
\end{equation}   
where the last inequality holds because for any poisoned data point $(\bx, y)\sim\mu_p^*$, the surrogate loss at $(\bx, y)$ with respect to $h_c$ is $\ell_{\mathrm{M}}\big(y\cdot (\bw_c^\top\bx + b_c)\big)$, and $y\cdot (\bw_c^\top\bx + b_c) \leq \max_{(\bm{x},y)\in\cC}|\bw_c^\top\bx + b_c|$. Under the condition that $\min_{(\bm{x},y)\in\cC}\bm{w_c}^\top \bm{x} \leq -b_c \leq \max_{(\bm{x},y)\in\cC}\bm{w_c}^\top \bm{x}$ which means the decision boundary of $h_c$ falls into the constraint set $\cC$ when projected on to the direction of $\bw_c$, we further have $\max_{(\bm{x},y)\in\cC}|\bw_c^\top\bx + b_c| \leq \mathrm{Size}_{\bm{w_c}}(\cC)$, which implies the validity of \eqref{eq:proj size}. We remark that the condition $\min_{(\bm{x},y)\in\cC}\bm{w_c}^\top \bm{x} \leq -b_c \leq \max_{(\bm{x},y)\in\cC}\bm{w_c}^\top \bm{x}$ typically holds for margin-based loss in practice, since the support of the clean training data belongs to the constraint set for poisoning inputs (for either undefended victim models or models that employ some unsupervised data sanitization defense). Therefore, we leave this condition out in the statement of Theorem \ref{thm: general upper bound on optimal risk} for simplicity.
\end{proof}

\section{Proofs of Technical Lemmas used in Appendix \ref{sec:proof thm 1-D gaussian optimality}}

\subsection{Proof of Lemma \ref{lem:maximal loss w=+1}}
\label{sec:proof lem maximal loss w=+1}

To prove Lemma \ref{lem:maximal loss w=+1}, we need to make use of the following general lemma which characterizes the population hinge loss and its derivative with respect to clean data distribution $\mu_c$. For the sake of completeness, we provide the proof of Lemma \ref{lem:loss GMM} in Appendix \ref{sec:proof lem loss GMM}. 

\begin{lemma}
\label{lem:loss GMM}
Let $\mu_c$ be data distribution generated according to \eqref{eq:GMM generating process}. For any $h_{w,b}\in\cH_{\mathrm{L}}$, the population hinge loss is:
\begin{align*}
    L(h_{w,b};\mu_c) &= p\int_{\frac{-b-w\cdot\gamma_1-1}{\sigma_1}}^{\infty} (b+w\cdot\gamma_1+1+\sigma_1 z)\cdot\varphi(z) dz \\
    &\qquad+ (1-p)\int^{\frac{-b-w\cdot\gamma_2+1}{\sigma_2}}_{-\infty} (-b-w\cdot\gamma_2+1-\sigma_2 z)\cdot\varphi(z) dz,
\end{align*}
and its gradient with respect to $b$ is:
\begin{align*}
    \frac{\partial}{\partial b}L(h_{w,b};\mu_c) = p\cdot \Phi\bigg(\frac{b+w\cdot\gamma_1+1}{\sigma_1}\bigg) - (1-p)\cdot\Phi\bigg(\frac{-b-w\cdot\gamma_2+1}{\sigma_2}\bigg),
\end{align*}
where $\varphi$ and $\Phi$ denote the PDF and CDF of standard Gaussian distribution $\cN(0,1)$, respectively.
\end{lemma}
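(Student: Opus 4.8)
The plan is to compute the population hinge loss by conditioning on the label $y$, reduce each conditional expectation to a standard-Gaussian integral via the standardization $z=(x-\gamma)/\sigma$, and then differentiate under the integral sign, exploiting the fact that the hinge loss vanishes exactly at the integration boundary. First I would split $L(h_{w,b};\mu_c)$ over the two mixture components of \eqref{eq:GMM generating process}. Since $\lambda=0$ here, the individual loss conditioned on $y=-1$ is $\max\{0,\,1+wx+b\}$ and conditioned on $y=+1$ it is $\max\{0,\,1-wx-b\}$, so
\[
L(h_{w,b};\mu_c) = p\,\EE_{x\sim\cN(\gamma_1,\sigma_1^2)}\big[\max\{0,1+wx+b\}\big] + (1-p)\,\EE_{x\sim\cN(\gamma_2,\sigma_2^2)}\big[\max\{0,1-wx-b\}\big].
\]

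For the first term I would substitute $z=(x-\gamma_1)/\sigma_1\sim\cN(0,1)$, so the hinge argument becomes $(b+w\gamma_1+1)+w\sigma_1 z$. The key step is to eliminate the sign of $w$: because $w\in\{-1,1\}$ and $\varphi$ is even, the change of variables $z\mapsto -z$ replaces $w\sigma_1 z$ by $\sigma_1 z$ without changing the Gaussian measure, so both cases $w=\pm1$ collapse to $\int_{\RR}\max\{0,\,(b+w\gamma_1+1)+\sigma_1 z\}\,\varphi(z)\,dz$. Identifying where the hinge is active (namely $z>(-b-w\gamma_1-1)/\sigma_1$) yields exactly the claimed lower limit and integrand $(b+w\gamma_1+1+\sigma_1 z)$. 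The identical symmetry argument applied to the second term (with $z=(x-\gamma_2)/\sigma_2$) produces the upper-limit integral with integrand $(-b-w\gamma_2+1-\sigma_2 z)$ in the statement.

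For the gradient I would apply Leibniz's rule to each integral, using $\partial_b(b+w\gamma_1+1)=1$ and $\partial_b(-b-w\gamma_2+1)=-1$. The observation that makes the formula clean is that the boundary term vanishes: at the lower limit $z=(-b-w\gamma_1-1)/\sigma_1$ the integrand $(b+w\gamma_1+1+\sigma_1 z)$ is precisely zero, so only the integral of the integrand's $b$-derivative survives, giving
\[
\frac{\partial}{\partial b}\int_{\frac{-b-w\gamma_1-1}{\sigma_1}}^{\infty}(b+w\gamma_1+1+\sigma_1 z)\,\varphi(z)\,dz = \int_{\frac{-b-w\gamma_1-1}{\sigma_1}}^{\infty}\varphi(z)\,dz = \Phi\Big(\tfrac{b+w\gamma_1+1}{\sigma_1}\Big).
\]
Treating the second integral the same way, its integrand also vanishes at the upper limit, and the coefficient $\partial_b(-b-w\gamma_2+1)=-1$ contributes the sign, producing $-\Phi\big((-b-w\gamma_2+1)/\sigma_2\big)$. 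Combining the two with weights $p$ and $1-p$ recovers the claimed gradient.

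The computation is essentially routine; the only points requiring care are the $z\mapsto -z$ symmetry trick that unifies the two admissible values of $w$ into a single closed form, and the verification that the hinge-induced boundary terms cancel in the Leibniz differentiation. Both are exactly where a sign slip would propagate into the final expressions, so I would track the placement of $w$ and the direction of each inequality carefully, but I do not anticipate any genuine obstacle beyond this bookkeeping.
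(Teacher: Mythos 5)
Your proposal is correct and follows essentially the same route as the paper's proof: condition on the two mixture components, standardize to reduce each term to an integral against the standard Gaussian density over the region where the hinge is active, and then differentiate in $b$. The only (harmless) differences are that you collapse the cases $w=\pm 1$ into one computation via the $z\mapsto -z$ symmetry where the paper treats them separately, and you differentiate under the integral sign using the vanishing boundary term where the paper first writes out the closed form of the integral and then differentiates; both variants yield the stated formulas.
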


Next, let us summarize several key observations based on Lemma \ref{lem:loss GMM} (specifically for the setting considered in Lemma \ref{lem:maximal loss w=+1}). For any $w\in\{-1, 1\}$,
$\frac{\partial}{\partial b}L(h_{w,b};\mu_c)$ is a monotonically increasing with $b$, which achieves minimum $-\frac{1}{2}$ when $b$ goes to $-\infty$ and achieves maximum $\frac{1}{2}$ when $b$ goes to $\infty$.
If $w=+1$, then $L(h_{w,b}; \mu_c)$ is monotonically decreasing when $b\in(-\infty, -\frac{\gamma_1+\gamma_2}{2})$ and monotonically increasing when $b\in(-\frac{\gamma_1+\gamma_2}{2}, \infty)$, reaching the minimum at $b = b^{*}_{c}(1):=-\frac{\gamma_1+\gamma_2}{2}$. On the other hand, if $w=-1$, then $L(h_{w,b}; \mu_c)$ is monotonically decreasing when $b\in(-\infty, \frac{\gamma_1+\gamma_2}{2})$ and monotonically increasing when $b\in(\frac{\gamma_1+\gamma_2}{2}, \infty)$, reaching the minimum at $b = b^{*}_{c}(-1) :=\frac{\gamma_1+\gamma_2}{2}$.

As for the clean loss minimizer conditioned on $w=1$, we have 
\begin{align*}
L(h_{1,b_c^*(1)};\mu_c) &= \frac{1}{2}\int_{\frac{\gamma_2-\gamma_1-2}{2\sigma}}^{\infty} \bigg(\frac{\gamma_1-\gamma_2}{2}+1+\sigma z\bigg)\cdot\varphi(z) dz \\
&\qquad + \frac{1}{2}\int^{\frac{\gamma_1-\gamma_2+2}{2\sigma}}_{-\infty} \bigg(\frac{\gamma_1-\gamma_2}{2}+1-\sigma z\bigg)\cdot\varphi(z) dz \\
&= \frac{(\gamma_1-\gamma_2+2)}{2}\cdot \Phi\bigg(\frac{\gamma_1-\gamma_2+2}{2\sigma}\bigg) + \frac{\sigma}{\sqrt{2\pi}}\cdot\exp\bigg(-\frac{(\gamma_1-\gamma_2+2)^2}{8\sigma^2}\bigg), 
\end{align*}
whereas as for the clean loss minimizer conditioned on $w=-1$, we have
\begin{align*}
L(h_{-1,b_c^*(-1)};\mu_c) &= \frac{1}{2}\int_{\frac{\gamma_1-\gamma_2-2}{2\sigma}}^{\infty} \bigg(\frac{\gamma_2-\gamma_1}{2}+1+\sigma z\bigg)\cdot\varphi(z) dz\\
&\qquad + \frac{1}{2}\int^{\frac{\gamma_2-\gamma_1+2}{2\sigma}}_{-\infty} \bigg(\frac{\gamma_2-\gamma_1}{2}+1-\sigma z\bigg)\cdot\varphi(z) dz \\
&= \frac{(\gamma_2-\gamma_1+2)}{2}\cdot \Phi\bigg(\frac{\gamma_2-\gamma_1+2}{2\sigma}\bigg) + \frac{\sigma}{\sqrt{2\pi}}\cdot\exp\bigg(-\frac{(\gamma_2-\gamma_1+2)^2}{8\sigma^2}\bigg).
\end{align*}
Let $f(t) = t\cdot\Phi(\frac{t}{\sigma}) + \frac{\sigma}{\sqrt{2\pi}}\cdot \exp(-\frac{t^2}{2\sigma^2})$, we know $L(h_{1,b_c^*(1)};\mu_c) = f(\frac{\gamma_1-\gamma_2+2}{2})$ and $L(h_{-1,b_c^*(-1)};\mu_c) = f(\frac{\gamma_2-\gamma_1+2}{2})$. We can compute the derivative of $f(t)$: $f'(t) = \Phi(\frac{t}{\sigma}) \geq 0$, which suggests that $L(h_{1,b_c^*(1)};\mu_c) \leq L(h_{-1,b_c^*(-1)};\mu_c)$. 

Now we are ready to prove Lemma \ref{lem:maximal loss w=+1}.

\begin{proof}[Proof of Lemma \ref{lem:maximal loss w=+1}]

First, we prove the following claim: for any possible $b_p$, linear hypothesis $h_{1, b_p}$ can always be achieved by minimizing the population hinge loss with respect to $\mu_c$ and $\mu_p=\nu_\alpha$ with some carefully-chosen $\alpha\in[0,1]$ based on $b_p$.

For any $\mu_p\in\cQ(u)$, according to the first-order optimality condition with respect to $b_p$, we have
\begin{align}
\label{eq:max gradient change}
    \frac{\partial}{\partial b}L(h_{1, b_p}; \mu_c) = - \eps\cdot \frac{\partial}{\partial b}L(h_{1, b_p}; \mu_p) 
    = -\eps\cdot\frac{\partial}{\partial b}\EE_{(x,y)\sim\mu_p}\big[\ell(h_{1, b_p}; \mu_p)\big]\in [-\eps, \eps],
\end{align}
where the last inequality follows from $\frac{\partial}{\partial b}\ell(h_{w,b}; x, y)\in[-1, 1]$ for any $(x,y)$. 
Let $\cB_p$ be the set of any possible bias parameters $b_p$. According to \eqref{eq:max gradient change}, we have
\begin{align*}
\cB_p = \bigg\{b\in\RR: \frac{\partial}{\partial b}L(h_{1, b}; \mu_c) \in [-\epsilon, \epsilon]\bigg\}.
\end{align*}
Let $b_c^*(1) = \argmin_{b\in\RR} L(h_{1, b}; \mu_c)$ be the clean loss minimizer conditioned on $w=1$. According to Lemma \ref{lem:loss GMM} and the assumption $|\gamma_1+\gamma_2|\leq 2u$, we know $b_c^*(1) = \frac{\gamma_1+\gamma_2}{2} \in [-u, u]$. For any $b_p\in\cB_p$, we can always choose
\begin{align}
\label{eq:alpha construction}
\alpha = \frac{1}{2} + \frac{1}{2\eps}\cdot \frac{\partial}{\partial b}L(h_{1, b_p}; \mu_c) \in [0,1],
\end{align}
such that
$$
    h_{1, b_p} = \argmin_{b\in\RR} [ L(h_{1, b}; \mu_c) + \eps \cdot L(h_{1, b}; \nu_{\alpha})],
$$
where $\nu_\alpha$ is defined according to \eqref{eq:constuction optimal poisoning}. This follows from the first-order optimality condition for convex function and the closed-form solution for the derivative of hinge loss with respect to $\nu_\alpha$:
\begin{align*}
\frac{\partial}{\partial b}L(h_{1, b_p}; \nu_\alpha) = \alpha \cdot \frac{\partial}{\partial b} \ell(h_{+1, b_p}; -u, +1) + (1-\alpha) \cdot \frac{\partial}{\partial b} \ell(h_{+1, b_p}; u, -1) = 1 - 2\alpha.
\end{align*}
Thus, we have proven the claimed presented at the beginning of the proof of Lemma \ref{lem:maximal loss w=+1}.

Next, we show that for any $b_p\in\cB_p$, among all the possible choices of poisoned distribution $\mu_p$ that induces $b_p$, choosing $\mu_p = \nu_\alpha$ with $\alpha$ defined according to \eqref{eq:alpha construction} is the optimal choice in terms of the maximization objective in \eqref{eq:maximal loss w=+1}. Let $\mu_p\in\cQ(u)$ be any poisoned distribution that satisfies the following condition:
$$
    b_p = \argmin_{b\in\RR} [ L(h_{1, b}; \mu_c) + \eps \cdot L(h_{1, b}; \mu_p)].
$$
According to the aforementioned analysis, we know that by setting $\alpha$ according to \eqref{eq:alpha construction}, $\nu_\alpha$ also yields $b_p$. Namely,
$$
    b_p = \argmin_{b\in\RR} [ L(h_{1, b}; \mu_c) + \eps \cdot L(h_{1, b}; \nu_\alpha)].
$$
Since the population losses with respect to $\mu_c$ are the same at the induced bias $b=b_p$, it remains to prove $\nu_\alpha$ achieves a larger population loss with respect to the poisoned distribution than that of $\mu_p$, i.e., $L(h_{1, b_p}; \nu_\alpha)\geq L(h_{1, b_p}; \mu_p)$.

Consider the following two probabilities with respect to $b_p$ and $\mu_p$:
$$
    p_1 = \PP_{(x,y)\sim\mu_p} \bigg[\frac{\partial}{\partial b} \ell(h_{1, b_p}; x, y) = -1\bigg], \quad  p_2 = \PP_{(x,y)\sim\mu_p}\bigg[\frac{\partial}{\partial b} \ell(h_{1, b_p}; x, y) = 1\bigg].
$$
Note that the derivative of hinge loss with respect to the bias parameter is $\frac{\partial}{\partial b} \ell(h_{w, b}; x, y) \in \{-1, 0, 1\}$, thus we have
$$
    \PP_{(x,y)\sim\mu_p} \bigg[\frac{\partial}{\partial b} \ell(h_{1, b_p}; x, y) = 0\bigg] = 1 - (p_1 + p_2).
$$
Moreover, according to the first-order optimality of $b_p$ with respect to $\mu_p$, we have 
$$
    \frac{\partial}{\partial b}L(h_{1, b_p}; \mu_c) = - \eps\cdot \frac{\partial}{\partial b}L(h_{1, b_p}; \mu_p) = \eps\cdot(p_1 - p_2),
$$
If we measure the sum of the probability of input having negative gradient and half of the probability of having zero gradient, we have: 
$$
p_1 + \frac{1-(p_1+p_2)}{2} = \frac{1}{2} + \frac{p_1 - p_2}{2} = \frac{1}{2} +  \frac{1}{2\epsilon}\cdot\frac{\partial}{\partial b}L(h_{1, b_p}; \mu_c) = \alpha.
$$ 
Therefore, we can construct a mapping $g$ that maps $\mu_p$ to $\nu_\alpha$: by moving any $(x,y)\sim\mu_p$ that contributes $p_1$ (negative derivative) and any $(x,y)\sim\mu_p$ that contributes $p_2$ (positive derivative) to extreme locations $(-u, +1)$ and $(u, -1)$, respectively, and move the remaining $(x,y)$ that has zero derivative to $(-u, +1)$ and $(u, -1)$ with equal probabilities (i.e., $\frac{1-p_1-p_2}{2}$), and we can easily verify that the gradient of $b_p$ with respect to $\mu_p$ is the same as $\nu_\alpha$.

 In addition, note that hinge loss is monotonically increasing with respect to the $\ell_2$ distance of misclassified examples to the decision hyperplane, and the initial clean loss minimizer $b_c^*(1)\in[-u, u]$, we can verify that 
 the constructed mapping $g$ will not reduce the individual hinge loss. Namely, $\ell(h_{1, b_p}; x,y)\leq\ell(h_{1, b_p}; g(x,y))$ holds for any $(x, y)\sim\mu_p$. Therefore, we have proven Lemma \ref{lem:maximal loss w=+1}.
\end{proof}

\subsection{Proof of Lemma \ref{lem:condition exist w=-1}}
\begin{proof}[Proof of Lemma \ref{lem:condition exist w=-1}]
\label{sec:proof thm condition exist w=-1}
First, we introduce the following notations. For any $\mu_p\in\cQ(u)$ and any $w\in\{-1,1\}$, let 
 \begin{align*}
    b_c^*(w) = \argmin_{b\in\RR}  L(h_{w, b}; \mu_c), \quad b_p(w; \mu_p) = \argmin_{b\in\RR} [ L(h_{w, b}; \mu_c) + \epsilon \cdot L(h_{w, b}; \mu_p)].
\end{align*}
According to Lemma \ref{lem:maximal loss w=+1}, we know that the maximum population hinge loss conditioned on $w=1$ is achieved when $\mu_p = \nu_\alpha$ for some $\alpha\in[0,1]$. To prove the sufficient and necessary condition specified in Lemma \ref{lem:condition exist w=-1}, we also need to consider $w=-1$. Note that different from $w=1$, we want to specify the minimum loss that can be achieved with some $\mu_p$ for $w=-1$. For any $\mu_p\in\cQ(u)$, we have
\begin{align}
\label{eq:loss w=-1 lower bound}
    L(h_{-1, b_p(-1;\mu_p)}; \mu_c) + \epsilon \cdot L(h_{-1, b_p(-1;\mu_p)}; \mu_p) \geq \min_{b\in\RR} L(h_{-1, b}; \mu_c) = L(h_{-1, b_c^*(-1)}; \mu_c).
\end{align}
According to Lemma \ref{lem:loss GMM}, we know $b_c^*(-1) = \frac{\gamma_1+\gamma_2}{2}$, which achieves the minimum clean loss conditioned on $w=-1$. Since we assume $\frac{\gamma_1+\gamma_2}{2}\in[-u+1, u-1]$, according to the first-order optimality condition, the equality in \eqref{eq:loss w=-1 lower bound} can be attained as long as $\mu_p$ only consists of correctly classified data that also incurs zero hinge loss with respect to $b_c^*(-1)$ (not all correctly classified instances incur zero hinge loss). It can be easily checked that choosing $\mu_p = \nu_\alpha$ based on \eqref{eq:constuction optimal poisoning} with any $\alpha\in[0,1]$ satisfies this condition, which suggests that as long as the poisoned distribution $\mu_p$ is given in the form of $\nu_\alpha$ and if the $w=-1$ is achievable (conditions on when this can be achieved will be discussed shortly), then the bias term that minimizes the distributional loss is equal to $b_c^*(-1)$, and is the minimum compared to other choices of $b_p(-1;\mu_p)$. According to Lemma \ref{lem:maximal loss w=+1}, it further implies the following statement: there exists some $\alpha\in[0,1]$ such that 
\begin{align*}
    \nu_\alpha \in \argmax_{\mu_p\in\cQ(u)} \bigg\{[ L(h_{1, b_p(1;\mu_p)}; \mu_c)
    &+ \epsilon \cdot L(h_{1, b_p(1;\mu_p)}; \mu_p)] \\
    &\quad- [ L(h_{-1, b_p(-1;\mu_p)}; \mu_c) + \epsilon \cdot L(h_{-1, b_p(-1;\mu_p)}; \mu_p)] \bigg\}.
\end{align*}
For simplicity, let us denote by $\Delta L(\mu_p;\epsilon, u, \mu_c)$ the maximization objective regarding the population loss difference between $w=1$ and $w=-1$. Thus, a necessary and sufficient condition such that there exists a $h_{-1, b_p(-1; \mu_p)}$ as the loss minimizer is that $\max_{\alpha\in[0,1]}\Delta L(\nu_\alpha;\epsilon, u, \mu_c)\geq 0$. This requires us to characterize the maximal value of loss difference for any possible configurations of $\epsilon, u$ and $\mu_c$. According to Lemma \ref{lem:loss GMM} and the definition of $\nu_\alpha$, for any $\alpha\in[0,1]$, we denote the above loss difference as
\begin{align*}
    \Delta L(\nu_\alpha;\epsilon, u, \mu_c) &= \underbrace{L(h_{1, b_p(1;\nu_\alpha)}; \mu_c) + \epsilon \cdot L(h_{1, b_p(1;\nu_\alpha)}; \nu_\alpha)}_{I_1} - \underbrace{L(h_{-1, b_c^*(-1)}; \mu_c)}_{I_2}.
\end{align*}
The second term $I_2$ is fixed (and the loss on $\nu_\alpha$ is zero conditioned on $w=-1$), thus it remains to characterize the maximum value of $I_1$ with respect to $\alpha$ for different configurations. 
Consider auxiliary function 
$$
g(b) = \frac{1}{2} \Phi\big(\frac{b+\gamma_1+1}{\sigma}\big) - \frac{1}{2}\Phi\big(\frac{-b-\gamma_2+1}{\sigma}\big).
$$ 
We know $g(b)\in[-\frac{1}{2}, \frac{1}{2}]$ is a monotonically increasing function by checking with derivative to $b$. Let $g^{-1}$ be the inverse function of $g$.
Note that according to Lemma \ref{lem:loss GMM} and the first-order optimality condition of $b_p(1;\nu_\alpha)$, we have
\begin{align}
\label{eq:range b_p}
    \frac{\partial}{\partial b} L(h_{+1, b}; \mu_c)\big|_{b=b_p(1;\nu_\alpha)} = g\big(b_p(+1;\nu_\alpha)\big) = - \eps\cdot \frac{\partial}{\partial b}L(h_{+1, b_p(1; \nu_\alpha)}; \nu_\alpha) = \eps \cdot (2\alpha-1),
\end{align}
where the first equality follows from Lemma \ref{lem:loss GMM}, the second equality follows from the first-order optimality condition and the last equality is based on the definition of $\nu_\alpha$. This suggests that $b_p(1;\nu_\alpha) = g^{-1}\big(\epsilon\cdot(2\alpha-1)\big)$ for any $\alpha\in[0,1]$. 

Consider the following two configurations for the term $I_1$: $0\not\in[g^{-1}(-\eps), g^{-1}(\eps)]$ and $0\in[g^{-1}(-\eps), g^{-1}(\eps)]$. Consider the first configuration, which is also equivalent to $g(0)\notin[-\eps, \eps]$. We can prove that if $\gamma_1+\gamma_2<0$ meaning that $b_c^*(1)>0$, choosing $\alpha = 0$ achieves the maximal value of $I_1$; whereas if $\gamma_1+\gamma_2>0$, choosing $\alpha = 1$ achieves the maximum. Note that it is not possible for $\gamma_1+\gamma_2 = 0$ under this scenario. The proof is straightforward, since we have 
\begin{align*}
    I_1 &= L(h_{1, g^{-1}(2\eps\alpha-\eps)}; \mu_c) + \epsilon \cdot L(h_{1, g^{-1}(2\eps\alpha-\eps)}; \nu_\alpha) \\
    &= L(h_{1, g^{-1}(2\eps\alpha-\eps)}; \mu_c) + \eps\cdot\big[ 1+u+(1-2\alpha)\cdot g^{-1}(2\eps\alpha-\eps) \big]\\
    &= L(h_{1, t}; \mu_c) + \eps\cdot(1+u)- t\cdot g(t),
\end{align*}
where $t = g^{-1}(2\eps\alpha-\eps)\in[g^{-1}(\eps), g^{-1}(\eps)]$. In addition, we can compute the derivative of $I_1$ with respect to $t$:
$$
    \frac{\partial}{\partial t} I_1 = g(t) - g(t) - t\cdot g'(t) = -t\cdot g'(t), 
$$
which suggests that $I_1$ is a concave function with respect to $t$. If $0 \in[g^{-1}(-\eps), g^{-1}(\eps)]$, we achieve the global maximum of $I_1$ at $t=0$ by carefully picking $\alpha_0 = \frac{1}{2}+\frac{1}{2\epsilon}\cdot g(0)$. If not (i.e., $g^{-1}(-\eps) > 0$ or $g^{-1}(\eps) < 0$), then we pick $t$ that is closer to $0$, which is either $g(-\epsilon)$ or $g(\epsilon$) by setting $\alpha=0$ or $\alpha=1$ respectively. Therefore, we can specify the sufficient and necessary conditions when the weight vector $w$ can be flipped from $1$ to $-1$: 
\begin{enumerate}
    \item When $g(0)\not\in [-\eps, \eps]$, the condition is 
    $$
    \max \{\Delta L(\nu_0; \eps, u, \mu_c), \Delta L(\nu_1; \eps, u, \mu_c) \} \geq 0.
    $$
    \item When $g(0)\in [-\eps, \eps]$, the condition is 
    $$
    \Delta L(\nu_{\alpha_0}; \eps, u, \mu_c)\geq 0, \text{ where } \alpha_0 = \frac{1}{2} + \frac{1}{2\eps}\cdot g(0).
    $$
\end{enumerate}

Plugging in the definition of $g$ and $\Delta L$, we complete the proof of Lemma \ref{lem:condition exist w=-1}.
\end{proof}

\subsection{Proof of Lemma \ref{lem:risk GMM}}
\label{sec:proof lem risk GMM}
\begin{proof}[Proof of Lemma \ref{lem:risk GMM}]
Let $\mu_1, \mu_2$ be the probability measures of the positive and negative examples assumed in \eqref{eq:GMM generating process}, respectively. Let $\varphi(z; \gamma, \sigma)$ be the PDF of Gaussian distribution $\cN(\gamma, \sigma^2)$. For simplicity, we simply write $\varphi(z) = \varphi(z; 0, 1)$ for standard Gaussian.
For any $h_{w, b}\in\cH_{\mathrm{L}}$, we know $w$ can be either $1$ or $-1$. First, let's consider the case where $w=1$.
According to the definition of risk and the data generating process of $\mu_c$, we have
\begin{align*}
    \Risk(h_{w,b};\mu_c) &= p\cdot \Risk(h_{w,b};\mu_1) + (1-p)\cdot \Risk(h_{w,b};\mu_2) \\
    &= p\cdot\int_{-b}^{\infty} \varphi(z; \gamma_1, \sigma_1) dz + (1-p)\cdot\int_{-\infty}^{-b} \varphi(z; \gamma_2, \sigma_2) dz \\
    &= p\cdot\int_{\frac{-b-\gamma_1}{\sigma_1}}^{\infty} \varphi(z) dz + (1-p)\cdot\int_{-\infty}^{\frac{-b-\gamma_2}{\sigma_2}} \varphi(z) dz \\
    &= p\cdot \Phi\bigg(\frac{b+\gamma_1}{\sigma_1}\bigg) + (1-p)\cdot\Phi\bigg(\frac{-b-\gamma_2}{\sigma_2}\bigg).
\end{align*}
Similarly, when $w=-1$, we have
\begin{align*}
    \Risk(h_{w,b};\mu_c) &= p\cdot\int_{-\infty}^{b} \varphi(z; \gamma_1, \sigma_1) dz + (1-p)\cdot\int_{b}^{\infty} \varphi(z; \gamma_2, \sigma_2) dz \\
    &= p\cdot\int^{\frac{b-\gamma_1}{\sigma_1}}_{-\infty} \varphi(z) dz + (1-p)\cdot\int^{\infty}_{\frac{b-\gamma_2}{\sigma_2}} \varphi(z) dz \\
    &= p\cdot \Phi\bigg(\frac{b-\gamma_1}{\sigma_1}\bigg) + (1-p)\cdot\Phi\bigg(\frac{-b+\gamma_2}{\sigma_2}\bigg).
\end{align*}
Combining the two cases, we complete the proof.
\end{proof}

\subsection{Proof of Lemma \ref{lem:loss GMM}}
\label{sec:proof lem loss GMM}

\begin{proof}[Proof of Lemma \ref{lem:loss GMM}]
We use similar notations  such as $\mu_1$, $\mu_2$ and $\varphi$ as in Lemma \ref{lem:risk GMM}. For any $h_{w,b}\in\cH_\mL$ with $w=1$, then according to the definition of population hinge loss, we have
\begin{align*}
&L(h_{w,b};\mu_c) \\
&= \EE_{(x,y)\sim\mu_c}\big[\max\{0, 1 - y(x+b)\}\big] \\
&= p\int_{-b-1}^{\infty} (1+b+z)\varphi(z;\gamma_1, \sigma_1)dz + (1-p) \int^{-b+1}_{-\infty} (1-b-z)\varphi(z;\gamma_2, \sigma_2)dz \\
&= p\int_{\frac{-b-1-\gamma_1}{\sigma_1}}^{\infty} (1+b+\gamma_1+\sigma_1 z)\varphi(z)dz + (1-p)\int^{\frac{-b+1-\gamma_2}{\sigma_2}}_{-\infty} (1-b-\gamma_2-\sigma_2 z)\varphi(z)dz \\
&= p (b+\gamma_1+1) \Phi\bigg(\frac{b+\gamma_1+1}{\sigma_1}\bigg) + p\sigma_1 \frac{1}{\sqrt{2\pi}}\exp\bigg(-\frac{(b+\gamma_1+1)^2}{2\sigma_1^2}\bigg) \\
&\qquad + (1-p)(-b-\gamma_2+1)\Phi\bigg(\frac{-b-\gamma_2+1}{\sigma_2}\bigg) + (1-p)\sigma_2 \frac{1}{\sqrt{2\pi}}\exp\bigg(-\frac{(-b-\gamma_2+1)^2}{2\sigma_2^2}\bigg).
\end{align*}
Taking the derivative with respect to parameter $b$ and using simple algebra, we have
\begin{align*}
 \frac{\partial}{\partial b} L(h_{w,b};\mu_c) = p\cdot \Phi\bigg(\frac{b+\gamma_1+1}{\sigma_1}\bigg) - (1-p)\cdot\Phi\bigg(\frac{-b-\gamma_2+1}{\sigma_2}\bigg).
\end{align*}

Similarly, for any $h_{w,b}\in\cH_L$ with $w=-1$, we have
\begin{align*}
&L(h_{w,b};\mu_c) \\
&= \EE_{(x,y)\sim\mu_c}\big[\max\{0, 1 - y(-x+b)\}\big] \\
&= p\cdot \int^{b+1}_{-\infty} (1+b-z)\varphi(z;\gamma_1, \sigma_1)dz + (1-p)\cdot \int_{b-1}^{\infty} (1-b+z)\varphi(z;\gamma_2, \sigma_2)dz \\
&= p\cdot \int^{\frac{b+1-\gamma_1}{\sigma_1}}_{-\infty} (1+b-\gamma_1-\sigma_1 z)\varphi(z)dz + (1-p)\cdot \int_{\frac{b-1-\gamma_2}{\sigma_2}}^{\infty} (1-b+\gamma_2+\sigma_2 z)\varphi(z)dz\\
&= p (b-\gamma_1+1) \Phi\bigg(\frac{b-\gamma_1+1}{\sigma_1}\bigg) + p\sigma_1 \frac{1}{\sqrt{2\pi}}\exp\bigg(-\frac{(b-\gamma_1+1)^2}{2\sigma_1^2}\bigg) \\
&\qquad + (1-p)(-b+\gamma_2+1)\Phi\bigg(\frac{-b+\gamma_2+1}{\sigma_2}\bigg) + (1-p)\sigma_2 \frac{1}{\sqrt{2\pi}}\exp\bigg(-\frac{(-b+\gamma_2+1)^2}{2\sigma_2^2}\bigg).
\end{align*}
Taking the derivative, we have
\begin{align*}
 \frac{\partial}{\partial b} L(h_{w,b};\mu_c) = p\cdot \Phi\bigg(\frac{b-\gamma_1+1}{\sigma_1}\bigg) - (1-p)\cdot\Phi\bigg(\frac{-b+\gamma_2+1}{\sigma_2}\bigg).
\end{align*}
Combining the two scenarios, we complete the proof.
\end{proof}

\section{Additional Experimental Results and Details}
In this section, we provide details on our experimental setup (\Cref{sec:sota_exp_setup}) and then provide additional results (\Cref{sec:additional main results}).

\subsection{Details on Experimental Setup in Section \ref{sec:sota_attack_eval}}\label{sec:sota_exp_setup}

\shortsection{Details on datasets and training configurations}
In the main paper, we used different public benchmark datasets including MNIST \citep{lecun1998mnist} digit pairs (i.e., 1--7. 6--9, 4--9) and also the Enron dataset, which is created by Metsis et al. \citep{metsis2006spam}, Dogfish \citep{koh2017understanding} and Adult \citep{Dua:2019}, which are all used in the evaluations of prior works except MNIST 6--9 and MNIST 4--9. In the appendix, we additionally present the results of the IMDB dataset \citep{maas2011learning}, which has also been used in prior evaluations \citep{koh2017understanding,koh2022stronger}. We did not include the IMDB results in the main paper because we could not run the existing state-of-the-art poisoning attacks on IMDB because the computation time is extremely slow. Instead, we directly quote the poisoned error of SVM from Koh et al.~\citep{koh2022stronger} and then present the computed metrics. For the Dogfish and Enron dataset, we construct the constraint set $\cC$ in the no defense setting by finding the minimum ($u^i_{\min}$) and maximum ($u^i_{\max}$) values occurred in each feature dimension $i$ for both the training and test data, which then forms a box constraint $[u^i_{\min},u^i_{\max}]$ for each dimension. This way of construction is also used in the prior work~\citep{koh2022stronger}. Because we consider linear models, the training \citep{scikit-learn} of linear models and the attacks on them are stable (i.e., less randomness involved in the process) and so, we get almost identical results when feeding different random seeds. Therefore, we did not report error bars in the results. The regularization parameter $\lambda$ for training the linear models (SVM and LR) are configured as follows: $\lambda=0.09$ for MNIST digit pairs, Adult, Dogfish, SVM for Enron; $\lambda=0.01$ for IMDB, LR for Enron. Overall, the results and conclusions in this paper are insensitive to the choice of $\lambda$. The computation of the metrics in this paper are extremely fast and can be done on any laptop. The poisoning attacks can also be done on a laptop, except the Influence Attack \citep{koh2022stronger}, whose computation can be accelerated using GPUs.  

\shortsection{Attack details} The KKT, MTA and Min-Max attacks evaluated in \Cref{sec:sota_attack_eval} require a target model as input. This target model is typically generated using some label-flipping heuristics~\citep{koh2022stronger,suya2021model}. In practice, these attacks are first run on a set of carefully-chosen target models, then the best attack performance achieved by these target models is reported. We generate target models using the improved procedure described in Suya et al.~\citep{suya2021model} because their method is able to generate better target models that induce victim models with a higher test error compared with the method proposed in Koh et al.~\citep{koh2022stronger}. 
We generate target models with different error rates, ranging from $5\%$ to $70\%$ using the label-flipping heuristics, and then pick the best performing attack induced by these target models. 

Following the prior practice~\citep{koh2022stronger}, we consider adversaries that have access to both the clean training and test data, and therefore, adversaries can design attacks that can perform better on the test data. This generally holds true for the Enron and MNIST 1--7 datasets, but for Dogfish, we find in our experiments that the attack ``overfits'' to the test data heavily due to the small number of training and test data and also the high dimensionality. More specifically, we find that the poisoned model tends to incur significantly higher error rates on the clean test data compared to the clean training data. Since this high error cannot fully reflect the distributional risk, when we report the results in \Cref{sec:sota_attack_eval} we report the errors on both the training and the testing data to give a better empirical sense of what the distributional risk may look like. This also emphasizes the need to be cautious on the potential for ``overfitting'' behavior when designing poisoning attacks. \Cref{tab:dogfish_illustration_results} shows the drastic differences between the errors of the clean training and test data after poisoning. 

\begin{table*}[t]
\centering
\begin{tabular}{ccccccccccc}
\toprule
Poison Ratio $\eps$ (\%) & 0.0 & 0.1 & 0.2 & 0.3 & 0.5 & 0.7 & 0.9 & 1.0 & 2.0 & 3.0 \\ \midrule
Train Error (\%) & 0.1 & 0.8 & 1.2 & 1.8 & 2.6 & 3.1 & 3.3 & 3.6 & 5.3 & 6.5 \\
Test Error (\%) & 0.8 & 9.5 & 12.8 & 13.8 & 17.8 & 20.5 & 21.0 & 20.5 & 27.3 & 31.8 \\
\bottomrule
\end{tabular}
\caption[Dogfish: Difference between Train and Test Errors After Poisoning]{Comparisons of training and test errors of existing data poisoning attacks on Dogfish. The poisoned training and test errors are reported from the current best attacks.}
\label{tab:dogfish_illustration_results}
\end{table*}

\subsection{Supplemental Results}
\label{sec:additional main results}
In this section, we provide additional results results that did not fit into the main paper, but further support the observations and claims made in the main paper. We first show the results of IMDB and the metrics computed on the whole clean test data in \Cref{sec:metrics on whole data} to complement Table \ref{tab:explain_difference} in the main paper, then include the complete results of the impact of data sanitization defenses in \Cref{sec:impact of defense} to complement the last paragraph in \Cref{sec:implications}. Next, we provide the metrics computed on selective benchmark datasets using a different projection vector from the clean model weight in \Cref{sec:different proj vector} to support the results in \Cref{tab:explain_difference} in the main paper. Lastly, we show the performance of different poisoning attacks at various poisoning ratios in \Cref{sec:benchmark results all}, complementing \Cref{fig:benchmark_attack_results} in the main paper. 

\begin{table*}[]
\centering
\scalebox{0.82}{
\begin{tabular}{cccccccccc}
\toprule
\multirow{2}{*}{} &  & \multicolumn{3}{|c|}{\textbf{Robust}} & \multicolumn{2}{c|}{\textbf{Moderately Vulnerable}} & \multicolumn{3}{c}{\textbf{Highly Vulnerable}} \\
 &  Metric & \multicolumn{1}{|c}{MNIST 6--9} & MNIST 1--7 & \multicolumn{1}{c|}{Adult} & Dogfish & \multicolumn{1}{c|}{MNIST 4--9} & F. Enron & Enron & IMDB \\ \midrule
\multirow{4}{*}{SVM} & Error Increase & 2.7 & 2.4 & 3.2 & 7.9 & 6.6 & 33.1 & 31.9 & 19.1$^\dagger$\\
 & Base Error & 0.3 & 1.2 & 21.5 & 0.8 & 4.3 & 0.2 & 2.9 & 11.9 \\

& Sep/SD & 6.92 & 6.25 & 9.65  & 5.14  & 4.44  & 1.18 & 1.18 & 2.57 \\
& Sep/Size & 0.24 & 0.23 & 0.33 & 0.05 &  0.14 &  0.01 & 0.01 & 0.002 \\ \midrule
 
\multirow{4}{*}{LR} & Error Increase & 2.3 & 1.8 & 2.5 & 6.8 & 5.8  & 33.0 & 33.1 & - \\ 
& Base Error & 0.6 & 2.2 & 20.1 & 1.7 & 5.1 & 0.3 & 2.5 & - \\ 

  & Sep/SD & 6.28  & 6.13 & 4.62  & 5.03  & 4.31  & 1.11  & 1.10 & 2.52 \\
 & Sep/Size & 0.27 & 0.27 & 0.27 & 0.09 & 0.16 & 0.01 & 0.01 & 0.003 \\

 \bottomrule
\end{tabular}
}
\caption{Explaining disparate poisoning vulnerability under linear models by computing the metrics on the correctly classified clean test points. The top row for each model gives the increase in error rate due to the poisoning, over the base error rate in the second row. The error increase of IMDB (marked with $^\dagger$) is directly quoted from Koh et al.~\citep{koh2022stronger} as running the existing poisoning attacks on IMDB is extremely slow. LR results are missing as they are not contained in the original paper. The explanatory metrics are the scaled (projected) separability, standard deviation and constraint size.}
\label{tab:explain_difference appendix}
\end{table*}

\begin{table*}[]
\centering
\scalebox{0.82}{
\begin{tabular}{cccccccccc}
\toprule
\multirow{2}{*}{Models} & \multirow{2}{*}{Metrics} & \multicolumn{3}{|c|}{\textbf{Robust}} & \multicolumn{2}{c|}{\textbf{Moderately Vulnerable}} & \multicolumn{3}{c}{\textbf{Highly Vulnerable}} \\
 &  & \multicolumn{1}{|c}{MNIST 6-9} & MNIST 1--7 & \multicolumn{1}{c|}{Adult} & Dogfish & \multicolumn{1}{c|}{MNIST 4--9} & F. Enron & Enron & IMDB \\ \hline
\multirow{4}{*}{SVM} & Error Increase & 2.7 & 2.4 & 3.2 & 7.9 & 6.6 & 33.1 & 31.9 & 19.1$^\dagger$\\ 
 & Base Error & 0.3 & 1.2 & 21.5 & 0.8 & 4.3 & 0.2 & 2.9 & 11.9 \\ 
 & Sep/SD & 6.70 & 5.58 & 1.45 & 4.94 & 3.71 & 1.18 & 1.15 & 1.95 \\ 
 & Sep/Size & 0.23 & 0.23 & 0.18 & 0.05 & 0.13 & 0.01 & 0.01 & 0.001 \\ \midrule

\multirow{4}{*}{LR} & Error Increase & 2.3 & 1.8 & 2.5 & 6.8 & 5.8 & 33.0 & 33.1 & - \\ 
& Base Error & 0.6 & 2.2 & 20.1 & 1.7 & 5.1 & 0.3 & 2.5 & - \\ 

 & Sep/SD & 5.97& 5.17 & 1.64 & 4.67 & 3.51 & 1.06 & 1.01 & 1.88 \\ 
 & Sep/Size & 0.26 & 0.26 & 0.16 & 0.08 & 0.15 & 0.01 & 0.01 & 0.002 \\ 
 \bottomrule
\end{tabular}
}
\caption{Explaining the different vulnerabilities of benchmark datasets under linear models by computing metrics on the whole data. The error increase of IMDB (marked with $^\dagger$) is directly quoted from Koh et al.~\citep{koh2022stronger}.}
\label{tab:explain_poison_error}
\end{table*}

\subsubsection{IMDB results}\label{sec:imdbresults}
Table \ref{tab:explain_difference} in the main paper presents the metrics that are computed on the correctly classified test samples by the clean model $\bw_c$. In Table~\ref{tab:explain_difference appendix}, we additionally include the IMDB results to the Table \ref{tab:explain_difference} in the main paper. From the table, we can see that IMDB is still highly vulnerable to poisoning because its separability is low compared to datasets that are moderately vulnerable or robust, and impacted the most by the poisoning points compared to all other benchmark datasets. Note that, the increased error from IMDB is directly quoted from Koh et al.~\citep{koh2022stronger}, which considers data sanitization defenses. Therefore, we expect the attack effectiveness might be further improved when we do not consider any defenses, as in our paper.

 \subsubsection{Metrics computed using all test data} \label{sec:metrics on whole data}
 
Table \ref{tab:explain_poison_error} shows the results when the metrics are computed on the full test data set (including misclassified ones), rather than just on examples that were classified correctly by the clean model. The metrics are mostly similar to Table \ref{tab:explain_difference appendix} when the initial errors are not high. For datasets with high initial error such as Adult, the computed metrics are more aligned with the final poisoned error, not the error increase.

\begin{table}[t]
\centering
\begin{tabular}{ccccccccc}
\toprule
Dataset & \multicolumn{2}{c}{Error Increase} & \multicolumn{2}{c}{Base Error} & \multicolumn{2}{c}{Sep/SD} & \multicolumn{2}{c}{Sep/Size} \\
 & w/o & w/ & w/o & w/ & w/o & w/ & w/o & w/ \\ \midrule
MNIST 1--7 (10\%) & 7.7 & 1.0 & 1.2 & 2.4 & 6.25 & 6.25 & 0.23 & 0.43 \\ %
Dogfish (3\%) & 8.7 & 4.3 & 0.8 & 1.2 & 5.14 & 5.14 & 0.05 & 0.12 \\
Enron (3\%) & 31.9 & 25.6 & 2.9 & 3.2 & 1.18 & 1.18 & 0.01 & 0.11 \\

\bottomrule
\end{tabular}
\vspace{0.3em}
\caption{Understanding impact of data sanitization defenses on poisoning attacks. \emph{w/o} and \emph{w/} denote \emph{without defense} and \emph{with defense} respectively. MNIST 1--7 is evaluated at 10\% poisoning ratio due to its strong robustness at $\epsilon=3\%$ and Enron is still evaluated at $\epsilon=3\%$ because it is highly vulnerable.}
\label{tab:defense_impact}
\end{table}

\subsubsection{Explaining the impact of data sanitization defenses}\label{sec:impact of defense}
We then provide additional results for explaining the effectiveness of the data sanitization defenses in improving dataset robustness, which is discussed in \autoref{sec:implications}. On top of the Enron result shown in the paper, which is attacked at 3\% poisoning ratio, we also provide attack results of MNIST 1-7 dataset. We report the attack results when $\epsilon=10\%$ and we considered a significantly higher poisoning ratio because at the original 3\% poisoning, the dataset can well resist existing attacks and hence there is no point in protecting the dataset with sanitization defenses. This attack setting is just for illustration purpose and attackers in practice may be able to manipulate such a high number of poisoning points. Following the main result in the paper, we still compute the metrics based on the correctly classified samples in the clean test set, so as to better depict the relationship between the increased errors and the computed metrics. The results are summarized in Table \ref{tab:defense_impact} and we can see that existing data sanitization defenses improve the robustness to poisoning by majorly limiting $\mathrm{Size}_{\bm{w}_c}(\cC)$. For MNIST 1-7, equipping with data sanitization defense will make the dataset even more robust (robust even at the high 10\% poisoning rate), which is consistent with the findings in prior work~\citep{steinhardt2017certified}.

\begin{table}[t]
\centering
\begin{tabular}{ccccccc}
\toprule
\multirow{2}{*}{} & \multirow{2}{*}{Base Error (\%)} & \multirow{2}{*}{Error Increase (\%)} & \multicolumn{2}{c}{Sep/SD} & \multicolumn{2}{c}{Sep/Size} \\
 &  &  & $\bw_c$ & $\bw_U$ & $\bw_c$ & $\bw_U$ \\ \midrule
MNIST 1--7 & 1.2 & 2.4 & 6.25 & 6.51 & 0.23 & 0.52 \\
Dogfish & 0.8 & 7.9 & 5.14 & 4.43 & 0.05 & 0.19\\
\bottomrule
\end{tabular}
\vspace{0.3em}
\caption{Using the projection vector that minimizes the upper bound on the risk of optimal poisoning attacks for general distributions. $\bw_c$ denotes the clean weight vector and $\bw_U$ denotes weight vector obtained from minimizing the upper bound.}
\label{tab:other proj vector}
\end{table}

\begin{table}
    \centering
    \begin{tabular}{ccc}
    \toprule
        Metric & MNIST 1--7 & Dogfish \\ \midrule
        Upper Bound (\%) & 20.1 & 34.9 \\
        Lower Bound (\%) & 3.6 & 8.7 \\
        Base Error (\%) & 1.2 & 0.8 \\
         \bottomrule
    \end{tabular}
    \vspace{1em}
    \caption{Non-trivial upper bounds on the limits of poisoning attacks across benchmark datasets on linear SVM. The \emph{Upper Bound} on the limits of poisoning attacks is obtained by minimizing the upper bound \Cref{eq:min-max upper bound on opt poison}. The \emph{Lower Bound} on the limit of poisoning attacks is obtained by reporting the highest poisoned error from the best empirical attacks. The datasets are sorted based on the lowest and highest empirical base error.}
    \label{tab:non-trvial upper bound}
\end{table}

\subsubsection{Using different projection vectors}\label{sec:different proj vector}

In the main paper, we used the weight of the clean model as the projection vector and found that the computed metrics are highly correlated with the empirical attack effectiveness observed for different benchmark datasets. However, there can also be other projection vectors that can be used for explaining the different vulnerabilities, as mentioned in \Cref{rem:general upper bound on optimal risk}. 

We conducted experiments that use the projection vector that minimizes the upper bound on optimal poisoning attacks, given in Equation \ref{eq:general risk upper bound}. The upper-bound minimization corresponds to a min-max optimization problem. We solve it using the online gradient descent algorithm (alternatively updating the poisoning points and model weight), adopting an approach similar to the one used by Koh et al.\ for the i-Min-Max attack~\citep{koh2022stronger}. We run the min-max optimization for 30,000 iterations with learning rate of 0.03 for the weight vector update, and pick the weight vector that results in the lowest upper bound in Equation \ref{eq:general risk upper bound}. 

The results on two of the benchmark datasets, MNIST 1--7 and Dogfish, are summarized in Table~\ref{tab:other proj vector}. From the table, we can see that, compared to the clean model, the new projection vector reduces the projected constraint size (increases Sep/Size), which probably indicates the weight vector obtained from minimizing the upper bound focuses more on minimizing the term $\ell_{\mathrm{M}}(\mathrm{Size}_{\bm{w_c}}(\cC))$ in Equation~\ref{eq:general risk upper bound}. Nevertheless, projecting onto the new weight vector can still well explain the difference between the benchmark datasets. 

Using the obtained vector $\bw_U$ (denote the corresponding hypothesis as $h_U$) above, we can also compute an (approximate) upper bound on the test error of the optimal indiscriminate poisoning attack, which is proposed by Steinhardt et al. \citep{steinhardt2017certified} and is shown in \eqref{eq:min-max upper bound on opt poison}. The upper bound is an approximate upper bound to the test error because $\max_{\cS_p \subseteq \mathcal{C}}\
L(\hat{h}_p; \cS_c)$ only upper bounds the maximum training error (of poisoned model $\hat{h}_p$) on $\cS_c$ from poisoning and it is assumed that, with proper regularization, the training and test errors are roughly the same. We can view $h_U$ as the numerical approximation of the hypothesis that leads to the smallest upper bound on the (approximate) test error. Next, in \Cref{tab:non-trvial upper bound}, we compare the computed (approximate) upper bound and the lower bound on poisoning effectiveness achieved by best known attacks, and we can easily observe that, although the gap between the lower and upper bounds are relatively large (all datasets are vulnerable at 3\% poisoning ratio when we directly check the upper bounds), the upper bounds still vary between datasets. We also speculate that the actual performance of the best attacks may not be as high as indicated by the upper bound, because when computing the upper bound, we used the surrogate loss to upper bound the 0-1 loss (exists some non-negligible gap) and maximized the surrogate loss in the poisoning setting, which potentially introduces an even larger gap between the surrogate loss and the 0-1 loss. Hence, we motivate future research to shrink the possibly large gap.

\begin{align}
\label{eq:min-max upper bound on opt poison}
\nonumber \underset{\cS_p \subseteq \mathcal{C}}{\max}\
L(\hat{h}_p; \cS_c) 
& \leq \underset{h}{\min} \bigg( L(h;\cS_c)+\epsilon\underset{(\bx_p,y_p) \in \mathcal{C}}{\max}\ \ell(h;\bx_p,y_p)+\frac{(1+\epsilon)\lambda}{2} \|\bw\|_2^2 \bigg)\\
& \leq L(h_U;\cS_c)+\epsilon\underset{(\bx_p,y_p) \in \mathcal{C}}{\max}\ \ell(h_U;\bx_p,y_p)+\frac{(1+\epsilon)\lambda}{2} \|\bw_U\|_2^2
\end{align}

\begin{figure*}
    \centering
    \includegraphics[width=\textwidth]{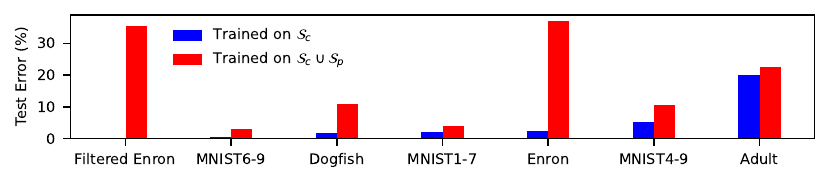}  
    \caption{Performance of best current indiscriminate poisoning attacks with $\epsilon=3\%$ across different benchmark datasets on LR. Datasets are sorted from lowest to highest base error rate. 
    }
    \label{fig:benchmark_attack_results_lr}
\end{figure*}

\begin{figure*}[ht]
    \centering
    \begin{subfigure}[]{0.32\textwidth}  
        \centering 
        \includegraphics[width=1.0\textwidth]{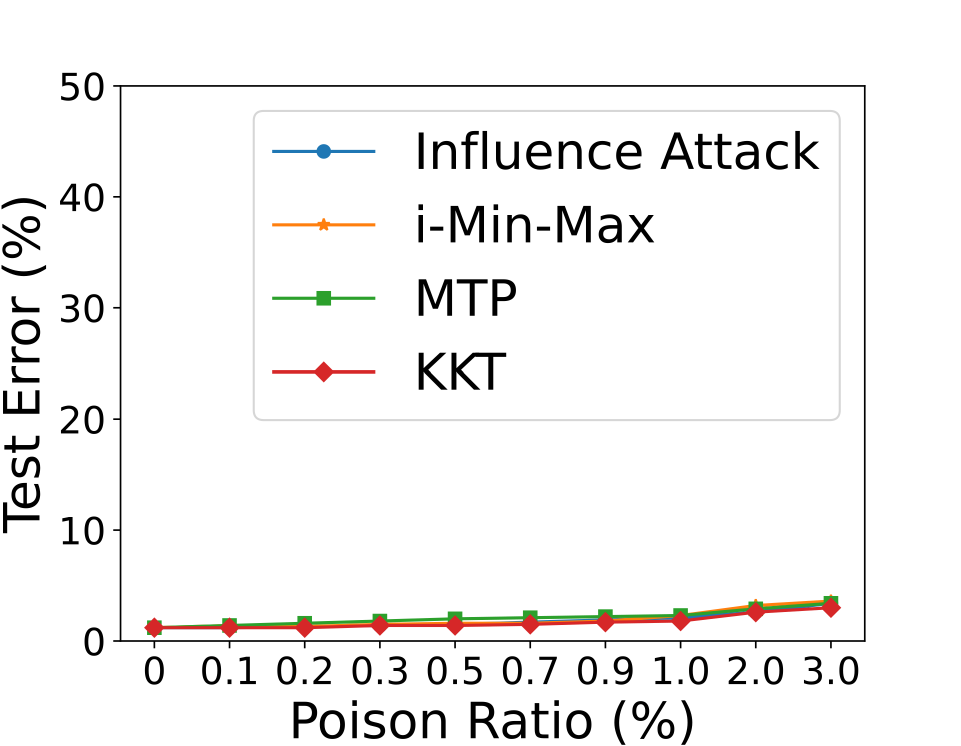}
        \caption[]%
        {MNIST 1-7}    
        \label{fig:mnist_17_attack_results}
    \end{subfigure}
    \begin{subfigure}[]{0.32\textwidth}
        \centering
        \includegraphics[width=1.0\textwidth]{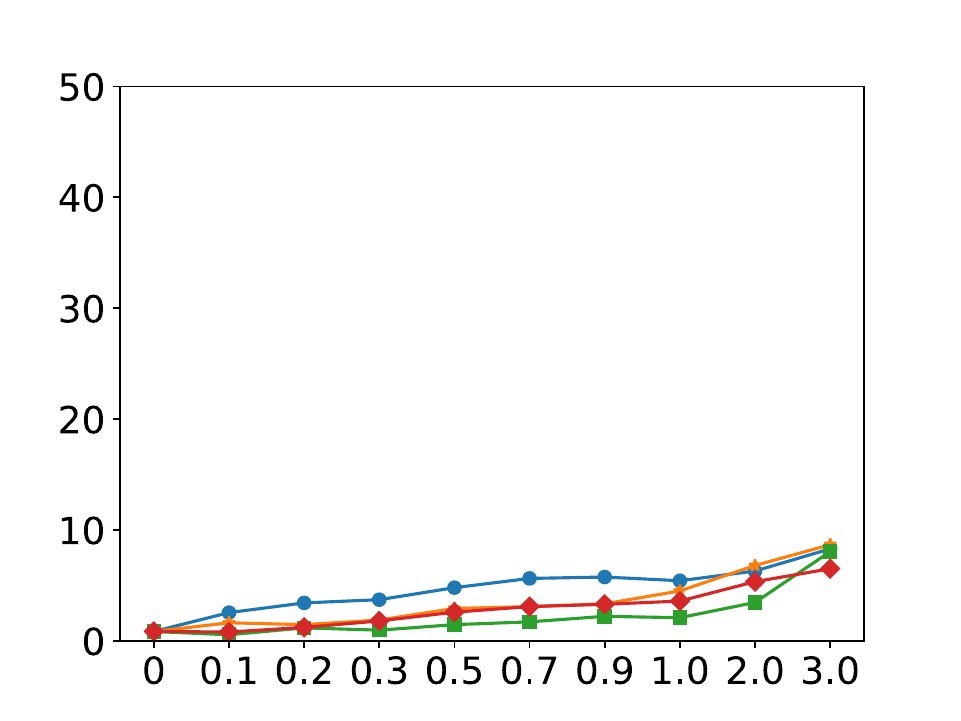}
        \caption[]
        {Dogfish} 
        \label{fig:dogfish_attack_results}
    \end{subfigure}
    \begin{subfigure}[]{0.32\textwidth}  
        \centering 
        \includegraphics[width=1.0\textwidth]{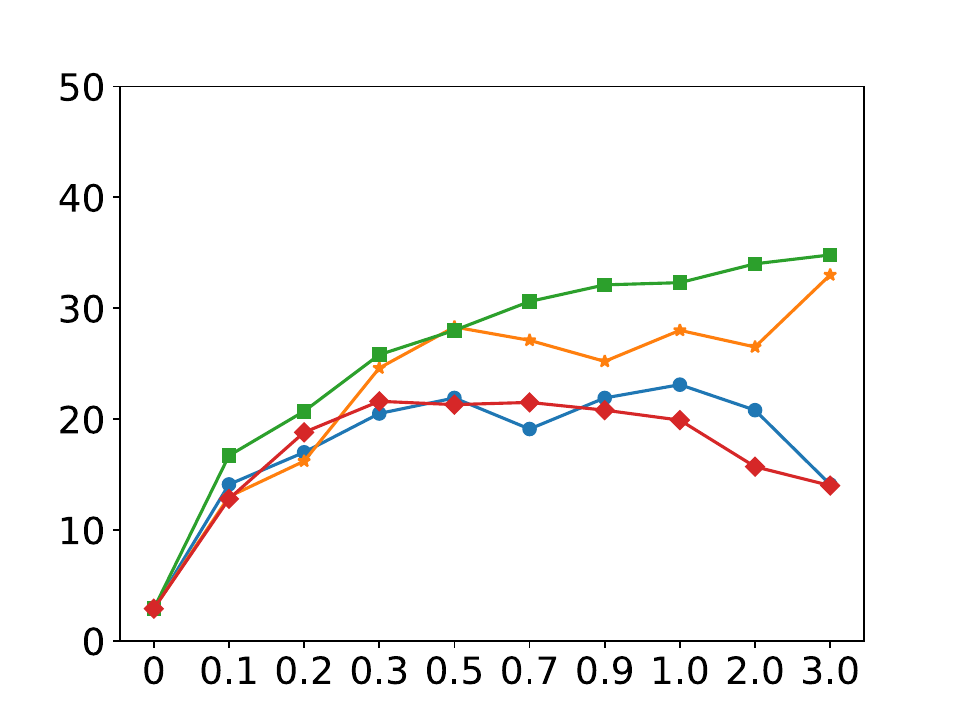}
        \caption[]%
        {Enron}    
        \label{fig:enron_attack_results}
    \end{subfigure}
    \caption[Comparison of State-of-the-Art Attacks at Various Poisoning Ratios]{Comparisons of the attack performance of existing data poisoning attacks on different benchmark datasets. Poisoning ratios are 0.1\%, 0.2\%, 0.3\%, 0.5\%, 0.7\%, 0.9\%, 1\%, 2\%, 3\%.}
    \label{fig:benchmark_attack_results_all}
    
\end{figure*}

\subsubsection{Performance of different attacks are similar}\label{sec:benchmark results all}

In this section, we first present the best attack results on different benchmark datasets on the LR model in \Cref{fig:benchmark_attack_results_lr}. Similar to the case of linear SVM as shown in \autoref{fig:benchmark_attack_results}, the performance of best known attacks vary drastically across the benchmark datasets. Although we only report the performance of the best known attack from the candidate state-of-the-art poisoning attacks, the difference among these canndidate attacks do not vary much in many settings. In \autoref{fig:benchmark_attack_results_all} we show the attack performance of different attacks on the selected benchmark datasets of MNIST 1--7, Dogfish, and Enron and the main observation is that different attacks perform mostly similarly for a given dataset (but their performance varies a lot across datasets). Also, from the attack results on Enron (\Cref{fig:enron_attack_results}), we can see that several of the attacks perform worse at higher poisoning ratios. Although there is a chance that the attack performance can be improved by careful hyperparameter tuning, it also suggests that these attacks are suboptimal. Optimal poisoning attacks should never get less effective as the poisoning ratio increases, according to \Cref{thm:optimal ratio}.

\subsubsection{Possible causal relationship of identified factors to poisoning vulnerability}\label{sec:causal relation}
\begin{table}[t]
\renewcommand*{\arraystretch}{1.3}
\centering
\begin{tabular}{ccc}
\toprule
Scale Factor $c$ & Error Increase (\%) & Sep/Size \\ \hline
1.0 & 2.4 & 0.23 \\
2.0 & 3.6 & 0.12  \\
3.0 & 4.9 & 0.08 \\
5.0 & 7.8 & 0.05 \\
7.0 & 9.8 & 0.03 \\
10.0 & 15.6 & 0.02 \\  
\bottomrule
\end{tabular}
\vspace{1em}
\caption{Impact of scale factor $c$ on poisoning effectiveness for $\mathcal{C}$ in the form of dimension-wise box-constraint as $[0, c]$ ($c>1$) for the MNIST 1--7 dataset. Originally, the dimension-wise box constraint is in $[0,1]$. The Base Error is 1.2\% and the Sep/SD factor is 6.25 and these two values will be the same for all settings of $c$ because the support set of the clean distribution is the strict subset of $\mathcal{C}$. Higher value of $c$ implies poisoning points can be more harmful.}
\label{tab:sep/size impact}
\end{table}

\begin{table}[t]
\renewcommand*{\arraystretch}{1.3}
\centering
\begin{tabular}{ccccc}
\toprule
Number of Repetitions $x$ & Base Error (\%) & Error Increase (\%) & Sep/SD & Sep/Size \\ \hline
0 & 1.2 & 2.4 & 6.25 & 0.23 \\
1 & 1.0 & 4.4 & 5.78 & 0.21 \\
2 & 1.0 & 5.7 & 5.48 & 0.19 \\
4 & 1.0 & 6.2 & 5.02 & 0.17 \\
6 & 1.2 & 6.8 & 4.75 & 0.15 \\
8 & 1.4 & 8.0 & 4.62 & 0.13 \\
10 & 1.1 & 8.1 & 4.55 & 0.12 \\  
\bottomrule
\end{tabular}
\vspace{1em}
\caption{Changing the distribution of the MNIST 1--7 dataset by replacing $x\cdot 3\%$ of correctly classified points farthest from the clean boundary with 3\% of correctly classified points (repeated $x$ times) closest to the boundary. Higher value of $x$ denotes that more near boundary points will be present in the modified training data.}
\label{tab:sep/sd impact}
\end{table}

In this section, we describe our preliminary experiment on showing the possible causal relationship from the identified two factors (i.e., Sep/SD and Sep/Size) to the empirically observed vulnerability against poisoning attacks. Given a dataset, we aim to modify it such that we can gradually change one factor of the dataset while (ideally) keeping the other one fixed, and then measure the changes in its empirical poisoning vulnerability. To cause changes to the factor of ``Sep/SD", we need to modify the original data points in the dataset. This modification may make it hard to keep the factor of ``Sep/Size" (or ``Size") fixed, as the computation of ``Size" depends on the clean boundary $\bw_c$, which is again dependent on the (modified) clean distribution. Therefore, even if we use the same constraint set $\cC$ for different modified datasets, the ``Size" or ``Sep/Size" can be different. We conducted preliminary experiments on MNIST 1--7 and the details are given below. 

\shortsection{Change Sep/Size}, 
In order to measure the impact of Sep/Size without impacting Sep/SD, we can choose $\cC$ that covers even larger space so that the resulting constraint set will not filter out any clean points from the original training data, which then implies both the $\bw_c$ and the corresponding metric of Sep/SD will not change. We performed a hypothetical experiment on MNIST 1--7 where we scale the original box-constraint from $[0,1]$ to $[0,c], c>=1$, where $c$ is the scale factor and $c=1$ denotes the original setting for MNIST 1--7 experiment. The experiments are hypothetical because, in practice, a box-constraint of $[0,c]$ with $c>1$ will result in invalid images. We experimented with $c=2,3,5,7,10$ and the results are summarized in Table \ref{tab:sep/size impact}. From the table, we see that as $c$ becomes larger, the Size increases and correspondingly the Sep/Size decreases. These changes are associated with a gradual increase in the ``Error Increase" from poisoning. 

\shortsection{Change Sep/SD}
As mentioned above, in order to change Sep/SD, we need to modify the original data points, and this modification makes it hard for us to keep Sep/Size fixed for MNIST 1--7 even when using the same $\cC$. We leave the exploration of better ways to change data distributions with the Sep/Size fixed as future work. To modify the original data points, we first train a linear SVM on the original MNIST 1-7 dataset. Then, we pick 3\% of all (training + test data) correctly classified points that are closest to the clean boundary. These points, once repeated $x$ times, replace $x\times 3\%$ of correctly classified training and testing data points that are furthest from the boundary. This process can be viewed as oversampling in the near-boundary region and undersampling in the region far from the boundary, and aim to create a dataset with a low separability ratio. We tested with $x=1,2,4,6,8,10$. We only deal with correctly classified points because we want to ensure the modified dataset still has high clean accuracy. The results are summarized in Table \ref{tab:sep/sd impact} and we can see that as we increase $x$, the Sep/SD gradually decreases, and the attack effectiveness (measured by the error increase) increases. Interestingly, the Sep/Size also decreases even though it is computed with the same $\cC$. Although the preliminary results above are not sufficient to prove a causation claim, they still show the possibility with the empirical evidence on the MNIST 1--7 dataset.

\subsubsection{Evaluating feature extractors with downstream data removed}
\label{sec:feature implication exclude}

\begin{figure*}[t]
    \centering
    \includegraphics[width=0.6\textwidth]{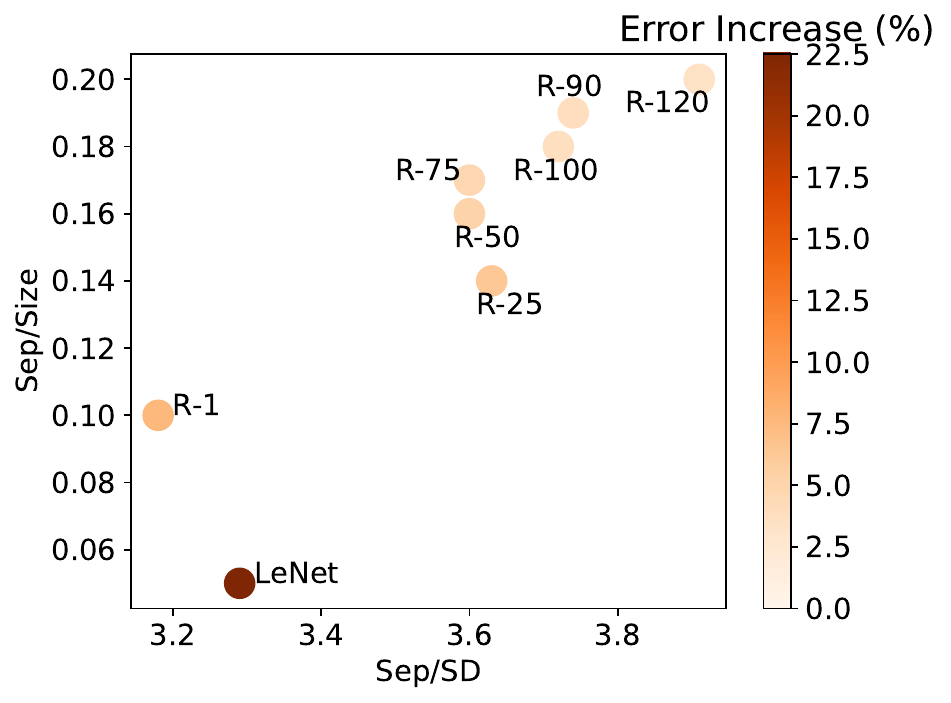}  
    \caption{Impact of feature representations on the poisoning effectiveness. The training data of the pretrained model excludes the downstream data of classes ``Truck" and ``Ship". 
    }
    \label{fig:feature implication exclude}
\end{figure*}

\begin{table}[t]
    \centering
    \begin{tabular}{ccc}
    \toprule
        Experiments & Sep/SD vs Error Increase & Sep/Size vs Error Increase \\ \midrule
        New & -0.66 (-0.90) & -0.90 (-0.99) \\
        Original & -0.98 (-0.97) & -0.98 (-0.97) \\
        \bottomrule
    \end{tabular}
    \vspace{0.5em}
    \caption{Pearson correlation between \emph{Error Increase} and each of \emph{Sep/SD} and \emph{Sep/Size}. The value in the parentheses denotes the Pearson correlation coefficient when we exclude the LeNet structure and only consider ResNet-18 models.}
    \label{tab:pearson correlation}
\end{table}

The experiments in \Cref{fig:feature_implication} uses the entire CIFAR10 training data to train the pretrained feature extractor, which also includes the data of class ``Truck" and ``Ship" that are used in the downstream training and analysis. Although our original experiment aims to show how the victim can gain improved resistance to poisoning by using better feature extractors, for the particular experiment in \Cref{fig:feature_implication}, a more realistic setting is to exclude the downstream training data used to train the (upstream) pretrained feature extractors. Therefore, we repeated the same experiment, but now removed the data points from the classes of "Ship" and "Truck" from the upstream training data and then train the (pretrained) models. The results are summarized in \Cref{fig:feature implication exclude} and the overall finding is similar to the one in \Cref{fig:feature_implication} that when we adopt a deep architecture and train it for more epochs, the quality of the obtained feature extractor increases and so, it appears to improve the resistance of downstream classifiers to poisoning attacks. The increased quality of the feature extractor is in general correlated to the increased Sep/SD and Sep/Size. To quantitatively measure the correlation of the error increase and each of the two factors, we show the Pearson correlation coefficient of the relevant metrics and the results are summarized in \Cref{tab:pearson correlation}. From the table, we can see that when we directly measure the Pearson correlation with each individual factor, the correlation between the two factors and error increase is still very strong as the lowest (absolute) correlation coefficient of -0.66 still indicates a strong negative correlation. And if we further restrict ourselves to the ResNet-18 models only, then the (absolute) correlation coefficient is even higher, which again confirms the two factors are strongly (and negatively) correlated with the error increase. 

In \Cref{tab:pearson correlation}, we also compare the new experiment with the original experiment in \Cref{fig:feature_implication} and observe that the (absolute) Pearson correlation is reduced slightly, which is due to the fact that when the pretrained model does not involve any downstream data, the two factors may not consistently change in the same direction and undermines the Pearson correlation of each individual factor with the error increase. However, if we compute the Pearson correlation between the number of epochs for training the ResNet-18 model (roughly incorporates the combined effects of both factors) and the error increase from poisoning, we observe an even stronger negative correlation (-0.99) compared to the original experiment (-0.95), which again supports the claim that training the deep architecture for more epochs seems to be one possible approach to improve resistance to poisoning. The slight increase in correlation in the new experiment results from the slightly higher covariance between the error increase and the number of epochs, but this difference is not significant since we only consider a limited number of samples when computing the correlation.

Developing general methods for learning meaningful feature extractors such that the two factors can be improved simultaneously in a controlled way would be an interesting research direction, which we plan to pursue in the future. However, we also note that this is a challenging task from the practical perspective since the goal is to extract useful feature representations for downstream applications, so the base error should also be considered in addition to the error increase from poisoning, especially when no downstream data are used in the upstream training. Our initial thought is to consider an intermediate few-shot learning setup \cite{wang2020generalizing}, where a small number of (clean) downstream training data are allowed to be incorporated in training the feature extractor. This few-shot learning setup might correspond to an intermediate setting between the original experiment with full knowledge of the (clean) downstream training data and the new experiment with almost zero knowledge but still reflects certain real-world application scenarios. To conclude, all results in both the original and new experiments support our main claim that it is possible for the victim to choose better feature extractors to improve the resistance of downstream classifiers to indiscriminate poisoning attacks.

\section{Comparison to LDC and Aggregation Defenses}\label{sec:ldc}

We first provide a more thorough discussion on the differences between our work and the Lethal Dose Conjecture (LDC) \cite{wang2022lethal} from NeurIPS 2022, which had similar goals in understanding the inherent vulnerabilities of datasets but focused on targeted poisoning attacks (\Cref{sec:relation to ldc}). Then, we discuss how our results can also be related to aggregation based defenses whose asymptotic optimality on robustness against targeted poisoning attacks is implied by the LDC conjecture (\Cref{sec:compare to agg defense}). 

\subsection{Relation to LDC}\label{sec:relation to ldc}
As discussed in \Cref{sec:intro}, LDC is a more general result and covers broader poisoning attack goals (including indiscriminate poisoning) and is agnostic to the learning algorithm, dataset and also the poisoning generation setup. However, this general result may give overly pessimistic estimates on the power of optimal injection-only poisoning attacks in the indiscriminate setting we consider. We first briefly mention the main conjecture in LDC and then explain why the LDC conjecture overestimated the power of indiscriminate poisoning attacks, followed by a discussion on the relations of the identified vulnerability factors in this paper and the key quantity in LDC. 

\shortsection{The main conjecture in LDC}
LDC conjectures that, given a (potentially poisoned) dataset of size $N$, the tolerable sample size for targeted poisoning attacks (through insertion and/or deletion) by any defenses and learners, while still predicting a known test sample correctly, is an asymptotic guarantee of $\Theta(N/n)$, where $n<N$ is the sample complexity of the most data-efficient learner (i.e., a learner that uses smallest number of clean training samples to make correct prediction). Although it is a conjecture on the asymptotic robustness guarantee, it is rigorously proven for cases of bijection uncovering and instance memorization, and the general implication of LDC is leveraged to improve existing aggregation based certified defenses against targeted poisoning attacks.

\shortsection{Overestimating the power of indiscriminate poisoning}
LDC conjectures the robustness against targeted poisoning attacks, but the same conjecture can also be used in indiscriminate setting straightforwardly by taking the expectation over the tolerable samples for each of the test samples to get the expected tolerable poisoning size for the entire distribution (as mentioned in the original LDC paper) or by choosing the lowest value to give a worst case certified robustness for the entire distribution. The underlying assumption in the reasoning above is that individual samples are independently impacted by their corresponding poisoning points while in the indiscriminate setting, the entire distribution is impacted simultaneously by the same poisoning set. The assumption on the independence of the poisoning sets corresponding to different test samples might overestimate the power of indiscriminate poisoning attacks as it might be impossible to simultaneously impact different test samples (e.g., test samples with disjoint poisoning sets) using the same poisoning set. In addition, the poisoning generation setup also greatly impacts the attack effectiveness---injection only attacks can be much weaker than attacks that modify existing points, but LDC provides guarantees against this worst case poisoning generation of modifying points. These general and worst-case assumptions mean that LDC might overestimate the power of injection-only indiscriminate poisoning attacks considered in this paper. 

In practice, insights from LDC can be used to enhance existing aggregation based defenses. If we treat the fraction of (independently) certifiable test samples by the enhanced DPA \citep{levine2020deep} in Figure 2(d) (using $k=500$ partitions) in the LDC paper as the certified accuracy (CA) for the entire test set in the indiscriminate setting, the CA against indiscriminate poisoning attack is 0\% at the poisoning ratio of $\epsilon = 0.5\%$ $(250/50000)$. In contrast, the best indiscriminate poisoning attacks \citep{lu2023exploring} on CIFAR10 dataset reduces the model accuracy from 95\% to 81\% at the much higher $\epsilon=3\%$ poisoning ratio using standard training (i.e., using $k=1$ partition). Note that using $k=1$ partition is a far less optimal choice than $k=500$ as $k=1$ will always result in $0\%$ CA for aggregation based defenses. Our work explicitly considers injection only indiscriminate poisoning attacks so as to better understand its effectiveness.

While it is possible that current indiscriminate attacks for neural networks are far from optimal and there may exist a very strong (but currently unknown) poisoning attack that can reduce the neural network accuracy on CIFAR10 to 0\% at a 0.5\% poisoning ratio, we speculate such likelihood might be low. This is because, neural networks are found to be harder to poison than linear models \citep{lu2022indiscriminate,lu2023exploring} while our empirical findings in the most extensively studied linear models in \Cref{sec:sota_attack_eval} indicate some datasets might be inherently more robust to poisoning attacks.

\shortsection{Providing finer analysis on the vulnerability factors}
As mentioned above, LDC might overestimate the power of indiscriminate poisoning attacks. In addition, the key quantity $n$ is usually unknown and hard to estimate accurately in practice and the robustness guarantee is asymptotic while the constants in asymptotic guarantees can make a big difference in practice. However, the generic metric $n$ still offers critical insights in understanding the robustness against indiscriminate poisoning attacks. In particular, our findings on projected separability and standard deviation can be interpreted as the first step towards understanding the dataset properties that can be related to the (more general) metric $n$ (and maybe also the constant in $\Theta(1/n)$) in LDC for linear learners. Indeed, it is an interesting future work to identify the learning task properties that impact $n$ at the finer-granularity.  

As for the projected constraint size (Definition \ref{def:projected constraint size}), we believe there can be situations where it may be independent from $n$. The main idea is that in cases where changing $\cC$ arbitrarily will not impact the clean distribution (e.g., when the support set of the clean distribution is a strict subset of $\cC$, arbitrarily enlarging $\cC$ will still not impact the clean distribution), the outcomes of learners trained on clean samples from the distribution will not change (including the most data-efficient learner) and hence $n$ will remain the same for different permissible choices of $\cC$, indicating that the vulnerability of the same dataset remains the same even when $\cC$ changes drastically without impacting the clean distribution. However, changes in $\cC$ (and subsequently changes in the projected constraint size) will directly impact the attack effectiveness, as a larger $\cC$ is likely to admit stronger poisoning attacks. %

To illustrate how much the attack power can change as $\cC$ changes, we conduct experiments on MNIST 1--7 and show that scaling up the original dimension-wise box-constraint from $[0, 1]$ to $[0, c]$ (where $c>1$ is the scale factor) can significantly boost attack effectiveness. \autoref{tab:ldc_compare} summarizes the results and we can observe that, as the scale factor $c$ increases (enlarged $\cC$, increased projected constraint size and reduced Sep/Size), the attack effectiveness also increases significantly. Note that this experiment is an existence proof and MNIST 1--7 is used as a hypothetical example. In practice, for normalized images, the box constraint cannot be scaled beyond [0,1] as it will result in invalid images. 

\begin{table}[t]
\renewcommand*{\arraystretch}{1.3}
\centering
\begin{tabular}{ccc}
\toprule
Scale Factor $c$ & Error Increase (\%) & Sep/Size \\ \hline
1.0 & 2.2 & 0.27 \\
2.0 & 3.1 & 0.15  \\
3.0 & 4.4 & 0.10 \\
\bottomrule
\end{tabular}
\vspace{0.3em}
\caption{Impact of scale factor $c$ on poisoning effectiveness for $\cC$ in the form of dimension-wise box-constraint as $[0, c]$. Base Error is 1.2\%. Base Error and Sep/SD will be the same for all settings because support set of the clean distribution is the strict subset of $\cC$.}
\label{tab:ldc_compare}
\vspace{-0.8em}
\end{table}

\subsection{Relation to Aggregation-based Defenses}\label{sec:compare to agg defense}
Aggregation-based (provable) defenses, whose asymptotic optimality is implied by the LDC, work by partitioning the potentially poisoned data into $k$ partitions, training a base learner on each partition and using majority voting to obtain the final predictions. These defenses provide certified robustness to poisoning attacks by giving the maximum number of poisoning points that can be tolerated to  correctly predict a known test point, which can also be straightforwardly applied to indiscriminate setting by treating different test samples independently, as mentioned in the discussion of LDC. 

Because no data filtering is used for each partition of the defenses, at the partition level, our results (i.e., the computed metrics) obtained in each poisoned partition may still be similar to the results obtained on the whole data without partition (i.e., standard training, as in this paper), as the clean data and $\mathcal{C}$ (reflected through the poisoning points assigned in each partition) may be more or less the same. At the aggregation level, similar to the discussion on LDC, these aggregation defenses may still result in overly pessimistic estimates on the effectiveness of injection only indiscriminate poisoning attacks as the certified accuracy at a particular poisoning ratio can be very loose, and the two possible reasons are: 1) straightforwardly applying results in targeted poisoning to indiscriminate poisoning might lead to overestimation and 2) considering the worst case adversary of modifying points might overestimate the power of injection only attacks in each poisoned partition. Therefore, our work can be related to aggregation defenses via reason 2), as it might be interpreted as the first step towards identifying factors that impact the attack effectiveness of injection only indiscriminate attacks in each poisoned partition, which may not always be highly detrimental depending on the learning task properties in each partition, while these aggregation defenses assume the existence of a single poisoning point in a partition can make the model in that partition successfully poisoned. 

\shortsection{Loose certified accuracy in indiscriminate setting} 
Given a poisoning budget $\epsilon$, aggregation based defenses give a certified accuracy against indiscriminate poisoning attacks by first computing the tolerable fraction of poisoning points for each test sample and all the test samples with tolerable fraction smaller than or equal to $\epsilon$ are certifiably robust. Then, the fraction of those test samples to the total test samples gives the certified accuracy for the test set. Similar to the result of CIFAR10 shown in LDC, here, we provide an additional result of certified accuracy for neural networks trained on the MNIST dataset: the state-of-the-art finite aggregation (FA) method \citep{wang2022improved} gives a certified accuracy of 0\% at 1\% poisoning ratio (600/60,000) using $k=1200$ partitions while at the much higher 3\% poisoning ratio, the current state-of-the-art indiscriminate poisoning attack \citep{lu2023exploring} can only reduce the accuracy of the neural network trained on MNIST without partitioning (i.e., $k=1$, a far less optimal choice from the perspective of aggregation defenses) from  over 99\% to only around 90\%.

\section{Extension to Multi-class Settings and Non-linear Learners }\label{sec:non-linear}
In this section, we first provide the high-level idea of extending the metric computation from binary setting to multi-class setting and then provide empirical results on multi-class linear models and show that these metrics can still well-explain the observations in multi-class linear classifiers (\Cref{sec:handle multi-class}). Then, we provide the idea of extending the metrics from linear models to neural networks (NNs) and also the accompanying experimental results (\Cref{sec:multi-nn}). In particular, we find that, for the same learner (e.g., same or similar NN architecture), our metrics may still be able to explain the different dataset vulnerabilities. However, the extended metrics cannot explain the vulnerabilities of datasets that are under different learners (e.g., NN with significantly different architectures), whose investigation is a very interesting future work, but is out of the scope of this paper. 

\subsection{Extension to Multi-class Linear Learners}\label{sec:handle multi-class}
Multi-class classifications are very common in practice and therefore, it is important to extend the computation of the metrics from binary classification to multi-class classification. For linear models, $k$-class classification ($k>2$) is handled by treating it as binary classifications in the "one vs one" mode that results $k(k-1)/2$ binary problems by enumerating over every pair of classes or in the "one vs rest" mode that results in $k$ binary problems by picking one class as the positive class and the rest as the negative class. 
In practice, the "one vs rest" mode is preferred because it requires training smaller number of classifiers. In addition, the last classification layer of neural networks may also be roughly viewed as performing multi-class classification in ``one vs rest" mode. Therefore, we only discuss and experiment with multi-class linear models trained in ``one vs rest" mode in this section, consistent with the models in prior poisoning attacks \citep{koh2022stronger,lu2023exploring}, but classifiers trained in ``one vs one" mode can also be handled similarly. 

\shortsection{Computation of the metrics} Although we consider linear models in ``one vs rest" mode, when computing the metrics, we handle it in a way similar to the ``one vs one" mode -- when computing the metrics, given a positive class, we do not treat all the remaining k-1 classes (constitute the negative class) as a whole, instead, for each class in the remaining classes, we treat it as a ``fake" negative class and compute the metrics as in the binary classification setting. Then from the $k-1$ metrics computed, we pick the positive and ``fake" negative pair with smallest separability metric and use it as the metric for the current positive and negative class (includes all remaining $k-1$ classes)\footnote{We still use the same projected constraint size for the $k-1$ positive and ``fake" negative pairs because, the projected constraint size measures how much the decision boundary can be moved in presence of clean data points, which do not distinguish the points in the ``fake" netgative class and the remaining $k-1$ classes.}. Once we compute the metrics for all the $k$ binary pairs, we report the lowest metrics obtained. The reasoning behind computing the metrics in (similar to) ``one vs one" mode is, for a given positive class, adversaries may target the most vulnerable pair from the total $k-1$ (positive, negative) pairs to cause more damage using the poisoning budget. Therefore, treating the remaining $k-1$ pairs as a whole when computing the metrics will obfuscate this observation and may not fully reflect the poisoning vulnerabilities of a dataset. 

We provide a concrete example on how treating the remaining classes as a whole can lead to wrong estimates on the dataset separability:
we first train simple CNN models on the full CIFAR-10 \citep{krizhevsky2009learning} and MNIST datasets and achieve models with test accuracies of $\approx 70\%$ and $>99\%$ respectively. When we feed the MNIST and CIFAR-10 test data through the model and inspect the feature representations, the t-SNE graph indicate that the CIFAR-10 dataset is far less separable than the MNIST, which is expected as CIFAR-10 has much lower test accuracy compared to MNIST. However, when we compute the separability metrics in our paper by considering all $k-1$ classes in the negative class, the separability of CIFAR-10 is similar to the separability of MNIST, which is inconsistent with drastic differences in the test accuracies of the respective CNN models. In contrast, if we treat each class in the remaining $k-1$ classes separately and pick the smallest value, we will again see the expected result that CIFAR-10 is far less separable than MNIST. Therefore, for the following experiments, we will compute the metrics by treating the remaining $k-1$ classes individually. We first provide the results of multi-class linear models for MNIST dataset below and then discuss our initial findings on the neural networks for CIFAR-10 and MNIST in~\Cref{sec:multi-nn}.

\begin{table*}[t]
\centering
\begin{tabular}{cccccc}
\toprule
Datasets & Base Error (\%) & Poisoned Error (\%) & Increased Error (\%) & Sep/SD & Sep/Size \\ \midrule
SVM & 7.4\% & 15.4\% & 8.0\% & \textbf{2.23}/2.73 & 0.06/\textbf{0.03} \\
LR & 7.7\% & 30.6\% & 22.9\% & 1.15 & 0.02 \\
\bottomrule
\end{tabular}
\caption{Results of Linear Models on MNIST using 3\% poisoning ratio. The ``Poisoned Error" is directly quoted from Lu et al.~\citep{lu2022indiscriminate} and SVM one is quoted from Koh et al.~\citep{koh2022stronger}. SVM contains two values for Sep/SD and Sep/Size because there are two binary pairs with the lowest value for each of the two metrics (lowest value is made bold).}
\label{tab:linear mnist}
\vspace{-0.8em}
\end{table*}

\shortsection{Results on multi-class linear learners} As explained above, when checking the $k$-binary pairs for a $k$-class problem, we report the lowest values for the Sep/SD and Sep/Size metrics. However, in some cases, the lowest values for the two metrics might be in two different pairs and in this case, we will report the results of both pairs. Table \ref{tab:linear mnist} summarizes the results, where the poisoned errors are directly quoted from the prior works---LR error is from Lu et al.~\citep{lu2022indiscriminate} and the SVM error is from Koh et al.~\citep{koh2022stronger}. We can see that MNIST dataset is indeed more vulnerable than the selected MNIST 1--7 and MNIST 6--9 pairs because it is less separable and also impacted more by the poisoning points. We also note that the poisoned error of SVM is obtained in the presence of data sanitization defenses and hence, the poisoned error may be further increased when there are no additional defenses. We also see that, for SVM, although the lowest values for Sep/SD and Sep/Size are in two different pairs, their results do not differ much, indicating that either of them can be used to represent the overall vulnerability of MNIST.

\subsection{Extension to Multi-class Neural Networks}\label{sec:multi-nn}
\begin{table*}[t]
\centering
\begin{tabular}{cccccc}
\toprule
Datasets & Base Error (\%) & Poisoned Error (\%) & Increased Error (\%)  & Sep/SD & Sep/Size \\ \midrule
MNIST & 0.8\% & 1.9\% & 1.1\% & 4.58 & 0.10 \\
CIFAR-10 & 31.0\% & 35.3\% & 4.3\% & 0.24 & 0.01 \\
\bottomrule
\end{tabular}
\caption{Results on Simple CNN Models for MNIST and CIFAR-10 datasets using 3\% poisoning ratio. The ``Poisoned Error" of both datasets are directly quoted from Lu et al.~\citep{lu2022indiscriminate}}
\label{tab:cnn_exps}
\end{table*}

\begin{table*}[t]
\centering
\begin{tabular}{ccccc}
\toprule
Datasets & Base Error (\%) & Increased Error (\%)  & Sep/SD & Sep/Size \\ \midrule
MNIST & 0.8\% & 9.6\% & 4.58 & 0.10 \\
CIFAR-10 & 4.8\% & 13.7\% & 6.36 & 0.24 \\
\bottomrule
\end{tabular}
\caption{Results on Simple CNN Model for MNIST and ResNet18 model for CIFAR-10 datasets using 3\% poisoning ratio. The ``Poisoned Error" of both datasets are directly quoted from Lu et al.~\citep{lu2023exploring}.}
\label{tab:diff cnn exps}
\end{table*}

We first note that the insights regarding the separability and constraints set $\cC$ can be general, as the the first metric measures the sensitivity of the dataset against misclassification when the decision boundary is perturbed slightly. The latter captures how much the decision boundary can be moved by the poisoning points once injected into the clean training data. The Sep/SD and Sep/Size metrics used in this paper are the concrete substantiations of the two metrics under linear models. Specific ways to compute the metrics in non-linear settings should still (approximately) reflect the high level idea above. Below, we use neural network (NN) as an example.

\shortsection{High level idea} We may partition the neural network into two parts of feature extractor and linear classification module and we may view the feature representations of the input data as a "new" data in the corresponding feature space, and so that we can convert the metric computations for non-linear neural network into metric computations (on feature space) for linear models. To be more concrete, we propose to use a fixed feature extractor, which can be extractors inherited from pretrained models (e.g., in transfer learning setting) or trained from scratch on the clean data, to map the input data to the feature space. Here, if the victim also uses the same pretrained feature extractor (as in the transfer learning setting), then our metrics can have higher correlation with the poisoned errors from existing attacks because the non-linear feature extractor is now independent from the poisoned points used in the victim's training. Below, we consider the from-scratch training case as it is more challenging. 

\shortsection{Computation of the metrics} Although the feature extractor will also be impacted by the poisoning points now, in our preliminary experiment, we will still use the extractor trained on the clean data and leave the exploration of other better feature extractors as future work. Using the transformation from the clean feature extractor, the projected separability and standard deviation can be easily computed. But the computation of the projected constraint size can be tricky, because the set $\mathcal{C}$ after transforming through the feature extractor can be non-convex and sometimes, for complicated $\mathcal{C}$, computing such transformation can be very challenging (can be a case even for linear models), but is a very interesting direction to explore. For the simple forms of $\cC$ such as the dimension-wise box constraints considered in this paper, we may leverage the convex outer polytope method \cite{wong2018provable} to bound the activation in each layer till the final feature layer so that we can obtain a final transformed convex set $\mathcal{C}^{'}$ using the feature extractor, which is a set that contains the original $\cC$. However, due to time limitation, when computing the projected constraint size in the following experiments, we simply set $\cC$ as the dimension-wise box-constraints, whose minimum and maximum values are computed from the feature representations of the clean data points, similar to the transfer-learning experiment in \Cref{sec:implications}.    

\shortsection{Results under similar learners} For the experiments, we use the simple CNN models presented in Lu et al.~\citep{lu2022indiscriminate} for MNIST and CIFAR-10 datasets (similar architecture). We directly quote the attack results of TGDA attack by Lu et al.~\citep{lu2022indiscriminate} for both the CIFAR-10 and MNIST datasets. Note that, very recently, a stronger GC attack is also proposed by Lu et al.~\citep{lu2023exploring} and outperforms the TGDA attack. However, we could not include the newer result because the code is not published and the evaluation in the original paper also did not include the simple CNN for CIFAR-10 dataset. The results are shown in Table~\ref{tab:cnn_exps}. From the table, we can see that CIFAR-10 tends to be more vulnerable than MNIST as the Sep/SD and Sep/Size metrics (in neural networks) are all much lower than those of MNIST. These significantly lower values of CIFAR-10 may also suggest that the poisoned error for CIFAR-10 with simple CNN maybe increased further (e.g., using the stronger attack in Lu et al.~\citep{lu2023exploring}).  

\shortsection{Results under different learners} Above, we only showed results when the MNIST and CIFAR-10 datasets are compared under similar learners. However, in practical applications, one might use deeper architecture for CIFAR-10 and so, we computed the metrics for CIFAR-10 using ResNet18 model. Then we compare the metrics of MNIST under simple CNN and the metrics of CIFAR-10 under ResNet18 in Table \ref{tab:diff cnn exps}, where the poisoned errors are quoted from the more recent GC attack \citep{lu2023exploring} because the attack results are all available in the original paper. However, from the table we can see that, although MNIST is less separable and impacted more by the poisoning points, the error increase is still slightly smaller than CIFAR-10, which is not consistent with our metrics. If the current attacks are already performing well and the empirical poisoned errors are indeed good approximations to the inherent vulnerabilities, then we might have to systematically investigate the comparison of vulnerabilities of different datasets under different learners.

\section{Evaluation on Synthetic Datasets}
\label{sec:synthetic_exps}

In this section, we empirically test our theory on synthetic datasets that are sampled from the considered theoretical distributions in \Cref{sec:1-D Gaussian}.

\begin{figure*}[t]
    \centering 
    \includegraphics[width=0.7\textwidth]{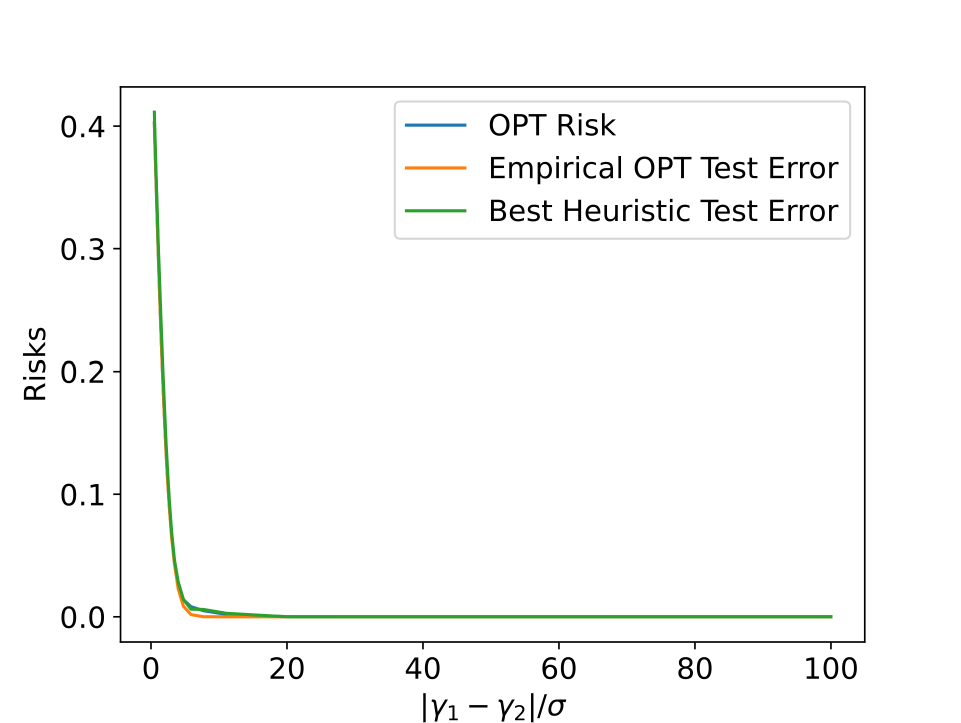}
    \caption[]%
    {Measuring the performance of optimal poisoning attacks with different values of separability ratio $|\gamma_1-\gamma_2|/\sigma$. \emph{Best Heuristic} denotes the best performing attack in the literature.}
    \label{fig:ratio_impact}
\end{figure*}

\subsection{Synthetic Datasets}
According to Remark \ref{rem:dataset properties}, there are two important factors to be considered: (1) the ratio between class separability and within-class variance $|\gamma_1-\gamma_2|/\sigma$, denoted by $\beta$ for simplicity; (2) the size of constraint set $u$.
We conduct synthetic experiments in this section to study the effect of these factors on the performance of (optimal) data poisoning attacks.

More specifically, we generate 10,000 training and 10,000 testing data points according to the Gaussian mixture model \eqref{eq:GMM generating process} with negative center $\gamma_1=-10$ and positive center $\gamma_2=0$.  Throughout our experiments, $\gamma_1$ and $\gamma_2$ are kept fixed, whereas we vary the variance parameter $\sigma$ and the value of $u$. The default value of $u$ is set as $20$ if not specified. Evaluations of empirical poisoning attacks require training linear SVM models, where we choose $\lambda=0.01$. The poisoning ratio is still set as $3\%$, consistent with evaluations on the benchmark datasets. 

\shortsection{Impact of $\beta$}
First, we show how the optimal attack performance changes as we increase the value of $\beta$. We report the risk achieved by the OPT attack based on Theorem \ref{thm:1-D gaussian optimality}. 
Note that we can only obtain approximations of the inverse function $g^{-1}$ using numerical methods, which may induce a small approximation error for evaluating the optimal attack performance.
For the finite-sample setting, we also report the empirical test error of the poisoned models induced by the empirical OPT attack and the best current poisoning attack discussed in~\Cref{sec:sota_attack_eval}, where the latter is termed as \emph{Best Heuristic} for simplicity. Since the poisoned models induced by these empirical attacks do not restrict $w\in\{-1, 1\}$,  we normalize $w$ to make the empirical results comparable with our theoretical results. 

\autoref{fig:ratio_impact} summarizes the attack performance when we vary $\beta$. As the ratio between class separability and within-class variance increases, the risk of the OPT attack and empirical test errors of empirical OPT and best heuristic attacks gradually decrease. This is consistent with our theoretical results discussed in Section \ref{sec:1-D Gaussian}.
Note that there exists a minor difference between these attacks when the value of $\beta$ is small, where the test error attained by the best current heuristic poisoning attack is slightly higher than that achieved by the empirical OPT attack. This is due to the small numerical error induced by approximating the inverse function $g^{-1}$.

\begin{figure*}[t] 
    \centering
    \includegraphics[width=0.7\textwidth]{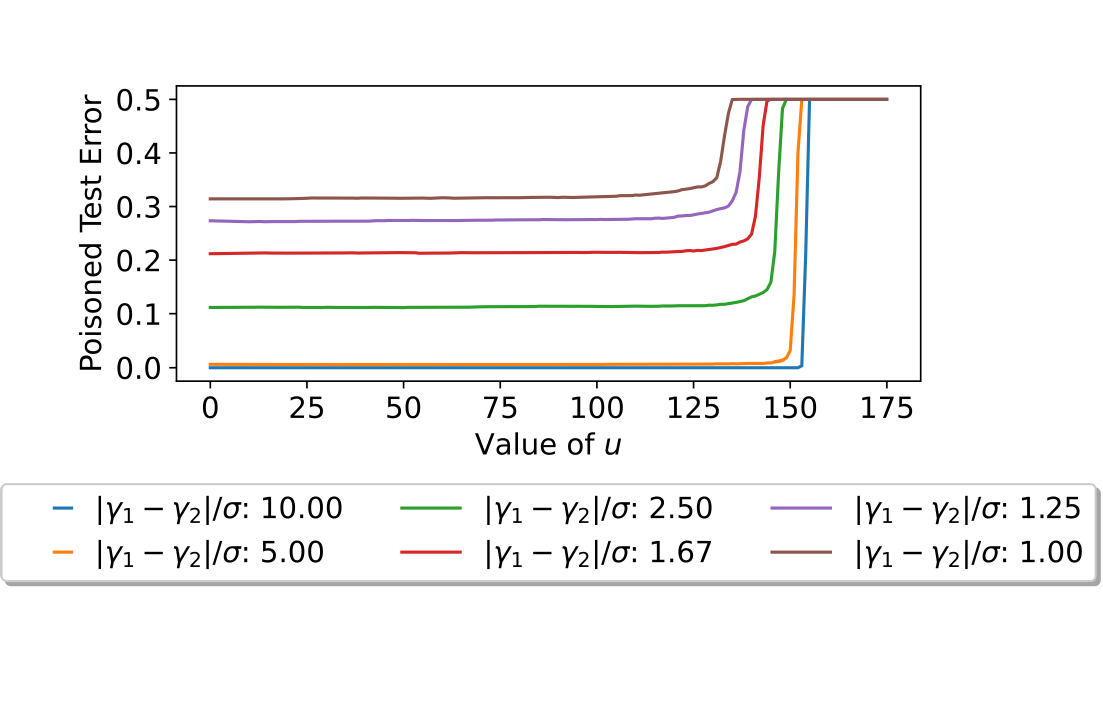}%
    \vspace{-2.0em}
    \caption[]{Impact of the (projected) constraint size $u$ on poisoning effectiveness. $u=0$ means the test error without poisoning.} 
    \label{fig:u_impact}
\end{figure*}

\shortsection{Impact of $u$}
Our theoretical results assume the setting where $w\in\{-1, 1\}$. However, this restriction makes the impact of the constraint set size $u$ less significant, as it is only helpful in judging whether flipping the sign of $w$ is feasible and becomes irrelevant to the maximum risk after poisoning when flipping is infeasible. In contrast, if $w$ is not restricted, the impact of $u$ will be more significant as larger $u$ tends to reduce the value of $w$, which in turn makes the original clean data even closer to each other and slight changes in the decision boundary can induce higher risks (further discussions on this are found in \Cref{sec:u_on_w}). 

To illustrate the impact of $u$ in a continuous way, we allow $w$ to take real numbers. Since this relaxation violates the assumption of our theory, the maximum risk after poisoning can no longer be characterized based on Theorem \ref{thm:1-D gaussian optimality}. Instead, we use the poisoning attack inspired by our theory to get an empirical lower bound on the maximum risk. Since $\gamma_1+\gamma_2<0$, Theorem \ref{thm:1-D gaussian optimality} suggests that optimal poisoning should place all poisoning points on $u$ with label $-1$ when $w\in\{1,-1\}$. We simply use this approach even when $w$ can now take arbitrary values. We vary the value of $u$ gradually and record the test error of the induced hypothesis, We repeat this procedure for different dataset configurations (i.e., fixing $\gamma_1$, $\gamma_2$ and varying $\sigma$). 

The results are summarized in Figure \ref{fig:u_impact}. There are two key observations: (1) once $w$ is no longer constrained, if $u$ is large enough, the vulnerability of all the considered distributions gradually increases as we increase the value of $u$, and (2) datasets with smaller $\beta$ are more vulnerable with the increased value of $u$ compared to ones with larger $\beta$, which has larger increased test error under the same class separability and box constraint (choosing other values of $\beta$ also reveals a similar trend). Although not backed by our theory, it makes sense as smaller $\beta$ also means more points might be closer to the boundary (small margin) and hence small changes in the decision boundary can have significantly increased test errors.

\begin{figure*}[t]
    \centering
    \includegraphics[width=0.7\textwidth]{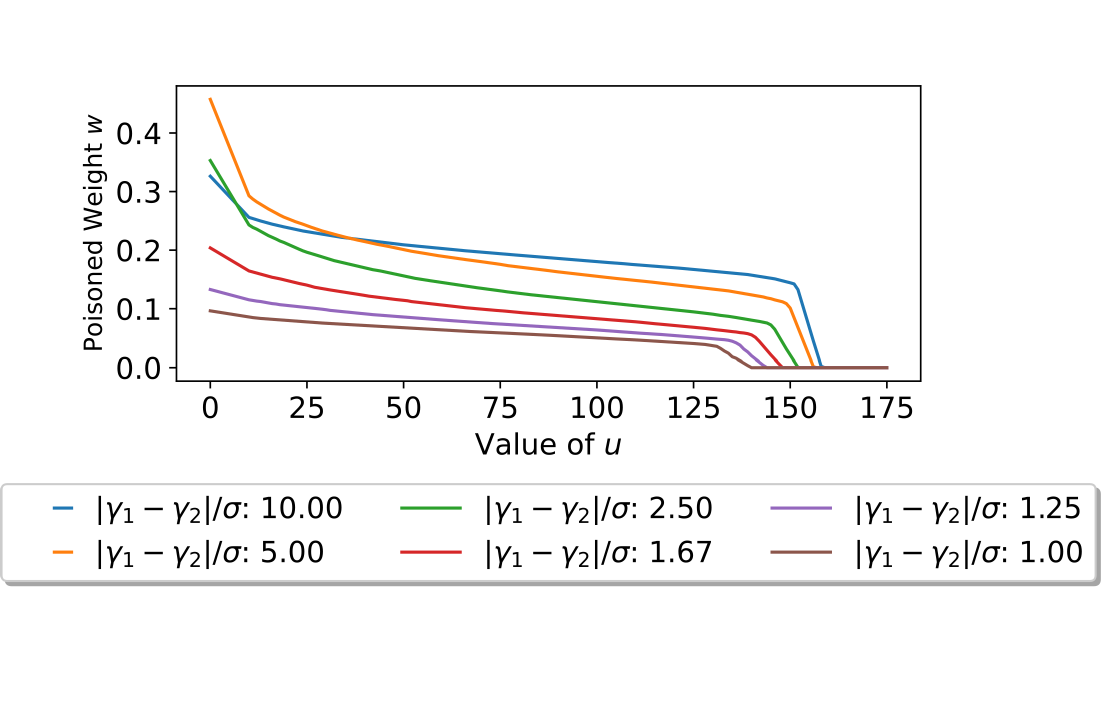}
    \vspace{-2.0em}
    \caption[Impact of the (Projected) Constraint Size $u$ on Poisoned Weight Vector $w$ in 1-D Gaussian Distribution]{Impact of the (projected) constraint size $u$ on the value of $w$ after poisoning in 1-D Gaussian distribution.}
    \label{fig:u_impact_on_w}
\end{figure*}

\subsection{Relationship Between Box Constraint Size and Model Weight}\label{sec:u_on_w}
Our theory assumes that the weight vector of $w$ can only take normalized value from $\{-1, 1\}$ for one-dimensional case, while in practical machine learning applications, convex models are trained by optimizing the hinge loss with respect to both parameters $w$ and $b$, which can result in $w$ as a real number.
And when $w$ takes real numbers, the impact of $u$ becomes smoother: when poisoning with larger $u$, the poisoning points generated can be very extreme and forces the poisoned model to have reduced $w$ (compared to clean model $w_c$) in the norm so as to minimize the large loss introduced by the extreme points. Figure \ref{fig:u_impact_on_w} plots the relationship between $u$ and $w$ of poisoned model, and supports the statement above. When the norm of $w$ becomes smaller, the original clean data that are well-separated also becomes less separable so that slight movement in the decision boundary can cause significantly increased test errors. This makes the existence of datasets that have large risk gap before and after poisoning more likely, which is demonstrated in Figure \ref{fig:u_impact}.

\end{document}